\newtheorem{theorem}{Theorem}
\newtheorem{corollary}[theorem]{Corollary}
\newtheorem{proposition}[theorem]{Proposition}
\newtheorem{lemma}[theorem]{Lemma}
\newtheorem{definition}[theorem]{Definition}
\newtheorem*{theorem*}{Theorem}
\def\hbar{\bar{h}}
\DeclareMathOperator{\E}{\mathbb{E}}
\DeclareMathOperator{\R}{\mathbb{R}}
\DeclareMathOperator{\F}{\mathcal{F}}
\DeclareMathOperator{\Hpi}{\mathcal{H}}
\DeclareMathOperator{\D}{\mathcal{D}}
\DeclareMathOperator{\X}{\mathcal{X}}
\DeclareMathOperator{\Y}{\mathcal{Y}}
\DeclareMathOperator{\Var}{Var}
\newcommand{\eps}{\varepsilon}
\newcommand{\abs}[1]{\left\lvert #1 \right\rvert}
\newcommand{\norm}[1]{\left\lVert #1 \right\rVert}
\title{A Path to Simpler Models Starts With Noise}
\author{%
Lesia Semenova \quad Harry Chen \quad Ronald Parr \quad Cynthia Rudin \\
Department of Computer Science, Duke University\\
\texttt{\{lesia.semenova,harry.chen084,ronald.parr,cynthia.rudin\}@duke.edu}\\
}
\date{ }
\begin{document}

\maketitle


\begin{abstract}
The Rashomon set is the set of models that perform approximately equally well on a given dataset, and the Rashomon ratio is the fraction of all models in a given hypothesis space that are in the Rashomon set. Rashomon ratios are often large for tabular datasets in criminal justice, healthcare, lending, education, and in other areas, which has practical implications about whether simpler models can attain the same level of accuracy as more complex models. An open question is why Rashomon ratios often tend to be large. In this work, we propose and study a mechanism of the data generation process, coupled with choices usually made by the analyst during the learning process, that determines the size of the Rashomon ratio. Specifically, we demonstrate that noisier datasets lead to larger Rashomon ratios through the way that practitioners train models. Additionally, we introduce a measure called pattern diversity, which captures the average difference in predictions between distinct classification patterns in the Rashomon set, and motivate why it tends to increase with label noise. Our results explain a key aspect of why simpler models often tend to perform as well as black box models on complex, noisier datasets.
\end{abstract}

\section{Introduction}

It is possible that for many datasets, a simple predictive model can perform as well as the best black box model -- we simply have not found the simple model yet. Interestingly, however, we may be able to infer whether such a simple model exists without finding it first, and we may be able to determine conditions under which such simple models are likely to exist.

We already know that many datasets exhibit the ``Rashomon effect'' \citep{breiman2001statistical}, which is that for many real-world tabular datasets, many models can describe the data equally well. If there are many good models, it is more likely that at least one of them is simple (e.g., sparse) \citep{semenova2022existence}.
Thus, a key question in determining the existence of simpler models is to understand why and when the Rashomon effect happens. 
This is a difficult question, and there has been little study of it. The literature on the Rashomon effect has generally been more practical, showing either that the Rashomon effect often exists in practice \citep{semenova2022existence,d2022underspecification, teney2022predicting}, showing how to compute or visualize the set of good models for a given dataset \citep{xin2022exploring,dong2020exploring,fisher2018model,ahanor2022diversitree, mata2022computing, pmlr-v151-yan22a, chen2023understanding, wang2022timbertrek}, or trying to reduce underspecification by learning a diverse  ensemble of models \cite{lee2023diversify, ross2020ensembles}. However, no prior works have focused on understanding what causes this phenomenon in the first place.

Our thesis is that \textit{noise} is both a theoretical and practical motivator for the adoption of simpler models. Specifically, in this work, we refer to noise in the generation process that determines the labels. In noisy problems, the label is more difficult to predict. 
Data about humans, such as medical data or criminal justice data, are often noisy because many things worth predicting (such as whether someone will commit a crime within 2 years of release from prison, or whether someone will experience a medical condition within the next year) have inherent randomness that is tied to random processes in the world (Will the person get a new job? How will their genetics interact with their diet?). It might sound intuitive that noisy data would lead to simpler models being useful, but this is not something most machine learning practitioners have internalized -- even on noisy datasets, they often use complicated, black box models, to which post-hoc explanations are added. Our work shows how practitioners who understand the bias-variance trade-off naturally gravitate towards more interpretable modes in the presence of noise.

We propose a \textit{path} which begins with noise, is followed by decisions made by human analysts
to compensate for that noise, and that ultimately leads to simpler models. In more detail, our path follows these steps:
1) Noise in the world leads to increased variance of the labels.
2) Higher label variance leads to worse generalization (larger differences between training and test/validation performance).
3) Poor generalization from the training set to the validation set is detected by analysts on the dataset using techniques such as cross-validation.
As a result, the analyst compensates for anticipated poor test performance in a way that follows statistical learning theory. Specifically, they choose a simpler hypothesis space, either through soft constraints (i.e., increasing regulation), hard constraints (explicit limits on model complexity, or model sparsification), or by switching to a simpler function class. Here, the analyst may lose performance on the training set but gain validation and test performance.
4) After reducing the complexity of the hypothesis space, the analyst's new hypothesis space has a larger \textit{Rashomon ratio} than their original hypothesis space. The Rashomon ratio is the fraction of models in the function class that perform close to the empirical risk minimizer. It is the fraction of functions that performs approximately-equally-well to the best one. This set of ``good'' functions is called the Rashomon set, and the Rashomon ratio measures the size of the Rashomon set relative to the function class. This argument (that lower complexity function classes leads to larger Rashomon ratios) is not necessarily intuitive, but we show it empirically for 19 datasets. Additionally, we prove this holds for decision trees of various depths under natural assumptions. The argument boils down to showing that the set of non-Rashomon set models grows exponentially faster than the set of models inside the Rashomon set.
As a result, since the analyst's hypothesis space now has a large Rashomon ratio, a relatively large fraction of models that are left in the simpler hypothesis are good, meaning they perform approximately as well as the best models in that hypothesis space. From that large set, the analyst may be able to find even a simpler model from a smaller space that also performs well, following the argument of \citet{semenova2022existence}. As a reminder, in Step 3 the analysts discovered that using a simpler model class improves test performance. This means that \textit{these simple models attain test performance that is at least that of the more complex (often black box) models from the larger function class they used initially.}

In this work, we provide the mathematics and empirical evidence needed to establish this path, focusing on Steps 1, 2, and 4 because Step 3 follows directly (however, we provide empirical evidence for Step 3 as well). Moreover, for the case of ridge regression with additive attribute noise, we prove directly that adding noise to the dataset results in an increased Rashomon ratio. Specifically, the additive noise acts as $\ell_2$-regularization, thus it reduces the complexity of the hypothesis space (Step 3) and causes the Rashomon ratio to grow (Step 4).

Even if the analyst does not reduce the hypothesis space in Step 3, noise still gives us larger Rashomon sets. We show this by introducing \textit{pattern diversity}, the average Hamming distance between all classification patterns produced by models in the Rashomon set. We show that under increased label noise, the pattern diversity tends to increase, which implies that when there is more noise, there are more differences in model predictions, and thus, there could be more models in the Rashomon set. Hence, a much shorter version of the path also works: Noise in the world causes an increase in pattern diversity, which means there are more diverse models in the Rashomon set, including simple ones.

It is becoming increasingly common to demand interpretable models for high-stakes decision domains (criminal justice, healthcare, etc.) for \textit{policy} reasons such as fairness or transparency. Our work is possibly the first to show that the noise inherent in many such domains leads to \textit{technical} justifications for demanding such models.

%


\section{Related Work}

\textbf{Rashomon set}. The Rashomon set, named after the Rashomon effect coined by Leo Breiman \cite{breiman2001statistical}, is based on the observation that often there are  many equally good explanations of the data. When these are contradictory, the Rashomon effect gives rise to predictive multiplicity \cite{marx2020predictive, black2022model, hsu2022rashomon}. Rashomon sets have been used to study  variable importance  \cite{fisher2018model, dong2020exploring, smith2020model}, for characterizing fairness \cite{shahin2022washing, coston2021characterizing, aivodji2021characterizing}, to improve robustness and generalization, especially under distributional shifts \cite{ross2020ensembles, lee2023diversify}, to study connections between multiplicity and counterfactual explanations \cite{pawelczyk2020counterfactual, pmlr-v151-yan22a, brunet2022implications}, and
to help in robust decision making \cite{tulabandhula2014robust}. Some works focused on trying to compute the Rashomon set for specific hypothesis spaces, such as sparse decision trees \cite{xin2022exploring}, generalized additive models \cite{chen2023understanding}, and decision lists \cite{mata2022computing}. Other works focus on near-optimality to find a diverse set of solutions to mixed integer problems \cite{ahanor2022diversitree}, a set of targeted predictions under a Bayesian model \cite{kowal2022fast}, or estimate the Rashomon volume via approximating model in Reproducing Kernel Hilbert Space \cite{mason2022nearly}. 
\citet{black2022model} shows that the predictive multiplicity metric defined as expected pairwise disagreement increases with expected variance over the models in the Rashomon set.
On the contrary, we focus on probabilistic variance in labels in the presence of noise.




\textbf{Metrics of the Rashomon set.}
To characterize the Rashomon set, multiple metrics have been proposed \cite{semenova2022existence, rudin2022interpretable, marx2020predictive, watson2022predictive, hsu2022rashomon, black2022model}.  The Rashomon ratio \cite{semenova2022existence}, and the pattern Rashomon ratio \cite{rudin2022interpretable} measure the Rashomon set as a fraction of models or predictions within the hypothesis space; ambiguity and discrepancy \cite{marx2020predictive, watson2022predictive} indicate the number of samples that received conflicting estimates from models in the Rashomon set; Rashomon capacity \cite{hsu2022rashomon} measures the Rashomon set for probabilistic outputs.
Here, we focus on the Rashomon ratio and pattern Rashomon ratio. We also introduce pattern diversity.
Pattern diversity is close to expected pairwise disagreement (as in \citet{black2022model}), however, it uses unique classification patterns (see Appendix \ref{appendix:diversity_other_metrics}).

\textbf{Learning with noise.} Learning with noisy labels has been extensively studied \cite{natarajan2013learning}, especially for linear regression \cite{bishop1995training} and, more recently, for neural networks \cite{song2022learning} to understand and model effects of noise. Stochastic gradient descent with label noise acts as an implicit regularizer \cite{NEURIPS2021_e6af401c} and noise has been added to hidden units \cite{NIPS2017_217e342f}, labels \cite{shallue2018measuring}, or covariances \cite{wen2019empirical} to prevent overfitting in deep learning. When the labels are noisy, constructing robust loss \cite{ghosh2017robust}, adding a slack variable for each training sample \cite{Hu2020Simple}, or early stopping \cite{li2020gradient} also helps to improve generalization. In this work, 
we study why simpler models are often suitable for noisier datasets from the perspective of the Rashomon effect.


\section {Notation and Definitions}

Consider a training set of $n$ data points $S=\{z_1, z_2, ..., z_n\}$, such that each $z_i = (x_i, y_i)$ is drawn i.i.d$.$ from an unknown distribution $\mathcal{D}$, where $\X \subset \R^m$, and we have binary labels $\Y \in \{-1,1\}$. Denote $\F$ as a hypothesis space, where $f\in\F$ obeys $f: \X \rightarrow \Y$. Let $\phi: \Y \times \Y \rightarrow \R^+$ be a 0-1 loss function, where for point $z = (x,y)$ and  hypothesis $f$, the loss function is $\phi(f(x), y) = \mathbbm{1}_{[f(x)\neq y]}$.  Finally, let $\hat{L}(f)$ be an empirical risk $\hat{L}(f) = \frac{1}{n} \sum_{i=1}^n \phi (f(x_i), y_i)$, and let $\hat{f}$ be an empirical risk minimizer: $\hat{f} \in \arg\min_{f \in \F} \hat{L}(f).$ If we want to specify the dataset on which $\hat{f}$ was computed, we will indicate it by an index, $\hat{f}_S$. 

 The \textit{Rashomon set} contains all models that achieve near-optimal performance and can be defined as:
\begin{definition}[Rashomon set]\label{def:rset} 
For dataset $S$, a hypothesis space $\F$, and a loss function $\phi$, given $\theta \geq 0$, the \textit{Rashomon set} $\hat{R}_{set}(\F, \theta)$ is:
\begin{equation*}\label{eq:rset_definition}
\hat{R}_{set}(\F, \theta) := \{f \in \F: \hat{L}(f) \leq \hat{L}(\hat{f}) + \theta\},
\end{equation*}
where $\hat{f}$ is an empirical risk minimizer for the training data $S$ with respect to loss function $\phi$: $\hat{f} \in \arg\min_{f \in \F} \hat{L}(f)$, and $\theta$ is the Rashomon parameter. 
\end{definition}

Rashomon parameter $\theta$ determines the risk threshold ($\hat{L}(\hat{f}) + \theta$), such that all models with risk lower than this threshold are inside the set. 
For instance, if we stay within 1\% of the accuracy of the best model, then $\theta = 0.01$. 
Given parameter $\gamma > 0$, we extend the definition to the true risk by defining the \textit{true Rashomon set}, containing all models with a bound on true risk $R_{set}(\F, \gamma) = \{f \in \F: L(f) \leq L(f^*) + \gamma\}$, where $f^*$ is optimal model, $f^* = \arg\min_{f\in \F}L(f)$.

In this work, we study \textit{how noise influences the Rashomon set} and choices that practitioners make in the presence of noise. We measure the Rashomon set in different ways, including the Rashomon ratio, pattern Rashomon ratio, and pattern diversity (defined in Section \ref{sec:diversity}).
For a discrete hypothesis space, the \textit{Rashomon ratio} \citep{semenova2022existence} is the ratio of the number of models in the Rashomon set compared to the hypothesis space,  $\hat{R}_{ratio}(\F,\theta) = \frac{|\hat{R}_{set}(\F,\theta)|}{|\F|}$, where $|\cdot|$ denotes cardinality. It is possible to weight the hypothesis space by a prior to define a weighted Rashomon ratio if desired.

Given a hypothesis $f$ and a dataset $S$, a predictive pattern (or pattern) $p$ is the collection of outcomes from applying $f$ to each sample from $S$: $p^f = [f(x_1),...,f(x_i),..,f(x_n)]$. We say that pattern $p$ is achievable on the Rashomon set if there exists $f \in \hat{R}_{set}(\F, \theta)$ such that $p^f = p$. Let \textit{pattern Rashomon set} $\pi (\F, \theta) = \{p^f: f \in \hat{R}_{set}(\F, \theta), p^f = [f(x_i)]_{i=1}^n\}$ be all unique patterns achievable by functions from $\hat{R}_{set}(\F,\theta)$ on dataset $S$. 
Finally, let $\Psi(\F)$ be the \textit{pattern hypothesis set}, meaning that it contains all patterns achievable by models in the hypothesis space, $\Psi (\F) = \{p^f: f \in \F, p^f = [f(x_i)]_{i=1}^n\}$. The pattern Rashomon ratio is the ratio of patterns in the pattern Rashomon set to the pattern hypothesis set: $\hat{R}^{pat}_{ratio} (\F,\theta) = \frac{|\pi(\F,\theta)|}{|\Psi(\F)|}$.

In the following sections, we walk along the steps of our proposed path. Rather than trying to prove these points for every possible situation (which would be volumes beyond what we can handle here), we aim to find at least some way to illustrate that each step is reasonable in a natural setting.

\section{Increase in Variance due to Noise Leads to Larger Rashomon Ratios}\label{sec:path}


\subsection{Step 1. Noise Increases Variance}
One would think that something as simple as  uniform label noise would not really affect anything in the learning process. In fact, we would expect that adding such noise would just uniformly increase the losses of all functions, and the Rashomon set would stay the same. However, this conclusion is (surprisingly) not true. 
Instead, noise adds variance to the loss, which, in turn, prevents us from generalizing. 

For infinite data distribution $\D$ consider uniform label noise, where each label is flipped independently with probability 
$\rho < \frac{1}{2}$. If $\tilde{y}$ is a flipped label, $P(\tilde{y} \neq y) = \rho$. 
If the empirical risk of $f$ is over $\frac{1}{2}$ after adding noise, we transform $f$ to $-f$.
For a given model $f\in\F$ let $\sigma^2(f,\D)$ be the variance of the loss, meaning that $\sigma^2 (f,\D) = \Var_{z \sim \D}l(f,z)$. 
We show in the following theorem that, for a given $f\in \F$, label noise increases the variance of the loss.

\newcommand{\TheoremVarianceIncreaseNoise}
{
Consider infinite true data distribution $\D$,
and uniform label noise, where each label is flipped independently with probability $\rho$.
Let $\D_{\rho}$ denote the noisy version of $\D$.
Consider 0-1 loss $l$, and 
assume that there exists at least one function $\bar{f}\in\F$ such that $L_{\mathcal{D}}(\bar{f}) < \frac{1}{2} - \gamma$.
For a fixed $f \in \F$, let $\sigma^2 (f, \D_{\rho})$ be the variance of the loss, $\sigma^2(f,\D_{\rho}) = Var_{z\sim \D_{\rho}} l(f,z)$ on data distribution $\D_{\rho}$. For any $0 <\rho_1 < \rho_2<\frac{1}{2}$,
\begin{equation*}
    \sigma^2 (f, \D_{\rho_1}) < \sigma^2 (f, \D_{\rho_2}).
\end{equation*}
}

\begin{theorem}[Variance increases with label noise]\label{th:variance_noise}
\TheoremVarianceIncreaseNoise
\end{theorem}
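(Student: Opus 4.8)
The plan is to exploit the fact that the 0-1 loss $l(f,z)=\mathbbm{1}_{[f(x)\neq y]}$ takes values in $\{0,1\}$, so its variance is completely determined by its mean. Writing $p(\rho):=\E_{z\sim\mathcal{D}_{\rho}}\, l(f,z) = L_{\mathcal{D}_{\rho}}(f)$ for the noisy risk of the fixed model $f$, and using $l^2=l$, I immediately obtain
\begin{equation*}
\sigma^2(f,\mathcal{D}_{\rho}) = \E[l^2]-(\E[l])^2 = p(\rho)\bigl(1-p(\rho)\bigr).
\end{equation*}
This reduces the whole statement to tracking how the single scalar $p(\rho)$ moves as $\rho$ grows, together with how the parabola $g(t)=t(1-t)$ responds.

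Next I would express $p(\rho)$ in terms of the clean risk $p_0:=L_{\mathcal{D}}(f)$. Conditioning on whether $f$ is correct on the clean label and noting that an independent flip (probability $\rho$) turns a correct prediction into an error and an error into a correct prediction, I get
\begin{equation*}
p(\rho)=(1-p_0)\rho + p_0(1-\rho)=p_0+\rho(1-2p_0).
\end{equation*}
Substituting this into $p(\rho)\bigl(1-p(\rho)\bigr)$ and simplifying (setting $a=1-2p_0$ makes the cross terms collapse) yields the clean identity
\begin{equation*}
\sigma^2(f,\mathcal{D}_{\rho}) = p_0(1-p_0) + (1-2p_0)^2\,\rho(1-\rho).
\end{equation*}
This is the heart of the argument: the noise-dependent part factors as a fixed nonnegative coefficient times $\rho(1-\rho)$, so the clean-risk contribution is completely separated from the noise contribution.

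From here the conclusion is transparent. The map $\rho\mapsto \rho(1-\rho)$ is strictly increasing on $\bigl(0,\tfrac12\bigr)$, and the coefficient $(1-2p_0)^2$ is strictly positive whenever $p_0\neq\tfrac12$. Hence for $0<\rho_1<\rho_2<\tfrac12$ we get $\sigma^2(f,\mathcal{D}_{\rho_1})<\sigma^2(f,\mathcal{D}_{\rho_2})$; equivalently, differentiating gives $\frac{d}{d\rho}\sigma^2 = (1-2p_0)^2(1-2\rho)>0$ on the relevant interval. No probabilistic machinery beyond the Bernoulli-variance identity is needed.

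I expect the only genuine obstacle to be the \emph{strictness}, which fails exactly when $p_0=L_{\mathcal{D}}(f)=\tfrac12$: there the loss is a fair coin at every noise level and the variance is pinned at $\tfrac14$. This is precisely where the setup's conventions enter. The rule that replaces $f$ by $-f$ whenever its (noisy) risk exceeds $\tfrac12$ forces every model to have risk $\le\tfrac12$, and the standing assumption that some $\bar f$ attains $L_{\mathcal{D}}(\bar f)<\tfrac12-\gamma$ signals that the models of interest have risk bounded away from $\tfrac12$, excluding the degenerate case $p_0=\tfrac12$. I would therefore state this nondegeneracy requirement explicitly, so that the strict inequality is justified rather than silently assumed.
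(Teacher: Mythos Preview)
Your proposal is correct and follows essentially the same route as the paper: both compute $L_{\mathcal{D}_\rho}(f)=(1-2\rho)p_0+\rho$, use the Bernoulli identity $\sigma^2=p(1-p)$, and reduce to monotonicity of a quadratic in $\rho$ on $(0,\tfrac12)$ (your factorization $p_0(1-p_0)+(1-2p_0)^2\rho(1-\rho)$ is algebraically identical to the paper's $(1-2\rho)^2 p_0(1-p_0)+\rho(1-\rho)$). Your explicit flagging of the degenerate case $p_0=\tfrac12$ is, if anything, more careful than the paper, which tacitly invokes a Rashomon-set bound $L_{\mathcal{D}}(f)\le L_{\mathcal{D}}(f^*)+\gamma<\tfrac12$ that does not literally follow for an arbitrary fixed $f\in\mathcal{F}$ from the stated hypotheses alone.
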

The proof of Theorem \ref{th:variance_noise} is in Appendix \ref{appendix:proof_variance_noise}. 
This covers the uniform noise case, but variance increases more generally, and we prove this for several other common cases in Appendix \ref{appendix:proof_variance_noise}. 
More specifically, we show that the variance increases with other types of label noise, such as non-uniform label noise (see Theorem \ref{th:variance_non_uniform_noise} in Appendix \ref{appendix:proof_variance_noise}) and margin noise (see Theorem \ref{th:variance_margin_noise} in Appendix \ref{appendix:proof_variance_noise}). For non-uniform label noise, for a sample $z=(x, y)$, each label $y$ is flipped independently with probability $\rho_x$, meaning that noise can depend on $x$. This noise model is more realistic than uniform label noise and allows modeling of cases when one sub-population has much more noise than another. We model margin noise such as that which arises from high-dimensional Gaussians. Because of the central limit theorem, data often follow Gaussian distributions, therefore this noise is realistic and models mistakes near decision boundary.
Label noise in datasets is common. In fact, real-world datasets reportedly have between 8.0\% and 38.5\% label noise \cite{song2022learning, song2019selfie, lee2018cleannet, li2017webvision, xiao2015learning}. We hypothesize that a significant amount of label noise in real-world datasets is a combination of Gaussian (due to the central limit theorem) and random noise (for example, because of clerical errors causing label noise).

For the true Rashomon set $R_{set}(\F,\gamma)$, we consider the maximum variance for all models in the true set: $\sigma^2 = \sup_{f\in R_{set}(\F,\gamma)} \Var_{z \sim \D}l(f,z)$. Then, from Theorem \ref{th:variance_noise} we have that maximum expected variance over the Rashomon set increases with noise.

\newcommand{\CorollaryMaximumVariance}
{
Under the same assumptions as in Theorem \ref{th:variance_noise}, we have that
 \[
\sup_{f\in R_{{set}_{\D_{\rho_1}}}(\F,\gamma)} \sigma^2 (f, \D_{\rho_1}) < \sup_{f\in R_{{set}_{\D_{\rho_2}}}(\F,\gamma)}  \sigma^2 (f, \D_{\rho_2}).
\]
}

\begin{corollary}[Maximum variance increases with label noise]\label{corollary:maximum_variance}
\CorollaryMaximumVariance
\end{corollary}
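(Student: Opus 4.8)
The plan is to separate the two ways in which the right-hand side differs from the left: the variance functional changes ($\sigma^2(\cdot,\D_{\rho_1})$ versus $\sigma^2(\cdot,\D_{\rho_2})$) and the set over which we maximize changes ($R_{{set}_{\D_{\rho_1}}}(\F,\gamma)$ versus $R_{{set}_{\D_{\rho_2}}}(\F,\gamma)$). Theorem~\ref{th:variance_noise} already controls the first change pointwise, so the crux is to understand how the \emph{true Rashomon set itself} moves with $\rho$ and then to glue the pointwise statement to a statement about suprema. I would first establish that the sets are nested, $R_{{set}_{\D_{\rho_1}}}(\F,\gamma)\subseteq R_{{set}_{\D_{\rho_2}}}(\F,\gamma)$, and then run a short chain of inequalities. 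Note that both effects (a larger set, and a larger variance at each model) push the supremum in the same direction, so the inequality is ``morally clear'' and the only real issue is strictness.

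To get the nesting, I would write the noisy true risk in terms of the clean one. Under uniform flip probability $\rho$ the $0$--$1$ loss of a fixed $f$ is Bernoulli, and a one-line computation gives $L_{\D_\rho}(f) = (1-2\rho)L_{\D}(f)+\rho$. Since $1-2\rho>0$, this is an increasing affine reparametrization of $L_{\D}(f)$, so the minimizer is unchanged, $f^*_{\D_\rho}=f^*$, and the membership condition $L_{\D_\rho}(f)\le L_{\D_\rho}(f^*)+\gamma$ simplifies to $L_{\D}(f)\le L_{\D}(f^*)+\tfrac{\gamma}{1-2\rho}$. The effective threshold $\tfrac{\gamma}{1-2\rho}$ is strictly increasing on $(0,\tfrac12)$, which immediately yields $R_{{set}_{\D_{\rho_1}}}(\F,\gamma)\subseteq R_{{set}_{\D_{\rho_2}}}(\F,\gamma)$: noise enlarges the true Rashomon set.

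With nesting in hand I would finish as follows. Let $f_1$ attain the supremum on the left (assume $\F$ finite, or pass to a maximizing sequence). By the nesting $f_1\in R_{{set}_{\D_{\rho_2}}}(\F,\gamma)$, and by Theorem~\ref{th:variance_noise} applied to the fixed model $f_1$ we have $\sigma^2(f_1,\D_{\rho_1})<\sigma^2(f_1,\D_{\rho_2})$. Chaining,
\[
\sup_{f\in R_{{set}_{\D_{\rho_1}}}(\F,\gamma)}\sigma^2(f,\D_{\rho_1})
=\sigma^2(f_1,\D_{\rho_1})
<\sigma^2(f_1,\D_{\rho_2})
\le \sup_{f\in R_{{set}_{\D_{\rho_2}}}(\F,\gamma)}\sigma^2(f,\D_{\rho_2}),
\]
which is exactly the claim.

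The main obstacle is the strictness of the final inequality, and it is genuinely delicate. Writing $\sigma^2(f,\D_\rho)=p(1-p)$ with $p=(1-2\rho)L_{\D}(f)+\rho$, the per-model gap $\sigma^2(f,\D_{\rho_2})-\sigma^2(f,\D_{\rho_1})$ shrinks to $0$ as $L_{\D}(f)\to\tfrac12$, since then $p\to\tfrac12$ and the variance saturates at $\tfrac14$ for \emph{every} $\rho$. Hence if both Rashomon sets reach models of true risk arbitrarily close to $\tfrac12$, the two suprema both equal $\tfrac14$ and the inequality degenerates to an equality. The clean way around this is to ensure the maximizing model sits strictly below risk $\tfrac12$: the supremum should be attained at some $f_1$ with $L_{\D}(f_1)<\tfrac12$, which is guaranteed when the enlarged threshold still satisfies $L_{\D}(f^*)+\tfrac{\gamma}{1-2\rho_2}<\tfrac12$. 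The standing assumption $L_{\D}(\bar f)<\tfrac12-\gamma$ gives this sub-$\tfrac12$ margin at the base level but not automatically at $\rho_2$, so I would either carry an analogous margin condition through to $\rho_2$ or invoke attainment at a model of risk strictly below $\tfrac12$. Establishing attainment together with this margin is the step I expect to require the most care.
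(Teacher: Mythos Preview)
Your approach is essentially the paper's: define the maximizer $f_1$ on the $\rho_1$ side, apply Theorem~\ref{th:variance_noise} to get $\sigma^2(f_1,\D_{\rho_1})<\sigma^2(f_1,\D_{\rho_2})$, then bound by the supremum over the $\rho_2$ Rashomon set. You are actually more careful than the paper in two respects: you explicitly derive the nesting $R_{{set}_{\D_{\rho_1}}}(\F,\gamma)\subseteq R_{{set}_{\D_{\rho_2}}}(\F,\gamma)$ via the affine formula $L_{\D_\rho}(f)=(1-2\rho)L_{\D}(f)+\rho$ (the paper uses this step without justification in the corollary proof), and you correctly flag the strictness issue when the maximizing model has clean risk approaching $\tfrac12$, a subtlety the paper does not address.
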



The next step is to show that this increased maximum variance leads to worse generalization.

\subsection{Step 2. Higher Variance Leads to Worse Generalization}

Here we use an argument based on generalization bounds. Generalization bounds have been the key theoretical motivation for much of machine learning, including support vector machines (SVMs), because the margin that SVMs optimize appears in a bound. While bounds themselves are not directly used in practice, the terms in the bounds tend to be important quantities in practice. Our bound cannot be calculated in practice because it uses population information on the right side, but it still provides insight and motivation.

Unlike standard bounds, we will use the fact that the user is using empirical risk minimization, and cross-validation to assess overfitting. Thus, for $\hat{f}$, there are two possibilities: $\hat{f}$ is in the true Rashomon set, or it is not. If it is not, then for the empirical risk minimizer $\hat{f}$, the difference between the true and the empirical risk must be at least $\gamma$, which will be detected with high probability in cross-validation \cite{kearns1995bound, mukherjee2006learning}. 
In that case, the user will reduce their hypothesis space and we move to Step 3. 
If $\hat{f}$ is in the true Rashomon set, it obeys the following bound.


\newcommand{\TheoremVarianceGeneralization}
{Consider dataset $S$, 0-1 loss $l$, and finite hypothesis space $\F$.
With probability at least $1-\delta$, we have that for every $f\in R_{set}(\F,\gamma)$:
    \begin{equation}\label{eq:generalization_bound}
        L(f) - \hat{L}(f) \leq  \frac{2}{3n}\log{\left(\frac{\abs{R_{set}(\F,\gamma)}}{\delta}\right)} +\sqrt{\frac{2\sigma^2}{n}\log\left(\frac{\abs{R_{set}(\F,\gamma)}}{\delta}\right)},
    \end{equation}
 where $\sigma^2=\sup_{f \in R_{set}(\F,\gamma)}\Var_{z \sim \mathcal{D}}l(f,z)$, and $n$ is number of samples in $S = \{z_i\}_{i=1}^n \sim \D$.
}
\begin{theorem}[Variance-based ``generalization bound'']\label{th:variance_generalization}
\TheoremVarianceGeneralization
\end{theorem}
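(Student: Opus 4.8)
The plan is to derive the bound from Bernstein's inequality applied to the loss random variables, followed by a union bound over the finite, \emph{non-random} true Rashomon set. The shape of the right-hand side---a term linear in $1/n$ with the tell-tale coefficient $2/3$, plus a square-root term carrying the variance $\sigma^2$---is precisely the signature of Bernstein, so this is the natural instrument. It is essential that I use Bernstein rather than Hoeffding: Hoeffding would replace the $\sqrt{2\sigma^2/n\,\cdot\,}$ term with a variance-free $\sqrt{1/(2n)\,\cdot\,}$ term, which would sever the link to Step~1, where the entire point is that label noise inflates $\sigma^2$ and thereby loosens generalization.

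First I would fix an arbitrary $f \in R_{set}(\F,\gamma)$ and consider the i.i.d.\ variables $X_i := l(f,z_i)$ with $z_i \sim \D$. Since $l$ is the $0$-$1$ loss, each $X_i \in \{0,1\}$, so $\abs{X_i - \E X_i} \le 1$ almost surely; moreover $\E X_i = L(f)$ and $\Var(X_i) = \Var_{z\sim\D} l(f,z) =: \sigma_f^2 \le \sigma^2$ by the definition of $\sigma^2$ as the supremum of the variance over the Rashomon set. Applying the one-sided Bernstein inequality to $-X_i$ (to control the \emph{lower} deviation of the empirical mean, since $L(f)-\hat L(f) = \E X_i - \tfrac1n\sum_i X_i$) gives, for each $t>0$,
\[
\Pr\!\Big(L(f)-\hat L(f) \ge t\Big) \;\le\; \exp\!\Big(-\frac{n t^2}{2\sigma_f^2 + \tfrac{2}{3}t}\Big).
\]
Setting the right-hand side equal to $\delta'$ and inverting (the standard solved form of Bernstein) yields, with probability at least $1-\delta'$,
\[
L(f)-\hat L(f) \;\le\; \frac{2}{3n}\log\frac{1}{\delta'} + \sqrt{\frac{2\sigma_f^2}{n}\log\frac{1}{\delta'}}.
\]
Because this expression is monotonically increasing in the variance, I may freely replace $\sigma_f^2$ by its upper bound $\sigma^2$ without invalidating the inequality.

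To make the statement uniform over the Rashomon set, I would then take a union bound: choosing the per-function failure probability $\delta' = \delta/\abs{R_{set}(\F,\gamma)}$ and summing over the $\abs{R_{set}(\F,\gamma)}$ functions shows that, with probability at least $1-\delta$, the displayed bound holds \emph{simultaneously} for every $f \in R_{set}(\F,\gamma)$, with $\log(1/\delta') = \log\!\big(\abs{R_{set}(\F,\gamma)}/\delta\big)$. This reproduces exactly \eqref{eq:generalization_bound}, since $\F$ (hence $R_{set}(\F,\gamma)$) is finite.

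The one genuine subtlety---and the step I would watch most carefully---is that the union bound is legitimate \emph{only because $R_{set}(\F,\gamma)$ is the true Rashomon set}, defined through the population risk $L$ and therefore independent of the sample $S$. Were the set data-dependent (for instance the empirical set $\hat R_{set}$), the functions entering the union bound would themselves be random, the cardinality argument would fail, and a uniform-convergence or covering argument would be required instead. Given the fixed-set structure here, the finiteness of $R_{set}(\F,\gamma)$ makes both the union bound and the resulting $\log\abs{R_{set}(\F,\gamma)}$ factor immediate.
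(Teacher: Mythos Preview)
Your proposal is correct and matches the paper's proof essentially step for step: Bernstein's inequality for each fixed $f$, replacement of $\sigma_f^2$ by the supremum $\sigma^2$, and a union bound over the finite, non-random true Rashomon set, with the quadratic inversion yielding the $\tfrac{2}{3n}\log(\cdot) + \sqrt{\tfrac{2\sigma^2}{n}\log(\cdot)}$ form via $\sqrt{a+b}\le\sqrt{a}+\sqrt{b}$. The only cosmetic difference is that the paper takes the union bound before inverting the tail bound whereas you invert first and then union-bound, but the outcome is identical; your explicit remark that the union bound is licensed precisely because $R_{set}(\F,\gamma)$ is defined through the population risk and hence independent of $S$ is a point the paper leaves implicit.
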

The proof of Theorem \ref{th:variance_generalization} is in Appendix \ref{appendix:proof_variance_generalization}.
Note that generalization bounds are usually based on Hoeffding’s inequality (see Lemma \ref{lemma:hoeffding_inequality}), which is a special case of Bernstein's inequality (see Lemma \ref{lemma:bernstein_inequality}). In fact, we show in Appendix \ref{appendix:bernstein_hoeffding} that Bernstein's inequality, which we used to prove Theorem \ref{th:variance_generalization}, can be sharper than Hoeffding's when the variance is less than $\sigma^2_f < \frac{1}{12}$ for a given $f$. Theorem \ref{th:variance_generalization} is easily generalized to continuous hypothesis spaces through a covering argument over the true Rashomon set (as an example, see Theorem \ref{th:generalization_cover}), 
 where the complexity is measured as the size of the cover over the true Rashomon set instead of the number of models in the true Rashomon set.

Let $c(\F,n) = \frac{2}{3n}\log{\left(\frac{\abs{R_{set}(\F,\gamma)}}{\delta}\right)}$, which is the first term in the bound \eqref{eq:generalization_bound} in Theorem \ref{th:variance_generalization}. According to Theorem 8 in \citet{semenova2022existence} under random label noise, the true Rashomon set does not decrease in size.
Therefore, $c(\F,n)$ at least does not decrease with more noise
as it depends only on complexity. However, the second term $\sqrt{3\sigma^2 c(\F,n)}$ depends on the maximum loss variance, which increases with label noise, as motivated in the previous section. This means with more noise in the labels, we would expect worse generalization, which would generally lead practitioners who are using a validation set to reduce the complexity of the hypothesis space.

As discussed earlier, we use cross-validation to assess whether $\hat{f}$ overfits.
From Proposition \ref{th:erm_close_true_rse} (proved in Appendix \ref{appendix:proof_erm_close_true_set}) we infer that if the empirical risk minimizer (ERM) does not highly overfit, it has a high chance to be in the true Rashomon set and thus Theorem \ref{th:variance_generalization} applies.

\newcommand{\PropositionERMTrueRset}
{
Assume that through the cross-validation process, we can assess $\xi$ such that $L(\hat{f}) - \hat{L}(\hat{f})\leq \xi$ with high probability (at least $1-\epsilon_{\xi}$) with respect to the random draw of data. Then, for any $\epsilon > 0$, with probability at least $1 - e^{-2n \epsilon^2} - \epsilon_{\xi}$ with respect to the random draw of training data, when $\xi + \epsilon \leq\gamma$, then $\hat{f}\in R_{set}(\F, \gamma)$.
}

\begin{proposition}[ERM can be close to the true Rashomon set]\label{th:erm_close_true_rse}
\PropositionERMTrueRset
\end{proposition}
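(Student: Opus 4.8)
The plan is to establish directly that $L(\hat{f}) \leq L(f^*) + \gamma$, which is precisely the membership condition $\hat{f} \in R_{set}(\F,\gamma)$. The strategy is to construct a short chain of inequalities that descends from $L(\hat{f})$ to $L(f^*)$, accumulating a total cost of $\xi + \epsilon$, and then to invoke the hypothesis $\xi + \epsilon \leq \gamma$. Three ingredients feed this chain: the cross-validation assumption (which controls $L(\hat{f}) - \hat{L}(\hat{f})$), the defining optimality of the empirical risk minimizer, and a concentration bound for the fixed optimal model $f^*$.

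Concretely, I would first use the cross-validation assumption to write $L(\hat{f}) \leq \hat{L}(\hat{f}) + \xi$, which holds with probability at least $1 - \epsilon_\xi$. Next, since $\hat{f} \in \arg\min_{f\in\F}\hat{L}(f)$ and $f^* \in \F$, the empirical risk minimizer satisfies $\hat{L}(\hat{f}) \leq \hat{L}(f^*)$ deterministically. The remaining task is then to replace the empirical risk $\hat{L}(f^*)$ by the true risk $L(f^*)$.

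The key observation, and the only place where care is needed, is that $f^*$ is a \emph{fixed, data-independent} function (the true risk minimizer), so I may apply Hoeffding's inequality (Lemma \ref{lemma:hoeffding_inequality}) to it directly, with no union bound over $\F$ and hence no dependence on the complexity of the hypothesis space. Because the $0$-$1$ loss takes values in $[0,1]$, the one-sided Hoeffding bound gives $\hat{L}(f^*) \leq L(f^*) + \epsilon$ with probability at least $1 - e^{-2n\epsilon^2}$ for any $\epsilon > 0$. Anyone tempted to route this step through a uniform-convergence argument over the Rashomon set would incur an extra $\log\abs{R_{set}(\F,\gamma)}$ factor and lose the clean exponential; recognizing that $f^*$ can be treated as a single fixed hypothesis is exactly what keeps the failure probability at $e^{-2n\epsilon^2}$.

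Finally I would combine the pieces. By a union bound over the two failure events, the cross-validation event and the Hoeffding event hold simultaneously with probability at least $1 - \epsilon_\xi - e^{-2n\epsilon^2}$. On that event the chain reads $L(\hat{f}) \leq \hat{L}(\hat{f}) + \xi \leq \hat{L}(f^*) + \xi \leq L(f^*) + \epsilon + \xi$, and substituting $\xi + \epsilon \leq \gamma$ yields $L(\hat{f}) \leq L(f^*) + \gamma$, i.e. $\hat{f} \in R_{set}(\F,\gamma)$, as claimed. There is no genuine obstacle here beyond bookkeeping; the proof is short, and the one conceptual point to get right is the direction of each inequality (the ERM optimality of $\hat f$ points one way, the true-risk optimality of $f^*$ the other) so that they compose into a single telescoping bound.
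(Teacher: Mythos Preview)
Your proposal is correct and follows essentially the same approach as the paper: apply the cross-validation assumption to bound $L(\hat f)-\hat L(\hat f)$, use ERM optimality to pass to $\hat L(f^*)$, apply one-sided Hoeffding to the fixed model $f^*$ to reach $L(f^*)$, and union-bound the two failure events. The paper's proof is line-for-line the same chain $L(\hat f)\leq \hat L(\hat f)+\xi\leq \hat L(f^*)+\xi\leq L(f^*)+\epsilon+\xi\leq L(f^*)+\gamma$.
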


\subsection{Step 3. Practitioner Chooses a Simpler Hypothesis Space}

We have shown earlier that noisier datasets lead to higher variance and worse generalization. The question we consider here is whether one can see the results of these bounds in practice and would actually reduce the hypothesis space.
For example, consider four real-world datasets and the hypothesis space of decision trees of various depths. In Figure \ref{fig:step23} (a) we show that as label noise increases, so does the gap between risks ($1 -$ accuracy) evaluated on the training set and on a hold out dataset for a fixed depth tree, and thus, as a result, during the validation process, the smaller depth would be chosen by a reasonable analyst. We simulate this by using cross validation to select the optimal tree depth in CART, as shown in Figure \ref{fig:step23} (b). As predicted, the optimal tree depth decreases as noise increases.  We also show that a similar trend happens for the gradient boosted trees in Figure \ref{fig:step23_appendix} in Appendix \ref{appendix:step3_experimental_setup}. More specifically, with more noise, the best number of estimators, as chosen based on cross-validation, decreases. We describe the setup in detail in Appendix \ref{appendix:step3_experimental_setup}.

\begin{figure}[ht]
\centering
\subfigure[]{\includegraphics[height=3cm]{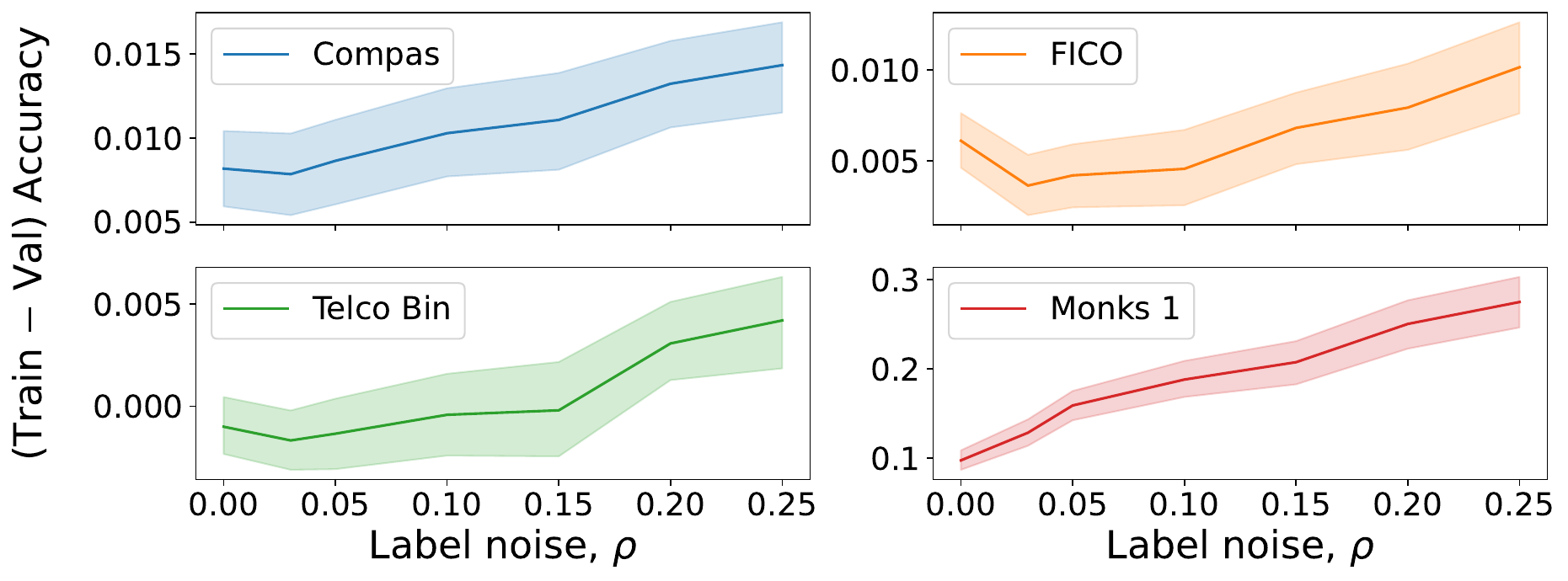}}
\subfigure[]{\includegraphics[height=3cm]{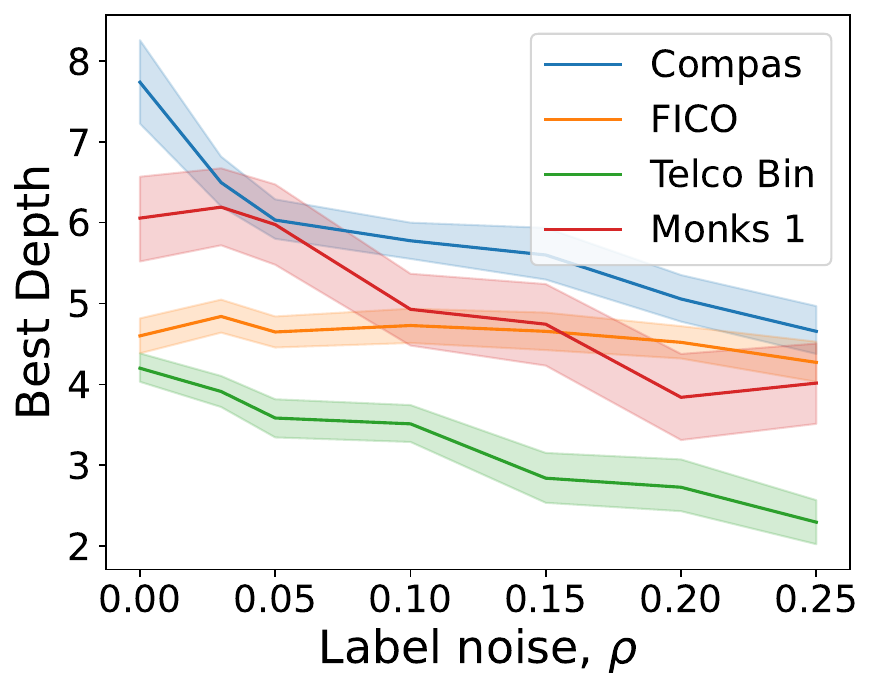}}
\caption{
Practitioner's validation process in the presence of noise for CART. For a fixed tree depth, as we add noise, the gap between training and validation accuracy increases (Subfigure a). As we use cross validation to select tree depth, the best tree depth decreases  with noise (Subfigure b).}\label{fig:step23}
\end{figure}

\subsection{Step 4. Rashomon Ratio is Larger for Simpler Spaces}\label{section:step4}
For simpler hypothesis spaces, it may not be immediately obvious that the Rashomon ratio is larger, i.e., a larger fraction of a simpler model class consists of ``good'' models. Intuitively, this is because the denominator of the ratio (the total number of models) increases faster than the numerator (the number of good models) as the complexity of the model class increases. As we will see, this is because the good models in the simpler model class tend to give rise to more bad models in the more complex class, and the bad models do not tend to give rise to good models as much. We explore two popular model classes: decision trees and linear models. The results thus extend to forests (collections of trees) and generalized additive models (which are linear models in enhanced feature space).


Consider data that live on a hypercube (e.g., the data have been binarized, which is a common pre-processing step \cite{lin2020generalized, angelino2017learning, xin2022exploring, verwer2017learning}) and a hypothesis space of fully grown trees (complete trees with the last level also filled) of a given depth $d$. Denote this hypothesis space as $\F_d$. For example, a depth 1 tree has 1 node and 2 leaves. Under natural assumptions on the quality of features of classifiers and the purity of the leaves of trees in the Rashomon set, we show that the Rashomon ratio is larger for hypothesis spaces of smaller-depth trees.

\newcommand{\PropositionRratioTrees}{
For a dataset $S = X\times Y$ with binary feature matrix $X\in \{0,1\}^{n\times m}$, consider a hypothesis space $\F_d$ of fully grown trees of depth $d$. Let 
the number of dimensions $m < 2^{2^d}$. Assume: (Leaves are correct) all leaves in all trees in the Rashomon set have at least $\lceil \theta n \rceil$ more correctly classified points than incorrectly classified points; (Bad features) there is a set of $m_{\textrm{bad}}\geq d$ ``bad'' features such that the empirical risk minimizer of models using only the bad features is not in the Rashomon set. 
Then 
$\hat{R}_{ratio} (\F_{d + 1},\theta) < \hat{R}_{ratio} (\F_{d },\theta).$
}

\begin{proposition}[Rashomon ratio is larger for decision trees of smaller depth]\label{proposition:ratio_trees}
\PropositionRratioTrees
\end{proposition}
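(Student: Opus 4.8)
The plan is to count trees as syntactic objects and show the numerator of the Rashomon ratio grows strictly slower than the denominator when the depth increases by one. For the denominators, a fully grown tree of depth $d$ has $2^d-1$ internal nodes (each carrying one of $m$ features) and $2^d$ leaves (each carrying one of two labels), so $|\F_d| = m^{2^d-1}2^{2^d}$ and hence $|\F_{d+1}|/|\F_d| = m^{2^d}2^{2^d} = (2m)^{2^d}$. For the numerators I would first invoke the \emph{leaves are correct} assumption: since every leaf of a tree in $\hat{R}_{set}(\F_d,\theta)$ has strictly more correct than incorrect points (margin at least $\lceil\theta n\rceil \ge 1$), each leaf must carry its data-determined majority label and no leaf can be empty. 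Thus a Rashomon tree is pinned down by its feature structure alone, and $R_d := |\hat{R}_{set}(\F_d,\theta)|$ equals the number of depth-$d$ \emph{structures} whose majority-labeled completion is Rashomon. The target $\hat{R}_{ratio}(\F_{d+1},\theta) < \hat{R}_{ratio}(\F_d,\theta)$ then becomes the purely combinatorial inequality $R_{d+1} < R_d\,(2m)^{2^d}$.

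The key device is an extension/pruning map between consecutive depths. Every depth-$(d+1)$ structure decomposes uniquely into a top depth-$d$ structure together with a feature at each of the $2^d$ former leaves, so each depth-$d$ structure has exactly $m^{2^d}$ extensions. Because the \emph{leaves are correct} assumption forces the $2^{d+1}$ new leaf labels to equal the majority labels, at most one labeling per extended structure can be Rashomon; hence each depth-$d$ structure admits at most $m^{2^d}$ Rashomon extensions. If I can show that pruning the bottom level of any Rashomon depth-$(d+1)$ tree and relabeling by majority produces a Rashomon depth-$d$ tree, then every tree counted in $R_{d+1}$ is an extension of a tree counted in $R_d$, which gives $R_{d+1} \le R_d\,m^{2^d} < R_d\,(2m)^{2^d}$ and closes the argument with a factor-$2^{2^d}$ of slack.

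The main obstacle is exactly this projection step, i.e. transferring Rashomon membership down one level, because the two sets are measured against different empirical risk minimizers. Pruning merges each pair of sibling leaves: when the siblings share a majority label the merge costs nothing, but when they disagree the margin assumption shows the merge raises the misclassification count by at least $\lceil\theta n\rceil$, i.e. the risk by at least $\theta$. To keep the pruned tree inside $\hat{R}_{set}(\F_d,\theta)$ I must control this increase against the ERM gap $\Delta = \hat{L}(\hat{f}_d) - \hat{L}(\hat{f}_{d+1}) \ge 0$ (nonnegative since $\F_d \subseteq \F_{d+1}$ as functions, each leaf being splittable into same-label children), and this is where the remaining hypotheses do their work: the \emph{bad features} assumption ($m_{\textrm{bad}} \ge d$) guarantees that the entire family of structures built only from bad features lies outside the Rashomon set, so the non-Rashomon mass of the denominator proliferates with depth and the relevant ERMs cannot be degenerate, while $m < 2^{2^d}$ supplies the quantitative slack ensuring the label-freedom factor $2^{2^d}$ in the denominator dominates the feature branching $m^{2^d}$ in the numerator. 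I expect the delicate bookkeeping to be reconciling the threshold $\hat{L}(\hat{f}_{d+1})+\theta$ with $\hat{L}(\hat{f}_d)+\theta$ through the per-leaf margin, and I would treat the trees with genuinely class-separating bottom splits (disagreeing siblings) as the critical case, since those are precisely the extensions whose pruning cost the margin $\lceil\theta n\rceil$ must be large enough to absorb.
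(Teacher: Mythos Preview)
Your projection step is the gap, and it does not close with the given hypotheses. You need that pruning the bottom level of a depth-$(d{+}1)$ Rashomon tree and relabeling by majority yields a depth-$d$ Rashomon tree, but nothing ties the risk increase from merging disagreeing siblings to the ERM gap $\Delta=\hat L(\hat f_d)-\hat L(\hat f_{d+1})$. A depth-$(d{+}1)$ tree can sit right at the threshold with bottom splits that are genuinely informative: pruning them raises the risk by some data-dependent amount $\delta$, while the depth-$d$ threshold only rises by $\Delta$, and the assumptions give no inequality of the form $\delta\le\Delta$. Your invocation of the bad-features hypothesis (``the relevant ERMs cannot be degenerate'') is too vague to supply one, and notice that your target inequality $R_{d+1}\le R_d\,m^{2^d}<R_d\,(2m)^{2^d}$ already carries slack $2^{2^d}$ without ever using $m<2^{2^d}$---a sign that the hypotheses are not being engaged where they actually matter.

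The paper's argument is structurally different and avoids any pruning or extension map. After ``leaves correct'' pins the labels (killing the $2^{2^d}$ factor in the numerator only), it does not relate $R_{d+1}$ to $R_d$; instead it counts each $R_d$ directly as the number of feature-structures minus the number built solely from the $m_{\textrm{bad}}$ bad features, obtaining $\hat R_{ratio}(\F_d)=(1-\alpha(d))/2^{2^d}$ where $\alpha(d)$ is the fraction of bad-only structures. The ratio of consecutive Rashomon ratios is then $2^{2^d}(1-\alpha(d))/(1-\alpha(d+1))>2^{2^d}(1-\alpha(d))>2^{2^d}\,\bar m/m>1$, and the last step is precisely where $m<2^{2^d}$ enters and where the bad-features assumption is used quantitatively (it forces $1-\alpha(d)>\bar m/m$ with $\bar m\ge 1$). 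One further discrepancy: the paper's hypothesis space forbids feature repetition along a root-to-leaf path, so $|\F_d|=2^{2^d}\prod_{k=1}^d(m-k+1)^{2^{k-1}}$ rather than your $m^{2^d-1}2^{2^d}$.
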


The proof of Proposition \ref{proposition:ratio_trees} is in Appendix \ref{appendix:proof_ratio_trees}. 
Both assumptions are typically satisfied in practice.

To demonstrate our point, we computed the Rashomon ratio and pattern Rashomon ratio for 19 different datasets for hypothesis spaces of decision trees and linear models of different complexity (see Figure \ref{fig:step4}). As the complexity of the hypothesis space increases, we see an obvious decrease in the Rashomon ratio and pattern Rashomon ratio. 

To compute the Rashomon ratio, for trees, we use TreeFARMS \cite{xin2022exploring}, which allows us to enumerate the whole Rashomon set for sparse trees. For linear models, we design a two-step approach that allows us to compute all patterns in the Rashomon set. First, for every sample $z_i$ we solve a simple optimization problem that checks if there exists a model in the Rashomon set that misclassifies this sample. If there is no such model, changing the label of $z_i$ will not add more patterns to the Rashomon set, therefore, we can ignore $z_i$. In the second step,  we grow a search tree and bound all paths that will lead to patterns outside of the Rashomon set. We describe our approach for pattern computation in detail in Appendix \ref{appendix:patterns_algorithm} and discuss  the experimental setup in Appendix \ref{appendix:step4_experimental_setup}.

\begin{figure}[ht]
\centering
\subfigure[]{\includegraphics[height=3cm]{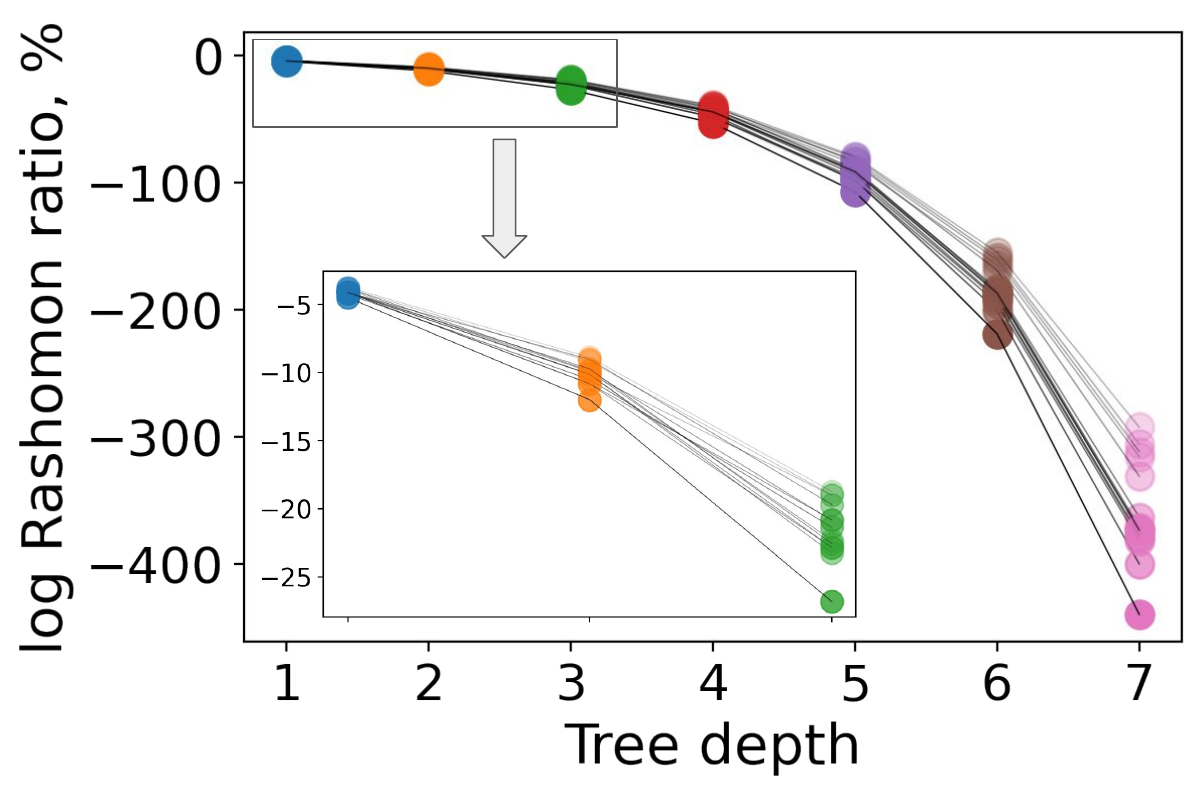}}
\subfigure[]{\includegraphics[height=3cm]{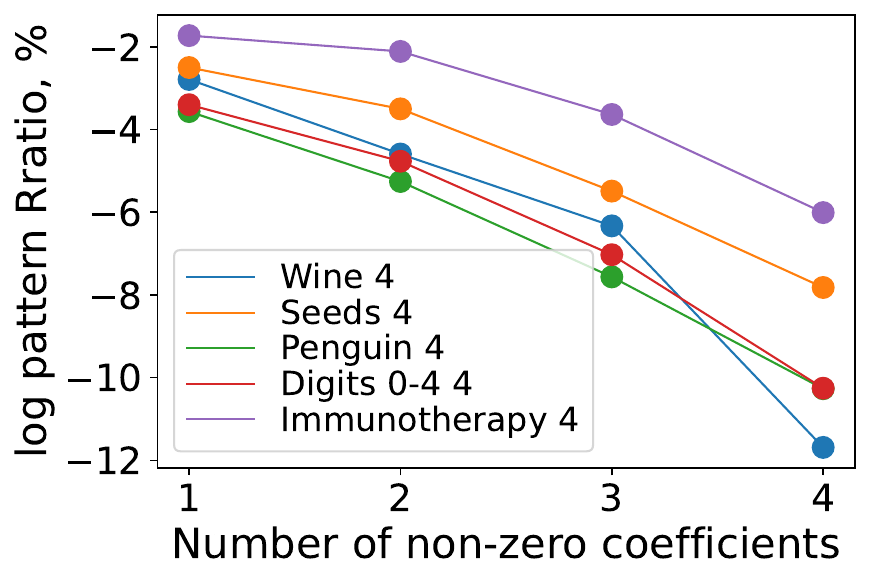}}
\caption{Calculation showing that the Rashomon ratio (a) and pattern Rashomon ratio (b) decrease for the hypothesis space of decision trees of fixed depth from 1 to 7 for 14 different datasets (a) and for the hypothesis space of linear models of sparsity from 1 to 4 for 6 different datasets (b). Each line represents a different dataset, each dot represents the log of the Rashomon ratio or pattern Rashomon ratio. Both ratios decrease as we move to a more complex hypothesis space.}\label{fig:step4}
\end{figure}

\textbf{Completion of the Path}. 
After reducing the complexity of the hypothesis space in Step 3, the practitioner finds a larger Rashomon ratio for the newly chosen lower complexity hypothesis space according to Step 4. As a reminder of the thesis of \citet{semenova2022existence}, with large Rashomon ratios, there are many good models, among which may exist even simpler models that perform well. Thus, the path, starting from noise, is a powerful way to explain what we see in practice, which is that simple models often perform well \cite{Holte93}. 

Note that to follow the identified path,
the machine learning practitioner does not need to know the exact amount of noise. As long as they suspect \textit{some} noise present in the dataset, the results of this paper apply, and the practitioner would expect a good performance from simpler models.

\newcommand{\TheoremNoiseRR}
{
Consider dataset $S = X \times Y$, $X$ is a non-zero matrix, and a hypothesis space of linear models $\F = \{f = \omega^T x, \omega\in \R^m, \omega^T \omega \leq \hat{L}_{\max}/C\}$. Let $\epsilon_i$, such that $\epsilon_i \sim \mathcal{N}(\bar{0},\lambda I)$ ($\lambda > 0$, $I$ is identity matrix), be i.i.d. noise vectors added to every sample: $x'_i = x_i + \epsilon_i$. Consider options $\lambda_1>0$ and $\lambda_2>0$ that control how much noise we add to the dataset. For ridge regression, if $\lambda_1 < \lambda_2$, then the Rashomon ratios obey $\hat{R}_{{ratio}_{\lambda_1 }}(\F,\theta)) < \hat{R}_{{ratio}_{\lambda_2 }}(\F,\theta))$.
}

\section{Rashomon Ratio for Ridge Regression Increases under Additive Attribute Noise}\label{section:ridge}

For linear regression, adding multiplicative or additive noise to the training data is known to be equivalent to regularizing the model parameters \cite{bishop1995training}. Moreover, the more noise is added, the stronger this  regularization is. Thus, noise leads directly to Step 3 and a choice of a smaller (simpler) hypothesis space. Also, for ridge regression, the Rashomon volume (the numerator of the Rashomon ratio) can be computed directly \citep{semenova2022existence} and depends on the regularization parameter. Building upon these results, we prove that noise leads to an increase of the Rashomon ratio (as in Step 4 of the path).

Given dataset $S$, the ridge regression model is learned by minimizing the penalized sum of squared errors: $\hat{L}(\omega) = \hat{L}_{LS}(\omega) + C\omega^T\omega$, where $\hat{L}_{LS}(\omega) = \frac{1}{n}\sum_{i=1}^n \left(x_i^T \omega - y_i\right)^2$ is the least squares loss, $C$ is a regularization parameter, and $\omega \in \mathbb{R}^m$ is a parameter vector for a linear model $f = \omega^T x$. 
 
We will assume that there is a maximum loss value $\hat{L}_{\max}$, such that any linear model that has higher regularized loss than $\hat{L}_{\max}$ is not being considered within the hypothesis space. For instance, an upper bound for $\hat{L}_{\max}$ is the value of the loss at the model that is identically $\bar{0}$, namely $\hat{L}(\bar{0})=\frac{1}{n}\sum_i y_i^2$.  Thus, for every reasonable model  $f=\omega^T x$,  $f\in \F \equiv \hat{L}(\omega) \leq \hat{L}_{\max}$. On the other hand, the best possible value of the least squares loss is $\hat{L}_{LS}=0$, and therefore we get that $Cw^Tw\leq \hat{L}_{\max}$, or alternatively,
$w^Tw\leq \hat{L}_{\max}/C$.
This defines the hypothesis space as an $\ell_2$-norm ball in $m$-dimensional space, the volume of which we can compute.

To measure the numerator of the Rashomon ratio we will use the Rashomon volume $\mathcal{V}(\hat{R}_{set}(\F,\theta))$, as defined in \citet{ semenova2022existence}. In the case of ridge regression, the Rashomon set is an ellipsoid in $m$-dimensions, thus the Rashomon volume can be computed directly.  Therefore, we have the Rashomon ratio as the ratio of the Rashomon volume to the volume of the $\ell_2$-norm ball that defines the hypothesis space.

Next, we show that under additive attribute noise, the Rashomon ratio increases:


\begin{theorem}[Rashomon ratio increases with noise for ridge regression]\label{th:noise_ridge_ratio}
\TheoremNoiseRR
\end{theorem}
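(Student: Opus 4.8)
The plan is to turn the Rashomon ratio into a closed-form expression and then prove it is monotone in the amount of added noise. First I would make the ``noise as regularization'' reduction precise. Taking the expectation of the least-squares loss over the independent noise vectors $\epsilon_i \sim \mathcal{N}(\bar{0},\lambda I)$, the mean-zero property kills the cross term in $(x_i^T\omega + \epsilon_i^T\omega - y_i)^2$ and $\E[\epsilon_i\epsilon_i^T] = \lambda I$ contributes $\lambda\,\omega^T\omega$ per point, so that $\E_{\epsilon}[\hat{L}(\omega)] = \hat{L}_{LS}(\omega) + (C+\lambda)\,\omega^T\omega$. Writing $C_\lambda := C+\lambda$ for the \emph{effective} regularization strength, the noisy problem at level $\lambda$ is exactly a ridge problem with parameter $C_\lambda$, and $\lambda_1 < \lambda_2$ is equivalent to $C_{\lambda_1} < C_{\lambda_2}$.

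Next I would compute the two volumes. Setting $A := \tfrac{1}{n} X^T X$ and $b := \tfrac{1}{n} X^T Y$ and completing the square gives $\hat{L}_\lambda(\omega) - \hat{L}_\lambda(\hat{\omega}_\lambda) = (\omega - \hat{\omega}_\lambda)^T M_\lambda (\omega - \hat{\omega}_\lambda)$, where $M_\lambda := A + C_\lambda I$ and $\hat{\omega}_\lambda = M_\lambda^{-1} b$. Hence the Rashomon set is the ellipsoid $\{\omega : (\omega - \hat{\omega}_\lambda)^T M_\lambda (\omega - \hat{\omega}_\lambda) \le \theta\}$, with volume $V_m\,\theta^{m/2}/\sqrt{\det M_\lambda}$ ($V_m$ the unit-ball volume). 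For the denominator, the crucial observation is that the admissible ball is governed by the same effective strength: since the best achievable value of $\hat{L}_{LS}$ is $0$, the constraint $\hat{L}_\lambda(\omega)\le\hat{L}_{\max}$ forces $\omega^T\omega \le \hat{L}_{\max}/C_\lambda$, a ball of volume $V_m(\hat{L}_{\max}/C_\lambda)^{m/2}$ (note $\hat{L}_{\max}$, e.g.\ the loss of $\bar{0}$, is unchanged by noise). Dividing, $V_m$ cancels and, writing $a_1,\dots,a_m \ge 0$ for the eigenvalues of $A$,
\begin{equation*}
\hat{R}_{{ratio}_{\lambda}}(\F,\theta) = \left(\frac{\theta}{\hat{L}_{\max}}\right)^{m/2}\frac{C_\lambda^{m/2}}{\sqrt{\det M_\lambda}} = \left(\frac{\theta}{\hat{L}_{\max}}\right)^{m/2}\prod_{j=1}^m\left(\frac{C_\lambda}{a_j + C_\lambda}\right)^{1/2}.
\end{equation*}

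Finally I would verify monotonicity. The prefactor is independent of $\lambda$, so it suffices to show each factor $g(c) = c/(a_j + c)$ is increasing in $c>0$; indeed $g'(c) = a_j/(a_j + c)^2 \ge 0$, and it is strictly positive whenever $a_j > 0$. Because $X$ is a non-zero matrix, $A = \tfrac{1}{n} X^T X$ has at least one strictly positive eigenvalue, so the product is strictly increasing in $C_\lambda$, hence in $\lambda$; evaluating at $C_{\lambda_1} < C_{\lambda_2}$ yields $\hat{R}_{{ratio}_{\lambda_1}}(\F,\theta) < \hat{R}_{{ratio}_{\lambda_2}}(\F,\theta)$.

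I expect the main obstacle to be conceptual rather than computational. Noise shrinks \emph{both} the numerator (the Rashomon ellipsoid, whose semi-axes contract like $(a_j + C_\lambda)^{-1/2}$) and the denominator (the admissible ball, whose radius contracts like $C_\lambda^{-1/2}$), so one must see that the ratio grows precisely because the ball contracts faster than the ellipsoid along every direction with $a_j > 0$. In particular, holding the hypothesis space fixed while only the loss changes would make the ratio \emph{decrease}, so correctly tying the ball radius to the effective regularization $C_\lambda$ is the heart of the argument. A secondary point to state explicitly is that these closed-form volumes require the Rashomon ellipsoid to lie inside the admissible ball (so that its volume is the full ellipsoid and not a clipped region); this is the standing ridge-regression assumption and holds for $\theta$ small enough relative to $\hat{L}_{\max}$.
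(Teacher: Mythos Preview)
Your proposal is correct and follows essentially the same approach as the paper: reduce noise to an effective ridge parameter $C_\lambda = C+\lambda$, compute the Rashomon ellipsoid volume and the admissible-ball volume, take their ratio to obtain $(\theta/\hat L_{\max})^{m/2}\prod_j \sqrt{C_\lambda/(a_j+C_\lambda)}$, and conclude strict monotonicity from $c\mapsto c/(a+c)$ increasing together with $X\neq 0$ guaranteeing at least one positive eigenvalue. The only cosmetic difference is that the paper cites the Rashomon-volume formula from \citet{semenova2022existence} (in terms of singular values of $X$) whereas you derive the ellipsoid by completing the square; your added remarks about the ball shrinking faster than the ellipsoid and about the ellipsoid needing to sit inside the ball are not in the paper but are helpful clarifications.
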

The proof of Theorem \ref{th:noise_ridge_ratio} 
is in Appendix \ref{appendix:proof_rvolume_noise_ridge}.  Note that while in Theorem \ref{th:noise_ridge_ratio} we directly show that adding noise to the training data is equivalent to stronger regularization leading us directly to Step 3, Steps 1 and 2 of the path identified in the previous section are automatically satisfied. We formally prove that additive noise still leads to an increase of the variance of losses for least squares loss (similar to Theorems \ref{th:variance_noise}, \ref{th:variance_non_uniform_noise}, and \ref{th:variance_margin_noise}) in Theorem \ref{th:variance_noise_least_squares} in Appendix \ref{appendix:proof_rvolume_noise_ridge}, and we show that an increase in the maximum variance of losses leads to worse generalization bound (similar to Theorem \ref{th:variance_generalization}) for the squared loss in Theorem \ref{th:generalization_cover} in Appendix \ref{appendix:proof_rvolume_noise_ridge}.

\textbf{Returning to the Path.} For ridge regression, we have now built \textit{a direct noise-to-Rashomon-ratio argument} showing that, in the presence of noise, the Rashomon ratios are larger. As before, for larger Rashomon ratios, there are multiple good models, including simpler ones that are easier to find.

\section{Rashomon Set Characteristics in the Presence of Noise}\label{sec:diversity}

Now we discuss a different mechanism for obtaining larger Rashomon sets. Suppose
the practitioner knows the data are noisy. They would then expect a large Rashomon set, which we speculate  in this section is explained by noise in the data. To show this, we define \textit{pattern diversity} as a characteristic of the Rashomon set and show that it is likely to increase with label noise. 
Pattern diversity is an empirical measure of differences in patterns on the dataset. It computes the average distance between the patterns, which allows us not only to assess how large the Rashomon set is but also how diverse it is. Once the practitioner knows they have a large Rashomon set, they could hypothesize from the reasoning in \citet{semenova2022existence} that a simple model might perform well for their dataset.

\subsection{Pattern Diversity: Definition, Properties and Upper Bound}
Recall that $\pi(\F,\theta)$ is the set of  unique classification patterns produced by the Rashomon set of $\F$ with the Rashomon parameter $\theta$.

\begin{definition}[Pattern diversity]
For Rashomon set $\hat{R}_{set}(\F, \theta)$, the pattern diversity $div(\hat{R}_{set}(\F, \theta))$ is defined as: 
\[div(\hat{R}_{set}(\F, \theta)) =  \frac{1}{n\,\Pi\,\Pi}\sum_{\substack{j\leq \Pi\\ p_j \sim \pi(\F,\theta)}}  \sum_{\substack{k\leq \Pi\\ p_k \sim \pi(\F,\theta)}} H(p_j, p_k),\]
where $n = |S|$, $\Pi = |\pi (\F,\theta)|$, and $H(p_j, p_k) = \sum_{i=1}^n \mathbbm{1}_{[p_j^i\neq p_k^i]}$ is the Hamming distance (in our case it computes the number of samples at which predictions are different), and $|\cdot|$ denotes  cardinality.
\end{definition}

Pattern diversity measures pairwise differences between patterns of functions in the pattern Rashomon set. Pattern diversity is in the range $[0, 1)$, where it is $0$ if the pattern set contains one pattern or no patterns. 
Among different measures of the Rashomon set, the pattern diversity is the closest to the pattern Rashomon ratio \cite{rudin2022interpretable} and expected pairwise disagreement (as in \cite{black2022model}). In Appendix \ref{appendix:diversity_other_metrics}, we discuss similarities and differences between pattern diversity and these measures.

Given a sample $z_i$, let $a_i = \frac{1}{\Pi} \sum_{k=1}^{\Pi} \mathbbm{1}_{[p_k^i = y_i]}$,  where $p_k^i$ is $i^{\text{th}}$ index of the $k^{\text{th}}$ pattern,  denote the probability with which patterns from the pattern Rashomon set classify $z_i$ correctly. We will call $a_i$ \textit{sample agreement} over the pattern Rashomon set. 
When $a_i = 1$, then all patterns agreed and correctly classified $z_i$. If $a_i = \frac{1}{2}$, only half of the models were able to correctly predict the label. As we will show, when more samples have sample agreement near $\frac{1}{2}$, we have higher pattern diversity. We can compute pattern diversity using average sample agreements instead of the Hamming distance according to the theorem below.

\newcommand{\TheoremDiversitySampleDisagreement}
{
For 0-1 loss, dataset $S$,
 and pattern Rashomon set $\pi (\F,\theta)$, pattern diversity can be computed as
    $div(\hat{R}_{set}(\F,\theta)) = \frac{2}{n}\sum_{i=1}^n a_i (1-a_i),$
where $a_i = \frac{1}{\Pi} \sum_{k=1}^{\Pi} \mathbbm{1}_{[p_k^i = y_i]}$ is sample agreement over the pattern Rashomon set.
}

\begin{theorem}[Pattern diversity via sample agreement]\label{th:diversity_sample_disagreement}
\TheoremDiversitySampleDisagreement
\end{theorem}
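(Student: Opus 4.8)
The plan is to prove the identity
\[
div(\hat{R}_{set}(\F,\theta)) = \frac{2}{n}\sum_{i=1}^n a_i (1-a_i)
\]
by rewriting the double sum of Hamming distances index-by-index and recognizing the resulting inner quantity as a variance-like term. The key observation is that the Hamming distance decomposes over coordinates, $H(p_j,p_k) = \sum_{i=1}^n \mathbbm{1}_{[p_j^i\neq p_k^i]}$, so I would first swap the order of summation to move the sum over samples $i$ to the outside. This converts the definition into $\frac{1}{n\Pi\Pi}\sum_{i=1}^n \sum_{j\leq\Pi}\sum_{k\leq\Pi} \mathbbm{1}_{[p_j^i\neq p_k^i]}$, so it suffices to show that for each fixed coordinate $i$, the inner double sum equals $2\Pi^2 a_i(1-a_i)$.

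The heart of the argument is a counting step on a single coordinate. Since labels are binary, at coordinate $i$ each pattern $p_k$ either classifies $z_i$ correctly ($p_k^i = y_i$) or incorrectly. By definition $a_i$ is the fraction of patterns that are correct, so exactly $\Pi a_i$ patterns agree with $y_i$ and $\Pi(1-a_i)$ disagree. The term $\mathbbm{1}_{[p_j^i\neq p_k^i]}$ equals $1$ precisely when one of $p_j,p_k$ is correct and the other is incorrect. I would count these ordered pairs: there are $\Pi a_i$ choices for the correct one and $\Pi(1-a_i)$ for the incorrect one, and a factor of $2$ for which of $j,k$ plays which role. This gives $\sum_{j,k}\mathbbm{1}_{[p_j^i\neq p_k^i]} = 2\,(\Pi a_i)(\Pi(1-a_i)) = 2\Pi^2 a_i(1-a_i)$. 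Substituting back and cancelling the $\Pi^2$ against $\frac{1}{\Pi\Pi}$ yields the claimed formula.

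The step I expect to require the most care is the counting of disagreeing ordered pairs at a fixed coordinate, but this is really just the elementary fact that in a population split into two groups of sizes $\Pi a_i$ and $\Pi(1-a_i)$, the number of ordered cross-group pairs is twice the product; the binary-label structure is what guarantees that ``$p_j^i \neq p_k^i$'' is equivalent to ``exactly one of the two is correct.'' A subtle point worth stating explicitly is that this equivalence relies on $\Y = \{-1,1\}$ having only two values, so that disagreement between two patterns at coordinate $i$ is the same as disagreement in their correctness relative to $y_i$; with more than two labels the decomposition would fail. I would close by noting the two cosmetic simplifications: the outer $\frac{1}{n}$ survives untouched, and the diagonal terms $j = k$ contribute zero (consistent with the product $a_i(1-a_i)$ vanishing when all patterns agree), so no separate treatment of the diagonal is needed.
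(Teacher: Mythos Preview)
Your proposal is correct and arrives at the identity by a genuinely different route than the paper. The paper's proof is purely algebraic: it transforms labels to $\{0,1\}$, expands the XOR indicator as $p_j^i(1-p_k^i)+p_k^i(1-p_j^i)$, and first shows $div(\hat{R}_{set}(\F,\theta)) = \frac{2}{n}\sum_i b_i(1-b_i)$ for the auxiliary quantity $b_i = \frac{1}{\Pi}\sum_k p_k^i$ (the fraction of patterns predicting label $1$ at coordinate $i$). It then separately derives the relation $a_i = 2y_ib_i + 1 - y_i - b_i$ via XNOR and verifies algebraically, using $y_i^2 = y_i$, that $a_i(1-a_i) = b_i(1-b_i)$. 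Your argument bypasses the intermediate $b_i$ entirely by the combinatorial observation that, with binary labels, ``$p_j^i \neq p_k^i$'' is exactly ``one pattern is correct and the other is not,'' so the inner double sum is a straight count of ordered cross-group pairs. This is shorter and makes the role of the binary-label assumption transparent; the paper's detour through $b_i$ is longer but isolates a quantity ($b_i$) that is label-independent, which could be of separate interest even though it is not used elsewhere in the paper.
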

The proof of Theorem \ref{th:diversity_sample_disagreement} is in Appendix \ref{appendix:proof_diversity_sample_disagreement}. In Theorem \ref{th:point_disagreement_loss} in Appendix \ref{appendix:proof_diversity_bound}, we show that average sample agreement (over all samples $z_i$) is inversely proportional to the average loss of the patterns from the pattern Rashomon set. Using this intuition, we can upper bound the pattern diversity by the empirical risk of the empirical risk minimizer and the Rashomon parameter $\theta$.

\newcommand{\TheoremDiveristyUpperBound}
{
Consider hypothesis space $\F$, 0-1 loss, and empirical risk minimizer $\hat{f}$. For any $\theta \geq 0$, pattern diversity can be upper bounded by
\begin{equation}\label{eq:diversity_bound}
div(\hat{R}_{set}(\F,\theta)) \leq  2 (\hat{L}(\hat{f}) + \theta) (1 - (\hat{L}(\hat{f})+\theta)) + 2\theta.
\end{equation}
}

\begin{theorem}[Upper bound on pattern diversity]\label{th:diversity_bound}
\TheoremDiveristyUpperBound
\end{theorem}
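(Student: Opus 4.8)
The plan is to reduce the claim to an elementary one-variable estimate, starting from the closed form in Theorem~\ref{th:diversity_sample_disagreement}, which gives $div(\hat{R}_{set}(\F,\theta)) = \frac{2}{n}\sum_{i=1}^{n} a_i(1-a_i)$. The function $g(a)=a(1-a)$ is concave on $[0,1]$, so Jensen's inequality with uniform weights $1/n$ yields
\[
div(\hat{R}_{set}(\F,\theta)) = \frac{2}{n}\sum_{i=1}^n g(a_i) \le 2\,g(\bar{a}) = 2\,\bar{a}\,(1-\bar{a}),
\qquad \bar{a} := \frac{1}{n}\sum_{i=1}^n a_i .
\]
This collapses the $n$ per-sample agreements into the single scalar $\bar{a}$, and it is the step that does the real work; everything afterward is bookkeeping.

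Next I would identify $1-\bar{a}$ with the \emph{average loss} over the pattern Rashomon set. Summing $1 - a_i = \frac{1}{\Pi}\sum_{k=1}^{\Pi}\mathbbm{1}_{[p_k^i \ne y_i]}$ over $i$ and swapping the order of summation gives $1-\bar{a} = \frac{1}{\Pi}\sum_{k=1}^{\Pi}\hat{L}(p_k) =: \bar{L}$, the mean empirical loss of the patterns in $\pi(\F,\theta)$. Since every such pattern is realized by some $f\in\hat{R}_{set}(\F,\theta)$ and $\hat{f}$ is the empirical risk minimizer, each term obeys $\hat{L}(\hat{f}) \le \hat{L}(p_k) \le \hat{L}(\hat{f})+\theta$, and averaging preserves the two-sided bound, so $\hat{L}(\hat{f}) \le \bar{L} \le \beta$ with $\beta := \hat{L}(\hat{f})+\theta$. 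It then remains to show $2\bar{L}(1-\bar{L}) \le 2\beta(1-\beta)+2\theta$ for $\bar{L}$ ranging over this interval; note that both the upper bound $\bar{L}\le\beta$ and the lower bound $\bar{L}\ge\hat{L}(\hat{f})=\beta-\theta$ are needed.

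Finally I would substitute $\bar{L} = \beta - s$ with slack $s \in [0,\theta]$ (exactly the interval above, rewritten), so that expanding gives
\[
2\bar{L}(1-\bar{L}) = 2\beta(1-\beta) + 2s(2\beta-1) - 2s^2 .
\]
The task reduces to controlling the residual $2s(2\beta-1) - 2s^2 \le 2\theta$, and this is where I expect the only genuine subtlety to live: one must track the sign of $2\beta-1$ and invoke the natural regime $\beta = \hat{L}(\hat{f})+\theta \le 1$ (the Rashomon threshold does not exceed the trivial worst-case loss). When $2\beta-1\le 0$ the residual is nonpositive, hence at most $2\theta$; when $0 \le 2\beta-1 \le 1$ (which is exactly $\beta\le 1$) we have $2s(2\beta-1) \le 2s \le 2\theta$, and subtracting $2s^2\ge 0$ only helps. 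This closes the bound. The main obstacle is therefore not a hard inequality but making the $\beta\le 1$ assumption explicit and dispatching the sign cases cleanly; the concavity/Jensen reduction together with the identification of $\bar{L}$ as the average pattern loss is the conceptual core.
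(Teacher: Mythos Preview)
Your proposal is correct and follows essentially the same route as the paper: reduce to $2\bar a(1-\bar a)$ via concavity (the paper phrases this as Cauchy--Schwarz, which for a quadratic is the same inequality), identify $1-\bar a$ with the average pattern loss and sandwich it between $\hat L(\hat f)$ and $\hat L(\hat f)+\theta$, then finish with elementary algebra. The only real differences are cosmetic---the paper bounds the linear and quadratic terms in $2\bar a-2\bar a^2$ separately at opposite endpoints rather than doing your substitution $\bar L=\beta-s$ and case split---and your proof is slightly more careful in making the natural assumption $\beta=\hat L(\hat f)+\theta\le 1$ explicit, which the paper uses implicitly when lower-bounding $\bar a^2$.
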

The proof of Theorem \ref{th:diversity_bound} is in Appendix \ref{appendix:proof_diversity_bound}. The bound 
\eqref{eq:diversity_bound} emphasizes how important the performance of the empirical risk minimizer is for understanding pattern diversity. If dataset is well separated so that the empirical risk is small, then pattern diversity will also be small, as there are not many different ways to misclassify points and stay within the Rashomon set. As dataset becomes noisier, in expectation, we expect the  empirical risk to increase and thus pattern diversity as well. We will show theoretically and experimentally that pattern diversity is likely to increase under label noise.


\subsection{Label Noise is Likely to Increase Pattern Diversity}\label{section:diversity_noise}
Let $S_\rho$ be a version of $S$ with uniformly random label noise, creating randomly perturbed labels $\tilde{y}$ with probability $0 < \rho < \frac{1}{2}$: $P(\tilde{y_i} \neq y_i) = \rho$. Let $\Omega(S_{\rho})$ be the uniform distribution over all  $S_{\rho}$. From Theorem \ref{th:diversity_bound} denote the upper bound on pattern diversity as $U_{div}(\hat{R}_{set}(\F,\theta)) = 2 (\hat{L}(\hat{f}) + \theta) (1 - (\hat{L}(\hat{f})+\theta)) + 2\theta$. We show that it increases with uniform label noise.

\newcommand{\TheoremDiversityIncreasesNoise}
{
Consider a hypothesis space $\F$, 0-1 loss, and a dataset $S$. Let $\rho \in (0,\frac{1}{2})$ be the probability with which each label $y_i$ is flipped independently, and let $S_{\rho} \sim \Omega(S_{\rho})$ denote a noisy version of $S$. For the Rashomon parameter $\theta \geq 0$, if $\inf_{f\in\F}\E_{S_{\rho}}\hat{L}_{S_{\rho}}(f)<\frac{1}{2} - \theta$ and $\hat{L}_S(\hat{f}_S)<\frac{1}{2}$, then adding noise to the dataset increases the upper bound on pattern diversity of the expected Rashomon set:
\[U_{div} (\hat{R}_{{set}_{S}}(\F, \theta)) < U_{div}(\hat{R}_{{set}_{\E_{S_{\rho}\sim \Omega(S_{\rho})}S_{\rho}}}(\F, \theta)).\]
}

\begin{theorem}[Upper bound on pattern diversity increases with label noise]\label{th:diversity_noise}
\TheoremDiversityIncreasesNoise
\end{theorem}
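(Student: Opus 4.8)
The plan is to reduce everything to a one-variable monotonicity argument, since $U_{div}$ depends on the Rashomon set only through the empirical risk of the empirical risk minimizer. Writing $g(\ell) := 2(\ell+\theta)(1-\ell-\theta) + 2\theta$, we have $U_{div}(\hat{R}_{set}(\F,\theta)) = g(\hat{L}(\hat{f}))$ for the relevant dataset, so the claim becomes a comparison $g(\ell_S) < g(\ell_\rho)$ between the ERM losses $\ell_S$ on $S$ and $\ell_\rho$ on the expected noisy dataset. First I would compute, for a fixed $f\in\F$, the expected empirical loss under uniform label flipping. Conditioning on each sample, the flip sends a correct prediction to an error with probability $\rho$ and an error to a correct prediction with probability $\rho$, which by linearity of expectation gives the clean affine identity $\E_{S_\rho}\hat{L}_{S_\rho}(f) = \rho + (1-2\rho)\,\hat{L}_S(f)$.

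Because $\rho < \tfrac{1}{2}$, the map $\ell \mapsto \rho + (1-2\rho)\ell$ is strictly increasing, so minimizing the expected noisy loss over $\F$ is equivalent to minimizing $\hat{L}_S$; hence $\hat{f}_S$ is also the minimizer on the expected noisy dataset and its loss there is $\ell_\rho = \rho + (1-2\rho)\,\ell_S$, where $\ell_S = \hat{L}_S(\hat{f}_S)$. Note that $\ell_\rho = \inf_{f\in\F}\E_{S_\rho}\hat{L}_{S_\rho}(f)$ is exactly the quantity appearing in the first hypothesis, so that hypothesis reads $\ell_\rho < \tfrac{1}{2}-\theta$ directly. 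Next I would establish the ordering $\ell_S < \ell_\rho$: subtracting gives $\ell_\rho - \ell_S = \rho(1 - 2\ell_S)$, which is strictly positive precisely because $\rho > 0$ and, by the second hypothesis, $\ell_S < \tfrac{1}{2}$. Combining with $\ell_\rho < \tfrac{1}{2}-\theta$ yields $\ell_S < \ell_\rho < \tfrac{1}{2}-\theta$.

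Finally I would invoke the shape of $g$. Since $g'(\ell) = 2\bigl(1 - 2(\ell+\theta)\bigr)$, the function $g$ is strictly increasing exactly on $\{\ell < \tfrac{1}{2}-\theta\}$, i.e. on the left branch of the downward parabola. The two hypotheses are tailored to place both $\ell_S$ and $\ell_\rho$ on this increasing branch, so $\ell_S < \ell_\rho$ immediately gives $g(\ell_S) < g(\ell_\rho)$, which is the desired strict inequality on the $U_{div}$ values. The main obstacle is conceptual rather than computational: correctly interpreting $\E_{S_\rho\sim\Omega(S_\rho)}S_\rho$ as the dataset whose empirical loss equals the expected noisy loss, and verifying that the affine noise transformation preserves the identity of the minimizer while pushing its loss value upward without crossing the vertex $\ell = \tfrac{1}{2}-\theta$ of the parabola. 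Both hypotheses are used exactly for this—one to guarantee the upward shift ($\ell_S<\tfrac12$) and one to keep both endpoints on the increasing side ($\ell_\rho<\tfrac12-\theta$)—and if either failed the comparison could reverse.
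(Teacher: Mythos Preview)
Your proposal is correct and follows essentially the same route as the paper: derive the affine identity $\E_{S_\rho}\hat L_{S_\rho}(f)=\rho+(1-2\rho)\hat L_S(f)$, use it to show $\ell_S<\ell_\rho$, and then apply the strict monotonicity of $g(\ell)=2(\ell+\theta)(1-\ell-\theta)+2\theta$ on $[0,\tfrac12-\theta)$. The only cosmetic difference is that you explicitly argue $\hat f_S$ minimizes the expected noisy loss (giving $\ell_\rho=\rho+(1-2\rho)\ell_S$ exactly), whereas the paper's proof only uses the one-sided inequality $\ell_\rho\geq\rho+(1-2\rho)\ell_S$ and defers the equality to a post-proof remark; either version suffices.
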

The proof of Theorem \ref{th:diversity_noise} is in Appendix \ref{appendix:proof_diversity_noise}.
In the general case, it is challenging to find a closed-form formula for pattern diversity or design a lower bound without strong assumptions about the data distribution or the hypothesis space. 
\newcommand{\PropositionDiversityCornerCase}
{Consider hypothesis space $\F$, 0-1 loss, dataset $S$  of distinct samples (meaning that no $z_i = z_j$ for $i \neq j$). Let $\rho \in (0,\frac{1}{2})$ be a fraction of points whose labels are flipped, and $S_{\rho}\sim \Omega(S_{\rho})$ denotes a noisy version of $S$, where $\Omega(S_{\rho})$ is a uniform distribution over all possible selection of $\rho n$ labels to flip. For $\theta = 0$, if the dataset $S$ is separable, meaning that $\hat{L}_S(\hat{f}_S) = 0$, and $\rho n> VC(\F) + 1$ (where VC is the Vapnik–Chervonenkis dimension \cite{vapnik1971uniform}), then with non-zero probability, adding random label noise increases the expected pattern diversity of the Rashomon set: 
$div (\hat{R}_{{set}_{S}}(\F, 0)) < \E_{S_{\rho}\sim \Omega(S_{\rho})}\left[div(\hat{R}_{{set}_{S_{\rho}}}(\F, 0))\right].$
}
Therefore, we empirically examine the behavior of pattern diversity alongside other characteristics of the Rashomon set for different datasets and show these characteristics tend to increase with more noise.


\subsection{Experiment for Pattern Diversity and Label Noise}\label{section:diversity_experiments}

\begin{figure}[t]
\centering
\subfigure[]{\includegraphics[height=5.1cm]{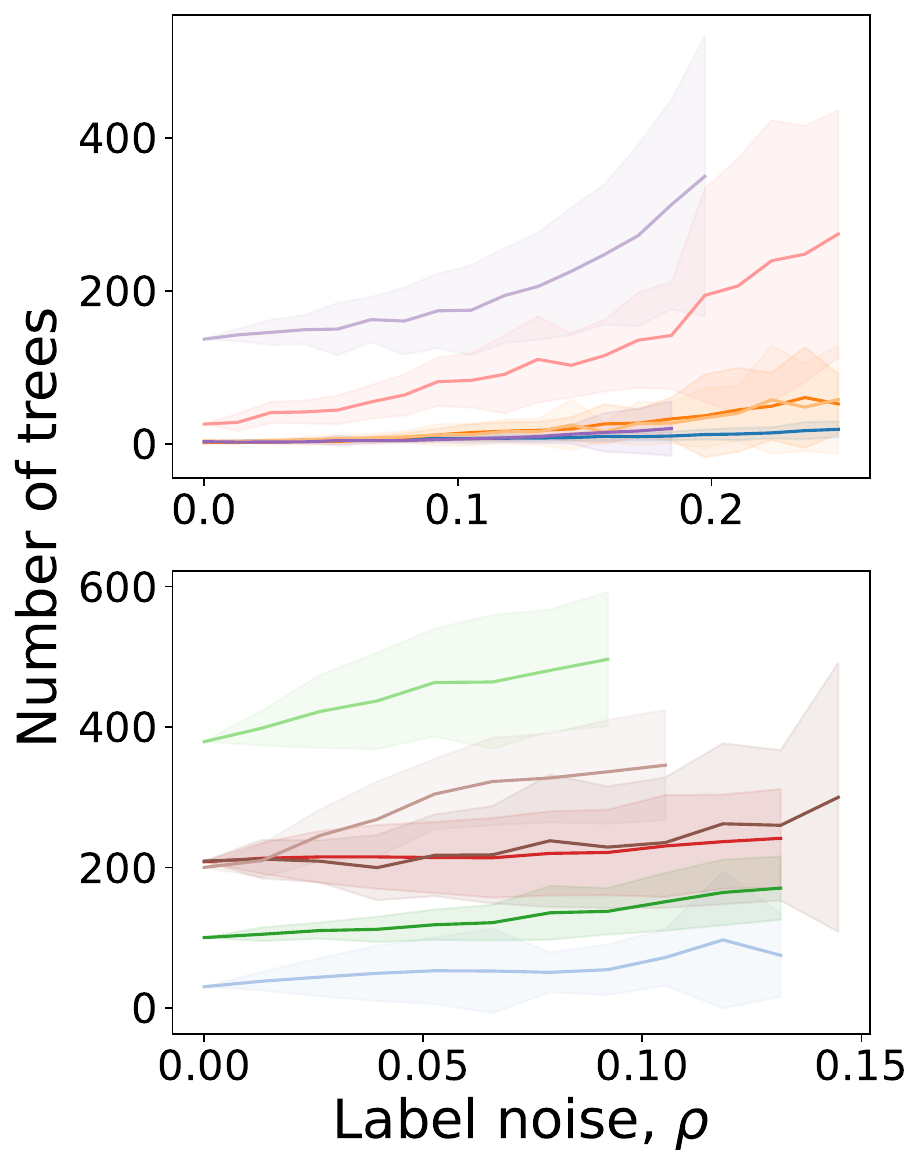}}
\subfigure[]{\includegraphics[height=5.1cm]{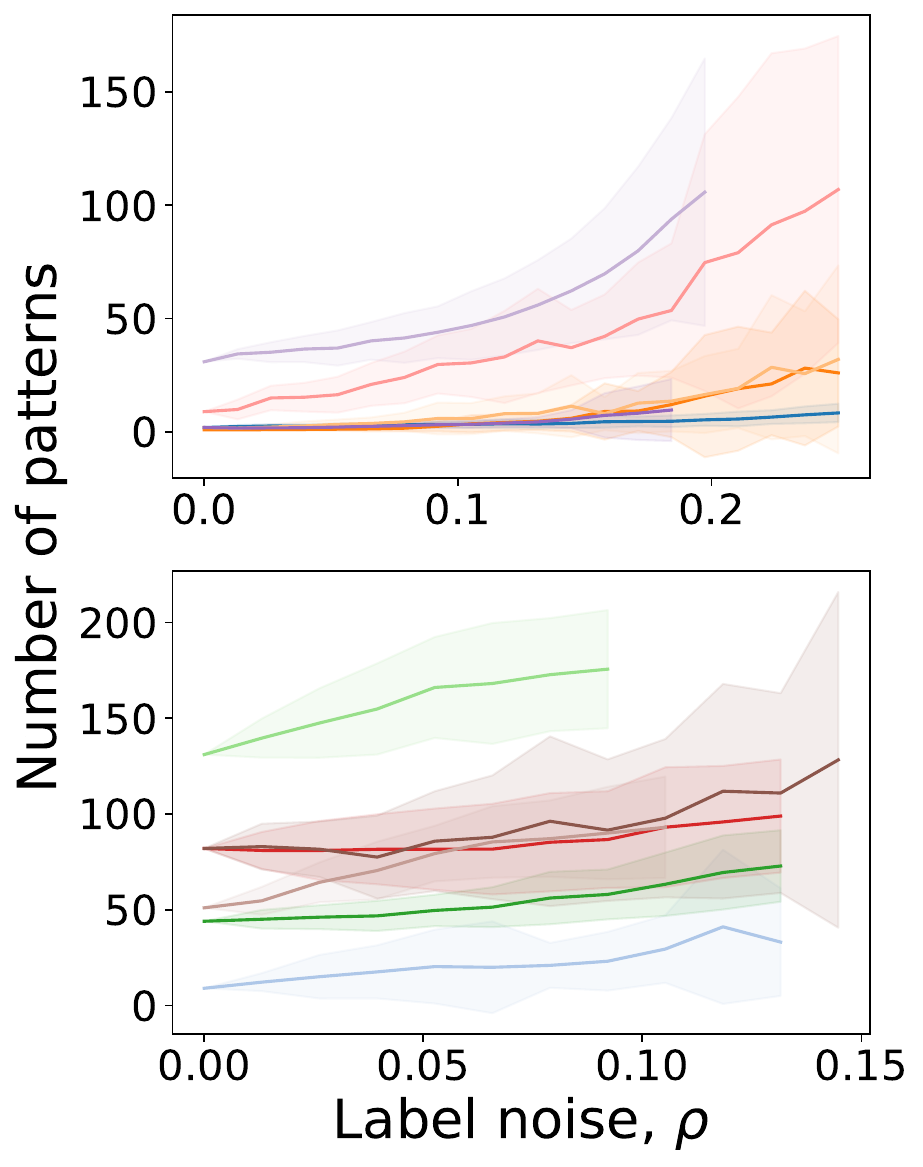}}
\subfigure[]
{\includegraphics[height=5.1cm]{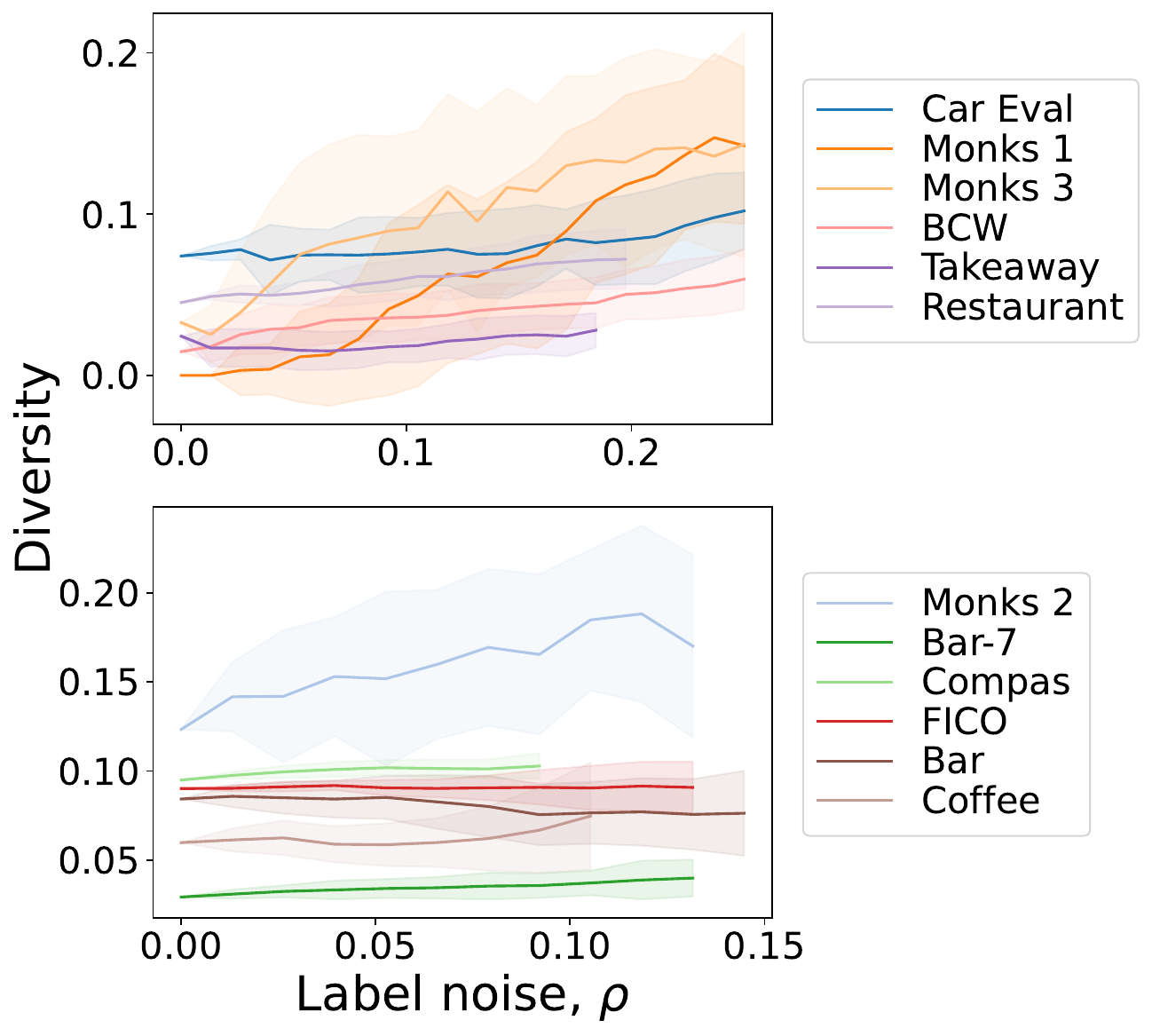}}

\caption{Rashomon set characteristics such as the number of trees in the Rashomon set (Subfigure a), the number of patterns in the Rashomon set (Subfigure b), and  pattern diversity (Subfigure c) tend to increase with uniform label noise for hypothesis spaces of sparse decision trees. 
The top row of the figure shows datasets with lower empirical risk (``cleaner'' datasets) and the bottom row shows datasets with higher empirical risk (``noisier'' datasets).}
\label{fig:diversity}
\end{figure}

We expect many different datasets to have larger Rashomon set measurements as data become more noisy. As before, we consider uniform label noise, where each label is flipped independently with probability $\rho$. For different noise levels, we compute diversity, and the number of patterns in the Rashomon set for 12 different datasets for the hypothesis space of sparse decision trees of depth 4 (see Figure \ref{fig:diversity} (a)-(b)). For every dataset, we introduce up to 25\% label noise ($\rho \in [0,0.25]$), or $\text{Accuracy}(\hat{f}) - 50\%$, whichever is smaller. This means that data with lower accuracy (before adding noise) will have shorter plots, since noise is along the horizontal axis of our plots in Figure \ref{fig:diversity}. For each noise level $\rho$, we do 25 draws of $S_{\rho}$, and for each draw $S_{\rho}$ we recompute the Rashomon set. For decision trees we use TreeFARMS \cite{xin2022exploring}, which allows us to compute the number of trees in the Rashomon set. We discuss results for the hypothesis space of linear models in Appendix \ref{appendix:diversity_experiments}.

Some key observations from Figure \ref{fig:diversity}: First, the number of trees and patterns in the Rashomon set on average increases with noise. This means that the Rashomon ratio and pattern Rashomon ratio increase with noise as well since the hypothesis space stays the same. Second, noisier datasets, that initially had lower empirical risk (e.g., Compas, Coffee House, FICO) have more models in the Rashomon set (and thus higher Rashomon ratios) as compared to datasets with lower empirical risk (Car Evaluation, Monks1, Monks3). Finally, pattern diversity, on average, tends to increase with noise for the majority of the datasets, except FICO and Bar, where it stays about the same (most likely due to larger inherent noise in data before we added more noise).


\textbf{Returning to the Path}. If the practitioner observes more noise in the data, it could be already the case that the Rashomon set is large. Then, there are many good models, among which simpler or interpretable model are likely to exist \citep{semenova2022existence}.

\section{Limitations}


While we believe our results have illuminated that the Rashomon effect often exists when data are noisy, the  connection between noise and increased Rashomon metrics is mostly supported by experiments, rather than a tight set of theoretical bounds. Specifically, we do not have a lower bound on diversity nor a correlation between diversity and the pattern Rashomon ratio (or Rashomon ratio) yet. This connection deserves further exploration and strengthening.

In previous sections, we showed that we expect an increase in pattern diversity while adding more label noise. In Section \ref{section:diversity_experiments}, we also observe an increase in the number of trees and patterns in the Rashomon set under label noise. 
In  both cases, we used 0-1 loss. While in Section \ref{section:ridge} we studied ridge regression (and thus used squared loss), ways to extend our results (both experimental and theoretical) to different distance-based classification loss functions are not yet fully clear. In the case of classification with distance-based losses, such as exponential loss, the effect of label noise can be difficult to model without taking  properties of the data distribution into account. For example, regardless of the amount of noise in the dataset, a single misclassified outlier could essentially define the Rashomon set. The exponential loss could be very sensitive to the position of this outlier, leading to a small Rashomon set. This does not apply to the analysis in this paper, since  the 0-1 loss we used is robust to outliers.
Perhaps techniques that visualize the loss landscape \cite{li2018visualizing} can be helpful in characterizing the Rashomon set for distance-based classification losses.

\section*{Conclusion}

Our results have profound policy implications, as they underscore the critical need to prioritize interpretable models in high-stakes decision-making. In a world where black box models are often used for high-stakes decisions, and yet the data generation processes are known to be noisy, our work sheds light on the false premise of this dangerous practice -- that black box models are likely to be more accurate. 
Our findings have particular relevance for critical domains such as criminal justice, healthcare, and loan decisions, where individuals are subjected to the outputs of these models. The use of interpretable models in these areas can safeguard the rights and well-being of these individuals and ensure that decision-making processes are transparent, fair, and accountable.

\section*{Acknowledgments}
We gratefully acknowledge support from grants DOE DE-SC0023194, NIH/NIDA R01 DA054994, and NSF IIS-2130250.

\bibliographystyle{plainnat}
\bibliography{noise_rset}

\newpage

\appendix

{\LARGE \centerline{Supplemental materials} \par}
{\Large \centerline{A Path to Simpler Models Starts With Noise} \par}

\section{Proof for Theorem \ref{th:variance_noise}}\label{appendix:proof_variance_noise}
We state and prove Theorem \ref{th:variance_noise} below.

\begingroup
\def\thetheorem{\ref{th:variance_noise}}
\begin{theorem}[Variance increases with label noise]
\TheoremVarianceIncreaseNoise
\end{theorem}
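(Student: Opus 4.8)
The plan is to exploit the fact that the 0-1 loss $l(f,z)$ is a Bernoulli random variable, so its variance is completely determined by its mean, the (noisy) true risk. First I would write $\sigma^2(f,\D_{\rho}) = L_{\D_{\rho}}(f)\,(1 - L_{\D_{\rho}}(f))$, using that for any indicator random variable $B$ with $\E B = q$ one has $\Var B = q(1-q)$. This reduces the entire problem to tracking how the noisy risk $q(\rho) := L_{\D_{\rho}}(f)$ moves as $\rho$ grows, together with the elementary observation that $g(x) := x(1-x)$ is strictly increasing on $[0,\tfrac12)$.

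Next I would compute $q(\rho)$ explicitly. Conditioning first on a clean draw $(x,y)\sim\D$ and then on the independent flip, the label stays with probability $1-\rho$ and flips to $-y$ with probability $\rho$; since labels are binary, the event $f(x)\neq -y$ coincides with $f(x)=y$. Writing $p_0 := L_{\D}(f)$ for the clean risk, this yields $q(\rho) = (1-\rho)p_0 + \rho(1-p_0) = p_0 + \rho(1-2p_0)$, an affine function of $\rho$ with $q(\tfrac12)=\tfrac12$. To pin down the direction of monotonicity I would invoke the transformation $f\mapsto -f$ described just before the theorem: since $l(-f,z) = 1 - l(f,z)$, replacing $f$ by $-f$ leaves the variance unchanged while sending $p_0$ to $1-p_0$. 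Hence I may assume without loss of generality that $p_0 < \tfrac12$ (equivalently, that the noisy risk stays below $\tfrac12$, which is exactly what the transformation enforces).

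Under $p_0 < \tfrac12$ the slope $1-2p_0$ is strictly positive, so $q$ is strictly increasing in $\rho$ and remains strictly below $\tfrac12$ for all $\rho < \tfrac12$. Composing with the strictly increasing $g$ on $[0,\tfrac12)$ then gives $\sigma^2(f,\D_{\rho_1}) = g(q(\rho_1)) < g(q(\rho_2)) = \sigma^2(f,\D_{\rho_2})$ whenever $\rho_1 < \rho_2$, which is the claim. The same two facts (variance equals $g\circ q$, and $q(\rho)\to\tfrac12$ monotonically) are precisely what I would reuse to deduce Corollary \ref{corollary:maximum_variance} by taking suprema over the respective Rashomon sets.

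The one point that needs care, and the only real friction I anticipate, is the degenerate case $p_0 = \tfrac12$: there $q(\rho)\equiv\tfrac12$, the variance is constant at $\tfrac14$, and the strict inequality collapses to equality. This is where the standing assumption that some $\bar f$ satisfies $L_{\D}(\bar f) < \tfrac12 - \gamma$ earns its keep — it guarantees that the functions genuinely relevant to the statement (and, in the corollary, every function in the true Rashomon set) have risk bounded away from $\tfrac12$, so that $g$ is strictly increasing along the entire range traversed by $q(\rho)$ and the strict inequality is preserved.
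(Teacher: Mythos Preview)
Your proposal is correct and follows the same line as the paper: both write $\sigma^2(f,\D_\rho) = q(\rho)\bigl(1-q(\rho)\bigr)$ with $q(\rho) = (1-2\rho)L_{\D}(f) + \rho$ and then establish monotonicity in $\rho$ under $L_{\D}(f) < \tfrac12$. The only difference is cosmetic --- the paper expands $q(1-q)$ to $(1-2\rho)^2 L_{\D}(f)\bigl(1-L_{\D}(f)\bigr) + \rho(1-\rho)$ and differentiates in $\rho$ directly, whereas you invoke monotonicity of $x\mapsto x(1-x)$ on $[0,\tfrac12)$ composed with the affine increasing $q$; your use of the $f\mapsto -f$ symmetry to reduce to $p_0<\tfrac12$ is arguably cleaner than the paper's appeal to the Rashomon-set bound $L_{\D}(f)\leq L_{\D}(f^*)+\gamma$.
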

\addtocounter{theorem}{-1}
\endgroup

\begin{proof}
 Recall that the true risk for 0-1 loss $L_{\mathcal{D}}(f) = \E_{z = (x,y)\sim \mathcal{D}}[l(f, z)] = \E_{(x,y)\sim \mathcal{D}}[\mathbbm{1}_{[f(x)\neq y]}]$. 
 Without loss of generality, let $y \in \{0,1\}$. 
 Drawing from $z_{\rho}\sim \D_{\rho}$ is equivalent to drawing $z\sim\D$ and changing label $y$ to $1 - y$ with probability $\rho$. More explicitly, let $\eta \sim \textrm{Bernoulli}(\rho)$, then the flipped label is $XOR(y,\eta) = \mathbbm{1}_{[y\neq\eta]}$.
For any given $f \in \F$ we have that:
{\allowdisplaybreaks
\begin{align*}
        L_{\mathcal{D}_{\rho}}(f) 
        &= \E_{\eta \sim Ber(\rho)}\E_{(x,y)\sim \mathcal{D}}\left[\mathbbm{1}_{[f(x)\neq XOR(y,\eta)]}\right] \\
        &= \E_{\eta \sim Ber(\rho)}\E_{(x,y)\sim \mathcal{D}}\left[\mathbbm{1}_{[f(x)\neq y]}(1 - \eta)\right] 
        + \E_{\eta \sim Ber(\rho)}\E_{(x,y)\sim \mathcal{D}}\left[\mathbbm{1}_{[f(x)=y]}\eta\right]\\
        &= \E_{(x,y)\sim \mathcal{D}}\E_{\eta \sim Ber(\rho)}\left[\mathbbm{1}_{[f(x)\neq y]}(1 - \eta)\right] 
        + \E_{(x,y)\sim \mathcal{D}}\E_{\eta \sim Ber(\rho)}\left[\mathbbm{1}_{[f(x)=y]}\eta\right]\\
        &= \E_{(x,y)\sim \mathcal{D}}\left[\mathbbm{1}_{[f(x)\neq y]}\E_{\eta \sim Ber(\rho)}\left[(1 - \eta)\right]\right] 
        + \E_{(x,y)\sim \mathcal{D}}\left[\mathbbm{1}_{[f(x)=y]}\E_{\eta \sim Ber(\rho)}\left[\eta\right]\right]\\
        &= (1 - \rho)\E_{(x,y)\sim \mathcal{D}}\left[\mathbbm{1}_{[f(x)\neq y]}\right] 
        + \rho\E_{(x,y)\sim \mathcal{D}}\left[\mathbbm{1}_{[f(x)=y]}\right]\\
        &= (1 - \rho)\E_{(x,y)\sim \mathcal{D}}\left[\mathbbm{1}_{[f(x)\neq y]}\right] 
        + \rho\left(1-\E_{(x,y)\sim \mathcal{D}}\left[\mathbbm{1}_{[f(x)\neq y]}\right]\right)\\
        &= (1 - \rho)L_{\D}(f)
        + \rho\left(1-L_{\D}(f)\right)\\
        &= (1 - 2\rho)L_{\D}(f)
        + \rho.
\end{align*}}

Note, following the technique above, a similar statement is true about dataset $S$ instead of true distribution $\D$, meaning that for a given $f\in\F$,
\begin{equation}\label{eq:noise_equality_empirical}
\E_{S_\rho}\hat{L}_{S_{\rho}}(f)  = (1 - 2\rho)\hat{L}_{S}(f) + \rho.
\end{equation}

Recall that we take expectation with respect to different ways of adding noise to labels, therefore $S_{\rho}$ and $S$ have the same $x$, but different $y$. We do not use \eqref{eq:noise_equality_empirical} for the proof of Theorem \ref{th:variance_noise}, but use it in Appendix \ref{appendix:proof_diversity_noise}.


For true distribution $\D$, since $l$ is 0-1 loss, then for a given model $f$, $l(f, z)$ is Bernoulli distributed with mean $p_{Ber} = \E_{z \sim \mathcal{D}}l(f,z) = L_{\D}(f)$ and variance  $\sigma_f^2 = p_{Ber} (1 - p_{Ber}) = L_{\D}(f) (1 - L_{\D}(f))$. Therefore, the expected variance for a given model $f\in R_{set}(\F,\gamma)$ on distribution $\D_{\rho}$ is:
{\allowdisplaybreaks
\begin{align*}
       Var_{z\sim \D_{\rho}} \left[l(f, z)\right] &= \E_{\D_{\rho}} L_{\D_{\rho}}(f) (1 - L_{\D_{\rho}}(f))\\
        & = \E_{\D_{\rho}} L_{\D_{\rho}}(f)  - \E_{\D_{\rho}}(L_{\D_{\rho}}(f))^2 \\
        & =\E_{\D_{\rho}} L_{\D_{\rho}}(f)  - \E_{\D_{\rho}}(L_{\D_{\rho}}(f))^2\\
        & = \E_{\D_{\rho}} L_{\D_{\rho}}(f) - (\E_{\D_{\rho}} L_{\D_{\rho}}(f))^2 - Var_{\D_{\rho}}[L_{\D_{\rho}(f)}]\\
        & = L_{\mathcal{D}}(f)(1 - 2\rho) + \rho - \left(L_{\mathcal{D}}(f)(1 - 2\rho) + \rho\right)^2- Var_{\D_{\rho}}[L_{\D_{\rho}(f)}]\\
        & = L_{\mathcal{D}}(f) \left( (1 - 2\rho)- 2\rho(1 - 2\rho)\right) - L^2_{\mathcal{D}}(f)(1 - 2\rho)^2 + \rho-\rho^2- Var_{\D_{\rho}}[L_{\D_{\rho}(f)}]\\
        & = (1 - 2\rho)^2 (L_{\mathcal{D}}(f) - L^2_{\mathcal{D}}(f) ) + \rho-\rho^2- Var_{\D_{\rho}}[L_{\D_{\rho}(f)}]\\
        & = (1 - 2\rho)^2\left(L_{\mathcal{D}}(f) (1 - L_{\mathcal{D}}(f))\right) + \rho(1-\rho)- Var_{\D_{\rho}}[L_{\D_{\rho}(f)}]\\
        &  = (1 - 2\rho)^2\left(L_{\mathcal{D}}(f) (1 - L_{\mathcal{D}}(f))\right) + \rho(1-\rho),
\end{align*}}

Note that, by our assumption, there exists $\bar{f}$ such that $L_{\mathcal{D}}(\bar{f}) < \frac{1}{2}-\gamma$, so $L_{\mathcal{D}}(f^*) < \frac{1}{2}-\gamma$, where $f^*$ is optimal model. Then for any fixed $f\in \F$, we get $L_{\mathcal{D}}(f) \leq L_{\mathcal{D}}(f^*) +\gamma < \frac{1}{2}$ which implies that $L_{\mathcal{D}}(f) (1 - L_{\mathcal{D}}(f)) < \frac{1}{4}$.

For $\rho \in (0, \frac{1}{2})$, $Var_{z\sim \D_{\rho}} \left[l(f, z)\right]$ is monotonically increasing in $\rho$, since:
{\allowdisplaybreaks
\begin{align*}
        \frac{\partial }{\partial \rho} \left[Var_{z\sim \D_{\rho}} \left[l(f, z)\right] \right]&= \frac{\partial }{\partial \rho} \left[(1 - 2\rho)^2\left(L_{\mathcal{D}}(f) (1 - L_{\mathcal{D}}(f))\right) + \rho(1-\rho)\right]\\
        & = -4 (1 - 2\rho)\left(L_{\mathcal{D}}(f) (1 - L_{\mathcal{D}}(f))\right) + (1 - 2\rho) \\
        &= (1 - 2\rho) \left(1 - 4 L_{\mathcal{D}}(f) (1 - L_{\mathcal{D}}(f))\right)\\
        &> \left(1 - 2 \times \frac{1}{2}\right) \left(1 - 4 \times \frac{1}{4}\right) = 0.
\end{align*}}

Consider $\rho_1 < \rho_2$. Since $Var_{z\sim \D_{\rho}} \left[l(f, z)\right]$ is monotonically increasing in $\rho$ for a fixed $f$, then $\sigma^2 (f, \D_{\rho_1}) < \sigma^2 (f, \D_{\rho_2})$, and we proved that variance increases with random uniform label noise.

\end{proof}

In Theorem \ref{th:variance_noise}, the statement of the theorem is correct for any fixed $f\in \F$. Corollary \ref{corollary:maximum_variance} follows directly from Theorem \ref{th:variance_noise}. Here, instead of a fixed model $f \in \F$, we consider models in the Rashomon sets that maximize expected variance.

\begingroup
\def\thetheorem{\ref{corollary:maximum_variance}}
\begin{corollary}[Maximum variance increases with label noise]
\CorollaryMaximumVariance
\end{corollary}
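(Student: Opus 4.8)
The plan is to leverage Theorem \ref{th:variance_noise}, which already gives strict monotonicity of the loss variance in $\rho$ for each \emph{fixed} model, and to combine it with a nestedness property of the Rashomon sets themselves. The only genuinely new issue compared to Theorem \ref{th:variance_noise} is that the two suprema are taken over \emph{different} domains, since $R_{{set}_{\D_{\rho_1}}}(\F,\gamma)$ and $R_{{set}_{\D_{\rho_2}}}(\F,\gamma)$ are Rashomon sets with respect to two different noisy distributions.

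First I would record the affine relation derived inside the proof of Theorem \ref{th:variance_noise}, namely $L_{\D_\rho}(f) = (1-2\rho)L_{\D}(f) + \rho$. Because $1-2\rho>0$ for $\rho\in(0,\tfrac12)$, this is a strictly increasing affine reparametrization of the true risk, so the risk-ordering of models is preserved and the minimizer $f^*$ is the same for every $\rho$. Writing out the membership condition $L_{\D_\rho}(f)\le L_{\D_\rho}(f^*)+\gamma$ and dividing by $1-2\rho$ shows it is equivalent to $L_\D(f)\le L_\D(f^*)+\gamma/(1-2\rho)$. Since $\gamma/(1-2\rho)$ is increasing in $\rho$ on $(0,\tfrac12)$, the Rashomon sets are nested: $R_{{set}_{\D_{\rho_1}}}(\F,\gamma)\subseteq R_{{set}_{\D_{\rho_2}}}(\F,\gamma)$ whenever $\rho_1<\rho_2$. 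Together with the flipping convention (which keeps every considered model at true risk $\le\tfrac12$), this confines all relevant models to the regime $L_\D(f)\le\tfrac12$ where Theorem \ref{th:variance_noise} applies.

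Next I would pick a model $f_1$ attaining (or, in an infinite space, nearly attaining) the left-hand supremum, so that $\sigma^2(f_1,\D_{\rho_1})=\sup_{f\in R_{{set}_{\D_{\rho_1}}}(\F,\gamma)}\sigma^2(f,\D_{\rho_1})$. By nestedness $f_1$ also lies in $R_{{set}_{\D_{\rho_2}}}(\F,\gamma)$, and by Theorem \ref{th:variance_noise} applied to the fixed model $f_1$ we get $\sigma^2(f_1,\D_{\rho_1})<\sigma^2(f_1,\D_{\rho_2})$. Chaining,
\[
\sup_{f\in R_{{set}_{\D_{\rho_1}}}(\F,\gamma)}\sigma^2(f,\D_{\rho_1}) = \sigma^2(f_1,\D_{\rho_1}) < \sigma^2(f_1,\D_{\rho_2}) \le \sup_{f\in R_{{set}_{\D_{\rho_2}}}(\F,\gamma)}\sigma^2(f,\D_{\rho_2}),
\]
which is exactly the claimed strict inequality.

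The main obstacle is the strictness of this final inequality. The chain produces a genuine strict gap only if the supremum is attained, so that the strict fixed-model inequality of Theorem \ref{th:variance_noise} can be invoked at $f_1$; for a finite hypothesis space this is automatic, and more generally one needs an attainment or limiting argument. A related edge case I would rule out is a maximizer with $L_\D(f_1)=\tfrac12$, where the Bernoulli variance equals its ceiling $\tfrac14$ independently of $\rho$, so Theorem \ref{th:variance_noise} yields only equality and the strict inequality could fail. I would eliminate this case using the assumption $L_\D(f^*)<\tfrac12-\gamma$, which bounds how far into the $L_\D=\tfrac12$ region the Rashomon set can reach, together with the flipping convention, ensuring the variance-maximizing model in the set has true risk strictly below $\tfrac12$.
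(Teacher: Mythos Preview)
Your proposal is correct and follows essentially the same chaining argument as the paper: pick a maximizer $f_1^{\sup}$ for the left-hand supremum, apply Theorem \ref{th:variance_noise} at this fixed model to get $\sigma^2(f_1^{\sup},\D_{\rho_1})<\sigma^2(f_1^{\sup},\D_{\rho_2})$, and then bound the latter by the right-hand supremum. Your explicit nestedness argument $R_{{set}_{\D_{\rho_1}}}(\F,\gamma)\subseteq R_{{set}_{\D_{\rho_2}}}(\F,\gamma)$ (via the affine relation $L_{\D_\rho}(f)=(1-2\rho)L_\D(f)+\rho$) actually fills a gap the paper leaves implicit: the paper's final inequality $\sigma^2(f_1^{\sup},\D_{\rho_2})\le \sigma^2(f_2^{\sup},\D_{\rho_2})$ requires $f_1^{\sup}\in R_{{set}_{\D_{\rho_2}}}(\F,\gamma)$, which is never justified there, and your attainment and $L_\D=\tfrac12$ edge-case remarks are likewise more careful than the paper's treatment.
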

\addtocounter{theorem}{-1}
\endgroup

\begin{proof}



Let $f_1^{\sup}$ and $f_2^{\sup}$ be maximizers of the variance of the loss in their respective Rashomon sets:
\[f_1^{\sup} \in \arg \sup_{f\in R_{{set}_{\D_{\rho_1}}}(\F,\gamma)} Var_{z\sim \D_{\rho_1}} \left[l(f, z)\right],\]
\[f_2^{\sup} \in \arg \sup_{f\in R_{{set}_{\D_{\rho_2}}}(\F,\gamma)} Var_{z\sim \D_{\rho_2}} \left[l(f, z)\right].\]
Given that for any $f\in R_{{set}_{\D_{\rho_2}}}(\F,\gamma)$, $Var_{z\sim \D_{\rho_2}} \left[l(f, z)\right] \leq Var_{z\sim \D_{\rho_2}} \left[l(f_2^{\sup}, z)\right]$ and since $Var_{z\sim \D_{\rho}} \left[l(f, z)\right]$ is monotonically increasing in $\rho$, we have that:
{\allowdisplaybreaks
\begin{align*}
   \sup_{f\in R_{{set}_{\D_{\rho_1}}}(\F,\gamma)}  \sigma^2 (f, \D_{\rho_1}) &= Var_{z\sim \D_{\rho_1}} \left[l(f_1^{\sup}, z)\right] \\
    &< Var_{z\sim \D_{\rho_2}} \left[l(f_1^{\sup}, z)\right] \\
    & \leq Var_{z\sim \D_{\rho_2}} \left[l(f_2^{\sup}, z)\right] \\
    & = \sup_{f\in R_{{set}_{\D_{\rho_2}}}(\F,\gamma)}  \sigma^2 (f, \D_{\rho_2}).
\end{align*}}

\end{proof}

Next, we generalize the statement of Theorem \ref{th:variance_noise} to the non-uniform label noise case, where each sample $z = (x,y)$ is flipped with probability $\rho_x$  that depends on $x$. We show that under this non-uniform label noise, the variance of the loss increases in Theorem \ref{th:variance_non_uniform_noise}.

\newcommand{\TheoremVarianceIncreaseNoiseGeneralized}
{
Consider 0-1 loss $l$, infinite true data distribution $\D$, and a hypothesis space $\F$. Assume that there exists at least one function $\bar{f}\in\F$ such that $L_{\mathcal{D}}(\bar{f}) < \frac{1}{2} - \gamma$. For a fixed $f \in \F$, let $\sigma^2 (f, \D)$ be the variance of the loss:
$\sigma^2(f,\D) = Var_{z\sim \D} l(f,z)$ on data distribution $\D$. Consider non-uniform label noise, where each label $y$ is flipped independently with probability $\rho_x$, 
 $(x, y) \sim \D$.
Let $\D_{\rho}$ denote the noisy version of $\D$. For any $\delta >0$, let $\D_{\rho^{\delta}}$ be a  noisier data distribution than $\D_{\rho}$, meaning that for every sample $(x, y)$ the probabilities of labels being flipped are higher by $\delta$: $\rho_x^{\delta}=\rho_x+\delta$. If for a fixed model $f \in \F$, $L_{\D_{\rho^\delta}}(f)<0.5$, then
\begin{equation*}
    \sigma^2 (f, \D_{\rho}) < \sigma^2 (f, \D_{\rho^{\delta}}).
\end{equation*}
}

\begin{theorem}[Variance increases with non-uniform label noise]\label{th:variance_non_uniform_noise}
\TheoremVarianceIncreaseNoiseGeneralized
\end{theorem}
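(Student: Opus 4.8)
The plan is to mirror the structure of the proof of Theorem~\ref{th:variance_noise}, exploiting the fact that for the $0$--$1$ loss the variance is a fixed function of the expected loss. For any distribution $\D'$ and fixed $f$, the random variable $l(f,z)$ with $z\sim\D'$ takes values in $\{0,1\}$, so it is Bernoulli with mean $L_{\D'}(f)$; hence $\sigma^2(f,\D') = L_{\D'}(f)\,(1-L_{\D'}(f)) =: g(L_{\D'}(f))$, where $g(p)=p(1-p)$. The whole statement then reduces to tracking how the scalar $L_{\D_\rho}(f)$ moves as the flip probabilities increase, combined with the elementary fact that $g$ is strictly increasing on $[0,\tfrac12]$.

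First I would compute the effect of the noise on the expected loss. Writing $A=\{f(x)\neq y\}$ and $B=\{f(x)=y\}$, conditioning on the independent flip $\eta_x\sim\mathrm{Bernoulli}(\rho_x)$ exactly as in the uniform proof gives $L_{\D_\rho}(f)=\E_{(x,y)\sim\D}\big[(1-\rho_x)\mathbbm{1}_A+\rho_x\mathbbm{1}_B\big]$. Since the perturbation $\rho_x^\delta=\rho_x+\delta$ shifts every flip probability by the same constant $\delta$, the $x$-dependence cancels in the difference and I obtain the clean identity
\begin{equation*}
L_{\D_{\rho^\delta}}(f)-L_{\D_\rho}(f)=\delta\,\E_{(x,y)\sim\D}\big[\mathbbm{1}_B-\mathbbm{1}_A\big]=\delta\,(1-2L_{\D}(f)).
\end{equation*}
This is the non-uniform analogue of the explicit $(1-2\rho)^2 g(L_\D(f))+\rho(1-\rho)$ formula from Theorem~\ref{th:variance_noise}; here no closed form in $L_\D(f)$ alone is available, but this difference identity is all that is needed.

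Next I would pin down the signs. The assumption that some $\bar f$ satisfies $L_{\D}(\bar f)<\tfrac12-\gamma$ forces $L_{\D}(f^*)<\tfrac12-\gamma$, so for $f$ in the true Rashomon set $R_{set}(\F,\gamma)$ we have $L_{\D}(f)\le L_{\D}(f^*)+\gamma<\tfrac12$ (the same step invoked in Theorem~\ref{th:variance_noise}). Hence the slope $1-2L_{\D}(f)$ is strictly positive and $L_{\D_\rho}(f)<L_{\D_{\rho^\delta}}(f)$. Combining this with the hypothesis $L_{\D_{\rho^\delta}}(f)<\tfrac12$ sandwiches both losses: $L_{\D_\rho}(f)<L_{\D_{\rho^\delta}}(f)<\tfrac12$. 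Applying strict monotonicity of $g$ on $[0,\tfrac12]$ then yields $\sigma^2(f,\D_\rho)=g(L_{\D_\rho}(f))<g(L_{\D_{\rho^\delta}}(f))=\sigma^2(f,\D_{\rho^\delta})$, which is the claim.

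The step I expect to be the main obstacle is handling the geometry of $g$: because $g$ is monotone only on $[0,\tfrac12]$ and symmetric about its peak at $\tfrac12$, the result is \emph{false} in general without controlling which side of $\tfrac12$ the two losses lie on. If $L_\D(f)>\tfrac12$ the slope flips sign, and adding noise could push the loss across $\tfrac12$ and down the far side of the peak, so that the variance actually \emph{decreases}. This is exactly why both the $\bar f$ assumption (ensuring $L_\D(f)<\tfrac12$, so noise moves the loss \emph{upward} toward the peak) and the explicit hypothesis $L_{\D_{\rho^\delta}}(f)<\tfrac12$ (ensuring we do not overshoot the peak) are required; the bulk of the care in the write-up is in invoking these two conditions to keep both losses in the increasing region of $g$.
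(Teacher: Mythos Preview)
Your proposal is correct and follows essentially the same approach as the paper: compute $L_{\D_{\rho^\delta}}(f)-L_{\D_\rho}(f)=\delta(1-2L_\D(f))$, use the $\bar f$ assumption via Rashomon-set membership to get $L_\D(f)<\tfrac12$ and hence a strict increase in loss, then invoke monotonicity of $p\mapsto p(1-p)$ on $[0,\tfrac12]$ with the hypothesis $L_{\D_{\rho^\delta}}(f)<\tfrac12$. Your observation that the theorem as stated is for $f\in\F$ while the argument actually needs $f\in R_{set}(\F,\gamma)$ to control the sign of $1-2L_\D(f)$ is accurate and matches what the paper's own proof silently does.
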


\begin{proof}

Recall that the true risk for 0-1 loss $L_{\mathcal{D}}(f) = \E_{z = (x,y)\sim \mathcal{D}}[l(f, z)] = \E_{(x,y)\sim \mathcal{D}}[\mathbbm{1}_{[f(x)\neq y]}]$. 
Without loss of generality, let $y \in \{0,1\}$. 
Drawing from $z_{\rho}\sim \D_{\rho}$ is equivalent to drawing $z\sim\D$ and changing label $y$ to $1 - y$ with probability $\rho_x$. More explicitly, let $\eta \sim \textrm{Bernoulli}(\rho_x)$, then the flipped label is $XOR(y,\eta) = \mathbbm{1}_{[y\neq\eta]}$.
For any given $f \in \F$ we have that:
{\allowdisplaybreaks
\begin{align*}
    L_{\mathcal{D}_{\rho}}(f) 
    &= \E_{(x,y)\sim \mathcal{D}}\E_{\eta \sim Ber(\rho_x)}\left[\mathbbm{1}_{[f(x)\neq XOR(y,\eta)]}\right] \\
    &= \E_{(x,y)\sim \mathcal{D}}\E_{\eta \sim Ber(\rho_x)}\left[\mathbbm{1}_{[f(x)\neq y]}(1 - \eta)\right] 
    + \E_{(x,y)\sim \mathcal{D}}\E_{\eta \sim Ber(\rho_x)}\left[\mathbbm{1}_{[f(x)=y]}\eta\right]\\
    &= \E_{(x,y)\sim \mathcal{D}}\left[\mathbbm{1}_{[f(x)\neq y]}\E_{\eta \sim Ber(\rho_x)}\left[(1 - \eta)\right]\right] 
    + \E_{(x,y)\sim \mathcal{D}}\left[\mathbbm{1}_{[f(x)=y]}\E_{\eta \sim Ber(\rho_x)}\left[\eta\right]\right]\\
    &= \E_{(x,y)\sim \mathcal{D}}\left[\mathbbm{1}_{[f(x)\neq y]}\E_{\eta \sim Ber(\rho_x)}\left[(1 - \eta)\right]\right] 
    + \E_{(x,y)\sim \mathcal{D}}\left[ \left(1 - \mathbbm{1}_{[f(x)\neq y]}\right)\E_{\eta \sim Ber(\rho_x)}\left[\eta\right]\right]\\
    &= \E_{(x,y)\sim \mathcal{D}}\left[\mathbbm{1}_{[f(x)\neq y]}\E_{\eta \sim Ber(\rho_x)}\left[(1 - \eta)\right]\right] 
    + \E_{(x,y)\sim \mathcal{D}}\left[\E_{\eta \sim Ber(\rho_x)}\left[\eta\right]-\mathbbm{1}_{[f(x)\neq y]}\E_{\eta \sim Ber(\rho_x)}\left[\eta\right]\right]\\
    &= \E_{(x,y)\sim \mathcal{D}}\left[\mathbbm{1}_{[f(x)\neq y]}\E_{\eta \sim Ber(\rho_x)}\left[(1 - 2\eta)\right]\right] 
    + \E_{(x,y)\sim \mathcal{D}}\left[\E_{\eta \sim Ber(\rho_x)}\left[\eta\right]\right]\\
    & = \E_{(x,y)\sim \mathcal{D}}\left[\mathbbm{1}_{[f(x)\neq y]}\left(1 - 2\rho_x \right)\right] 
    + \E_{(x,y)\sim \mathcal{D}}\rho_x.
\end{align*}}

Now we will show that $L_{\mathcal{D}_{\rho^{\delta}}}(f) > L_{\mathcal{D}_{\rho}}(f)$:
{\allowdisplaybreaks
\begin{align*}
    L_{\mathcal{D}_{\rho^{\delta}}}(f) - L_{\mathcal{D}_{\rho}}(f) 
    &= \E_{(x,y)\sim \mathcal{D}}\left[\mathbbm{1}_{[f(x)\neq y]}\left(1 - 2\rho^{\delta}_x \right)\right] 
    + \E_{(x,y)\sim \mathcal{D}}\rho^{\delta}_x \\
    & - \E_{(x,y)\sim \mathcal{D}}\left[\mathbbm{1}_{[f(x)\neq y]}\left(1 - 2\rho_x \right)\right] 
    + \E_{(x,y)\sim \mathcal{D}}\rho_x\\
    & = \E_{(x,y)\sim \mathcal{D}}\left[\mathbbm{1}_{[f(x)\neq y]}\left(-2\rho^{\delta}_x + 2\rho_x \right)\right] 
    + \E_{(x,y)\sim \mathcal{D}}\left(\rho^{\delta}_x - \rho_x\right)\\
    & = \E_{(x,y)\sim \mathcal{D}}\left[\mathbbm{1}_{[f(x)\neq y]}\left(-2\delta \right)\right] 
    + \E_{(x,y)\sim \mathcal{D}}\left(\delta\right)\\
    & = (-2\delta)\E_{(x,y)\sim \mathcal{D}}\left[\mathbbm{1}_{[f(x)\neq y]}\right] 
    + \delta\\
    & = \delta(1-2L_{\D}(f)) > 0.
\end{align*}}

Note that, by our assumption, there exists $\bar{f}$ such that $L_{\mathcal{D}}(\bar{f}) < \frac{1}{2}-\gamma$, so $L_{\mathcal{D}}(f^*) < \frac{1}{2}-\gamma$, where $f^*$ is an optimal model. Then for any fixed $f\in R_{set}(\F,\gamma)$, we get $L_{\mathcal{D}}(f) \leq L_{\mathcal{D}}(f^*) +\gamma < \frac{1}{2}$, and  then $1-2L_{\D}(f) > 0$. Since $\delta > 0$, we have shown that $L_{\mathcal{D}_{\rho^{\delta}}}(f) > L_{\mathcal{D}_{\rho}}(f)$.

For true distribution $\D$, since $l$ is 0-1 loss, then for a given model $f$, $l(f, z)$ is Bernoulli distributed with mean $p_{Ber} = \E_{z \sim \mathcal{D}}l(f,z) = L_{\D}(f)$ and variance  $\sigma_f^2 = p_{Ber} (1 - p_{Ber}) = L_{\D}(f) (1 - L_{\D}(f))$. Therefore, the expected variance for a given model $f\in \F$ on distributions $\D_{\rho}$ and $\D_{\rho^{\delta}}$ is:
{\allowdisplaybreaks
\begin{align*}
       Var_{z\sim \D_{\rho}} \left[l(f, z)\right] 
       &= L_{\D_{\rho}}(f) (1 - L_{\D_{\rho}}(f))\\
       &<  L_{\D_{\rho^{\delta}}}(f) (1 - L_{\D_{\rho^{\delta}}}(f)) \\
       &=Var_{z\sim \D_{\rho^{\delta}}} \left[l(f, z)\right], 
\end{align*}}where the inequality arises from the fact that the parabola $x(1-x)$ is monotonic along the interval $x \in [0,0.5]$. This implies that $\sigma^2 (f, \D_{\rho}) < \sigma^2 (f, \D_{\rho^{\delta}})$.

\end{proof}

\newcommand{\TheoremMarginNoise}
{
Consider data distribution $\mathcal{D} = \mathcal{X}\times \mathcal{Y}$, where, $\mathcal{X}\in\R$, $\mathcal{Y}\in\{-1,1\}$, and classes are balanced $P(Y=-1)=P(Y=1)$ and generated by Gaussian distributions $P(X|Y=-1)=\mathcal{N}(0,\sigma_{\mathcal{N}}^2)$,  $P(X|Y=1)=\mathcal{N}(\mu,\sigma_{\mathcal{N}}^2)$, where $0 <\mu_2$. Let the hypothesis space be $\F = \{f: f\in (\beta_1, \beta_2) \}$, where $ (0, \mu) \subset (\beta_1, \beta_2)$,  $\beta_1 \ll 0$, and $\mu \ll \beta_2$. We add margin noise by moving the means of the Gaussians towards each other by $\delta$ ($0<\delta <\mu$), meaning that mean of the positive class becomes $\mu_{\delta} = \mu - \delta$. For a fixed $f\in R_{set}(\mathcal{F},\theta)$, if $L_{\mu_{\delta}}(f) < 0.5$, we get that variance of losses increases with more noise,
\[\sigma(f,\mu) < \sigma (f, \mu_{\delta}).\]
}

We also show that the variance of losses increases under margin noise for data that come from Gaussian distributions (in Theorem \ref{th:variance_margin_noise}). We model margin noise by moving two Gaussians closer together along the vector that connects the two means. 
Before stating and proving the theorem we discuss two lemmas that are helpful for the proof.

\begin{lemma}\label{lemma:margin_noise}
    Consider distribution $\mathcal{X} \in \R^m$ and a linear model $f=\omega^Tx+b$, where $\omega \in \R^m$, $\omega \neq \bar{0}$ and $b \in \R$. Let $x \mapsto Ax+c$ be a bijective affine transformation, where $A \in \R^{m \times m}$ and $c \in \R^m$. For the linear model $g(x)=f(A^{-1}(x-c))$ and the distribution $\mathcal{Z}=A\mathcal{X}+c$, we have that:
    \[P_{x \sim \mathcal{X}}(f(x)>0)=P_{z \sim \mathcal{Z}}(g(z)>0).\]
\end{lemma}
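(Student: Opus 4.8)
The plan is to treat this as a change-of-variables (pushforward) statement: the distribution $\mathcal{Z}=A\mathcal{X}+c$ is exactly the law of the random vector $z=Ax+c$ when $x\sim\mathcal{X}$, and since the affine map is bijective (equivalently $A$ is invertible), this sets up a measure-preserving correspondence between events. First I would record this observation precisely: writing $\phi(x)=Ax+c$, the distribution $\mathcal{Z}$ is the pushforward of $\mathcal{X}$ under $\phi$, so for any measurable event $E\subseteq\R^m$ we have $P_{z\sim\mathcal{Z}}(z\in E)=P_{x\sim\mathcal{X}}(\phi(x)\in E)$.

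The second step is a short algebraic identity showing that $g$ composed with $\phi$ recovers $f$. Substituting $z=\phi(x)=Ax+c$ into the definition $g(z)=f(A^{-1}(z-c))$ gives
\[
g(\phi(x))=f\bigl(A^{-1}((Ax+c)-c)\bigr)=f\bigl(A^{-1}Ax\bigr)=f(x),
\]
using only that $A^{-1}A$ is the identity. Thus the half-space event $\{z:g(z)>0\}$ pulls back under $\phi$ to exactly $\{x:f(x)>0\}$.

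Combining the two steps finishes the argument: taking $E=\{z:g(z)>0\}$ in the pushforward identity and using that $g(\phi(x))>0 \iff f(x)>0$ yields $P_{z\sim\mathcal{Z}}(g(z)>0)=P_{x\sim\mathcal{X}}(g(\phi(x))>0)=P_{x\sim\mathcal{X}}(f(x)>0)$, which is the claim.

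There is no genuine obstacle here beyond bookkeeping; the one point requiring care is that bijectivity of the affine map (equivalently, invertibility of $A$) is what makes $g$ well-defined via $A^{-1}$ and guarantees that the two half-space events correspond exactly rather than merely by inclusion. I would also emphasize that nothing about $\mathcal{X}$ being Gaussian or about $f$ having any particular offset $b$ is used, so the lemma holds for an arbitrary distribution; this generality is precisely what makes it apply cleanly in the margin-noise argument of Theorem \ref{th:variance_margin_noise}, where the relevant transformation is the shift that moves the two class means toward one another.
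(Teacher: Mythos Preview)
Your proof is correct and follows essentially the same approach as the paper's: both use the pushforward identity $P_{z\sim\mathcal{Z}}(z\in E)=P_{x\sim\mathcal{X}}(Ax+c\in E)$ and then verify algebraically that $g(Ax+c)=f(x)$, which is precisely the paper's one-line chain of equalities. Your version is simply more explicit about the pushforward step and the role of invertibility; one small aside is that in the paper the lemma is actually invoked to normalize the covariance (reducing $\Sigma$ to $I$), not to implement the mean shift, but this does not affect the proof itself.
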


\begin{proof}
    The proof follows from the lemma's statement and the assumption that $A$ is a bijective affine transformation, and thus is invertible:
    \[P_{z \sim \mathcal{Z}}(g(z)>0)
    =P_{x \sim \mathcal{X}}(g(Ax+c)>0)
    =P_{x \sim \mathcal{X}}(f(A^{-1}(Ax+c-c))>0)
    =P_{x \sim \mathcal{X}}(f(x)>0).\]
\end{proof}


\begin{lemma}\label{lemma:boundary_rotation}
    Consider a Gaussian distribution $\mathcal{X} \sim \mathcal{N}(\mu, I)$, where $\mu \in \R^m$, and  a linear model $f=\omega^Tx+b$, where $\omega \in \R^m$, $\omega \neq \bar{0}$ and $b \in \R$. Let $r=\frac{\omega^T\mu+b}{\norm{\omega}}$ be the signed distance from $\mu$ to the decision boundary of $f$. Then,
    \[P_{x \sim \mathcal{X}}(f(x)>0)=\Phi(r),\]
    where $\Phi$ is the CDF of the univariate normal distribution $\mathcal{N}(0,1)$.
\end{lemma}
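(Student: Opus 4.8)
The statement to prove is Lemma~\ref{lemma:boundary_rotation}: for a Gaussian $\mathcal{X}\sim\mathcal{N}(\mu,I)$ and a linear model $f=\omega^Tx+b$, the probability $P_{x\sim\mathcal{X}}(f(x)>0)$ equals $\Phi(r)$ with $r=\frac{\omega^T\mu+b}{\norm{\omega}}$. The plan is to reduce the multivariate event $\{f(x)>0\}$ to a one-dimensional normal tail probability by tracking what the linear functional $\omega^Tx+b$ does to the Gaussian. The cleanest route is to observe that $\omega^T x$ is itself a univariate Gaussian random variable and compute its mean and variance directly.

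First I would set $W=\omega^Tx+b$ where $x\sim\mathcal{N}(\mu,I)$. Since an affine functional of a Gaussian vector is Gaussian, $W$ is a one-dimensional normal random variable. Its mean is $\E[W]=\omega^T\mu+b$, and its variance is $\Var(W)=\omega^T\,\Var(x)\,\omega=\omega^T I\,\omega=\norm{\omega}^2$, using that the covariance of $x$ is the identity. Hence $W\sim\mathcal{N}(\omega^T\mu+b,\,\norm{\omega}^2)$. Then
\[
P_{x\sim\mathcal{X}}(f(x)>0)=P(W>0)=P\left(\frac{W-(\omega^T\mu+b)}{\norm{\omega}}>\frac{-(\omega^T\mu+b)}{\norm{\omega}}\right).
\]
The standardized variable on the left is $\mathcal{N}(0,1)$, so this equals $1-\Phi(-r)=\Phi(r)$ by the symmetry of the standard normal CDF, where $r=\frac{\omega^T\mu+b}{\norm{\omega}}$ as defined. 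This is exactly the claimed identity, and the assumption $\omega\neq\bar{0}$ guarantees $\norm{\omega}>0$ so that the division is well-defined.

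Alternatively, one could invoke Lemma~\ref{lemma:margin_noise} to apply a bijective affine change of coordinates that sends the Gaussian to standard form and aligns $\omega$ with a coordinate axis, reducing the problem to a single marginal; but the direct computation above is shorter and self-contained, so I would prefer it. I do not anticipate a genuine obstacle here: the only points requiring care are confirming that an affine image of a Gaussian is again Gaussian (a standard fact) and correctly handling the sign when standardizing, i.e.\ remembering to use $1-\Phi(-r)=\Phi(r)$ rather than writing $\Phi(-r)$. The result is essentially the geometric statement that a linear classifier's accuracy on an isotropic Gaussian depends only on the signed distance $r$ from the mean to the decision hyperplane, which is precisely what the subsequent margin-noise argument will exploit.
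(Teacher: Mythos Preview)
Your proof is correct, and in fact shorter than the paper's. The paper takes precisely the alternative route you mention: it builds an orthogonal matrix $O$ whose first row is $\omega/\norm{\omega}$, sets $\mathcal{Z}=O(\mathcal{X}-\mu)\sim\mathcal{N}(\bar{0},I)$, invokes Lemma~\ref{lemma:margin_noise} with the affine map $x\mapsto O(x-\mu)$, and then reads off the probability from the first coordinate $z_1\sim\mathcal{N}(0,1)$. Your approach bypasses the explicit rotation entirely by using the standard fact that an affine functional of a Gaussian vector is Gaussian and computing the mean and variance of $W=\omega^Tx+b$ directly; this is more elementary and avoids the dependence on Lemma~\ref{lemma:margin_noise}. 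The paper's construction has the mild pedagogical advantage of making the geometric picture (rotating so the hyperplane is axis-aligned) explicit, which matches the illustration in Figure~\ref{fig:lemma14}, but analytically your argument is cleaner.
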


\begin{proof}
    Let $O \in \R^{m \times m}$ be a matrix with the first row equal to $\frac{\omega}{\norm{\omega}}$, and let the other rows be chosen so that the rows of $O$ form an orthonormal basis of $\R^m$. Note that $O$ is an orthogonal matrix, so $O$ is bijective and $O^TO=OO^T=I$. 
    Let $g(t)=f(O^{-1}(t+O\mu))$ and $e_1$ be a unit vector $e_1 = \{1,0,...,0\}$, then:
    {\allowdisplaybreaks
    \begin{align*}
    g(t)
    &=f(O^{-1}(t+O\mu)) = f(O^{-1}t+\mu)
    =f(O^Tt+\mu)\\
    &=\omega^TO^Tt+\omega^T\mu+b
    =\norm{\omega}(e_1^Tt)+\omega^T\mu+b \\
    &= \norm{\omega} t_1+\omega^T\mu+b,
    \end{align*}
    }where $t_1$ is the first element of $t$, and $\omega^TO^T = \norm{\omega}e_1^T$ comes from the fact that $\omega$ is orthogonal to every row of $O$ except for the first row. Note that $g(t) > 0$ when $\norm{\omega} t_1+\omega^T\mu+b > 0$, which leads to $t_1 > -\frac{\omega^T\mu+b}{\norm{\omega}} = -r$. Correspondingly, $g(t) < 0$ when $t_1 < -r$.
    
    Now, let $\mathcal{Z}=O(\mathcal{X}-\mu)$. From the properties of the normal distribution, $\mathcal{Z}\sim \mathcal{N}(\bar{0},I)$ since:
    \[\mathcal{Z}
    =O(\mathcal{X}-\mu) 
    \sim \mathcal{N}(O(\mu-\mu), OIO^T)
    =\mathcal{N}(\bar{0},I).\]
    Moreover, since the standard multivariate normal distribution is the joint distribution of independent univariate normal distributions, $z_1 \sim \mathcal{N}(0,1)$.
    
    From Lemma \ref{lemma:margin_noise} and definitions of $O$, $g$, $\mathcal{Z}$, we get that $P_{x \sim \mathcal{X}}(f(x)>0)=P_{z \sim \mathcal{Z}}(g(z)>0)$. 
    Therefore:
    {\allowdisplaybreaks
    \begin{align*}
        P_{x \sim \mathcal{X}}(f(x)>0)&=P_{z \sim \mathcal{Z}}(g(z)>0)=P_{z \sim \mathcal{Z}}(z_1>-r)\\
    &=P_{z_1 \sim \mathcal{N}(0,1)}(z_1>-r) = P_{z_1 \sim \mathcal{N}(0,1)}(z_1\leq r)\\
    &=\Phi(r),
    \end{align*}
    }where the strict inequality becomes non-strict since for the Gaussian distribution, the probability $P_{z_1 \sim \mathcal{N}(0,1)}(z_i = r) = 0$. 
    Thus, $P_{x \sim \mathcal{X}}(f(x)>0)=\Phi(r)$ as desired.
\end{proof}

Now, we show that the variance of losses increases under margin noise in the theorem below:

\begin{theorem}\label{th:variance_margin_noise}
Consider data distribution $\mathcal{D} = \mathcal{X}\times \mathcal{Y}$, where, $\mathcal{X}\in\R^{ m}$, $\mathcal{Y}\in\{-1,1\}$,  classes are balanced $P(Y=-1)=P(Y=1)$ and generated by Gaussian distributions $P(X|Y=-1)=\mathcal{N}(\bar{0},\Sigma)$,  $P(X|Y=1)=\mathcal{N}(\mu,\Sigma)$, where $\Sigma$ is a diagonal matrix with non-zero elements. 
Let the hypothesis space $\F$ be the set of linear models, $f = \omega ^T x + b$, where $\omega \in \mathbb{R}^m$. $\omega \neq 0$ and $b\in \mathbb{R}$. We add margin noise by moving the means of the Gaussians towards each other by a factor of $k$, where $0<k<1$, meaning that the mean of the positive class becomes $\mu_k = k \cdot \mu$. For a fixed $f\in \F$, if $L_{\mu}(f) < 0.5$, we get that the variance of losses increases with more noise,
\[\sigma(f,\mu) < \sigma (f, \mu_k).\]
\end{theorem}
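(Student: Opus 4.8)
The plan is to reduce the problem to the identity-covariance case via the two preceding lemmas, write the risk $L_\mu(f)$ in terms of the standard normal CDF $\Phi$, and then exploit the fact (established in the proof of Theorem \ref{th:variance_noise}) that for $0$-$1$ loss the variance of the loss equals $L(f)(1-L(f))$, which is strictly increasing on $[0,\tfrac12]$. Hence it suffices to show that contracting the positive mean from $\mu$ to $k\mu$ strictly increases the risk while keeping it below $\tfrac12$.

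First I would whiten the covariance. Since $\Sigma$ is diagonal with nonzero (positive) entries, $\Sigma^{-1/2}$ is a well-defined invertible map, so applying Lemma \ref{lemma:margin_noise} with $A=\Sigma^{-1/2}$ and $c=\bar 0$ turns the two conditional Gaussians into $\mathcal{N}(\bar 0, I)$ and $\mathcal{N}(\nu, I)$ with $\nu = \Sigma^{-1/2}\mu$, turns the model into $g(z)=\tilde\omega^T z + b$ with $\tilde\omega = \Sigma^{1/2}\omega \neq \bar 0$, and preserves the relevant probabilities. The margin-noise scaling is preserved as well: $\mu\mapsto k\mu$ becomes $\nu\mapsto k\nu$, and the negative mean stays at $\bar 0$. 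Applying Lemma \ref{lemma:boundary_rotation} separately to each class (misclassification of the negative class is $\{g>0\}$, of the positive class is $\{g<0\}$) yields
\[L_\mu(f) = \tfrac12\left[\Phi(c) + \Phi(-(s+c))\right],\]
where $c = b/\norm{\tilde\omega}$ comes from the negative mean and $s = \tilde\omega^T\nu/\norm{\tilde\omega}$ from the positive mean. Under margin noise $c$ is unchanged while $s$ is scaled to $ks$, giving $L_{\mu_k}(f) = \tfrac12[\Phi(c) + \Phi(-(ks+c))]$.

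The key step is to extract the sign of $s$ from the hypothesis $L_\mu(f)<\tfrac12$. Using $\Phi(-c)=1-\Phi(c)$ and strict monotonicity of $\Phi$, the inequality $\Phi(c)+\Phi(-(s+c))<1$ is equivalent to $\Phi(-(s+c))<\Phi(-c)$, hence to $s>0$. Then, since $0<k<1$ gives $ks<s$, we have $-(ks+c)>-(s+c)$, so $L_\mu(f)<L_{\mu_k}(f)$; and because $ks>0$ the same comparison against $c$ shows $L_{\mu_k}(f)<\tfrac12$. Thus both risks lie in $[0,\tfrac12)$ with $L_\mu(f)<L_{\mu_k}(f)$.

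Finally, recalling that the variance of the $0$-$1$ loss is $L(f)(1-L(f))$ and that $p\mapsto p(1-p)$ is strictly increasing on $[0,\tfrac12]$, I conclude $\sigma(f,\mu)=L_\mu(f)(1-L_\mu(f))<L_{\mu_k}(f)(1-L_{\mu_k}(f))=\sigma(f,\mu_k)$ (and the same follows for the standard deviation by monotonicity of the square root). The main obstacle is the middle step: correctly recognizing that the assumption $L_\mu(f)<\tfrac12$ precisely pins down the sign of $s$, which is exactly what dictates whether the $k$-contraction of the positive mean pushes the risk up toward $\tfrac12$ (rather than down away from it), so that the monotonicity of the variance parabola can be invoked in the right direction.
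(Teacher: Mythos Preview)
Your proposal is correct and follows essentially the same route as the paper: whiten the covariance via Lemma~\ref{lemma:margin_noise}, express the risk through $\Phi$ via Lemma~\ref{lemma:boundary_rotation}, use $L_\mu(f)<\tfrac12$ to deduce $s>0$ (equivalently $\omega^T\mu>0$), conclude the risk strictly increases under the contraction, and finish with the monotonicity of $p\mapsto p(1-p)$ on $[0,\tfrac12]$. The only (minor) differences are that the paper differentiates in $k$ whereas you compare $\Phi(-(ks+c))$ and $\Phi(-(s+c))$ directly, and you additionally verify $L_{\mu_k}(f)<\tfrac12$, a step the paper's proof uses implicitly but does not spell out.
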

\begin{proof}

Without loss of generality, we will show that the variance of the losses increases for data generated from two Gaussian distributions $P(X|Y=-1)=\mathcal{N}(\bar{0}, I)$ and  $P(X|Y=1)=\mathcal{N}(\mu, I)$ (where $I$ is the identity matrix) when we move them towards each other. More specifically, since normalization by variance $\left(\frac{1}{\Sigma_{i, i}}\right)$ is a bijective linear transformation, by Lemma \ref{lemma:margin_noise} we can work with $P(X|Y=-1)=\mathcal{N}(\bar{0}, I)$ and  $P(X|Y=1)=\mathcal{N}(\mu, I)$ instead of $P(X|Y=-1)=\mathcal{N}(\bar{0}, \Sigma)$ and  $P(X|Y=1)=\mathcal{N}(\mu, \Sigma)$. 

\begin{figure}[ht]
    \centering
    \includegraphics[width =0.9\textwidth]{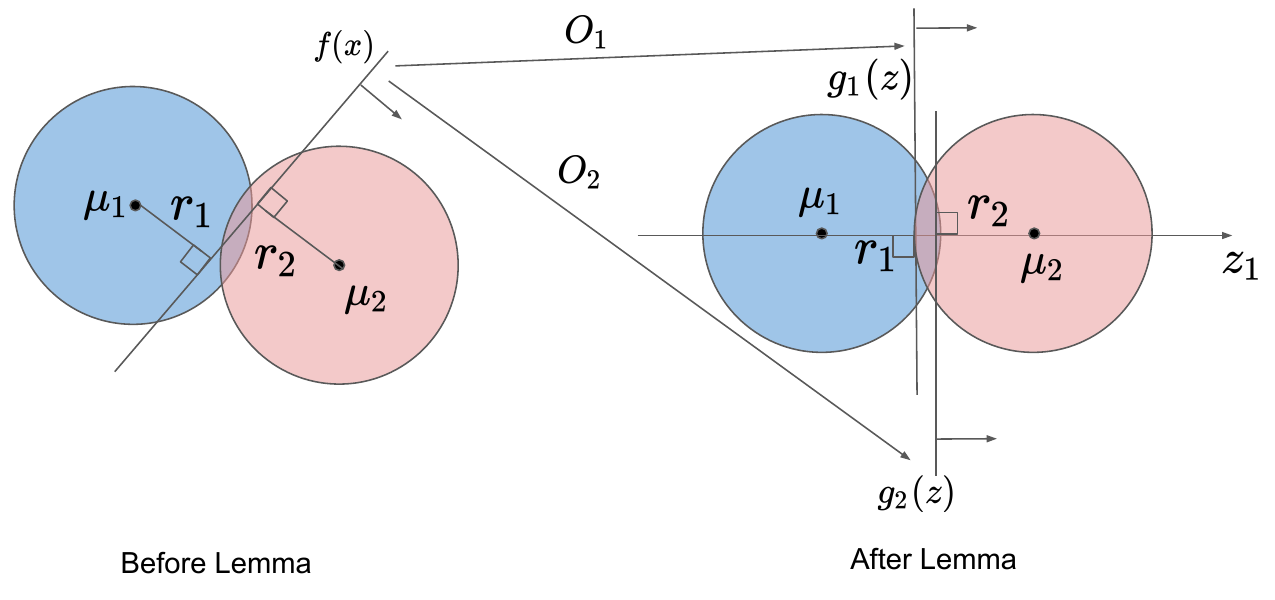}
    \caption{An illustration of how Lemma \ref{lemma:boundary_rotation} rotates each of the Gaussians $\mathcal{N}(\mu_1, I)$, $\mathcal{N}(\mu_2, I)$ and the decision boundary $f(x)$ in order to compute loss as CDF of the signed distances ($r_1, r_2$) from means ($\mu_1,\mu_2$) to the rotated boundaries ($g_1(z), g_2(z)$). Note that we apply Lemma \ref{lemma:boundary_rotation} separately to each Gaussian, thus there are two rotation operators $O_1$, and $O_2$.}
    \label{fig:lemma14}
\end{figure}

Let $r_1=\frac{b}{\norm{\omega}}$ and $r_2=\frac{\omega^T\mu+b}{\norm{\omega}}$ be the signed distances from the centers of the two Gaussians to the decision boundary. Then, from Lemma \ref{lemma:boundary_rotation} (see illustration in Figure \ref{fig:lemma14}), the loss can be computed using the CDFs based on the signed distance:
{\allowdisplaybreaks
\begin{align*}
    L_{\mu}(f)& = P(f(x)>0 | Y = -1)P(Y = -1) + P(f(x)\leq 0 | Y = 1)P(Y = 1)\\
    & = \frac{1}{2}P(f(x)>0 | Y = -1) + \frac{1}{2}\left(1 -P(f(x)>0 | Y = 1)\right)\\
    &= \frac{1}{2}\left[(\Phi(r_1) + (1-\Phi(r_2))\right].
\end{align*}
}

Next, we will show that $\omega^T\mu > 0$. If $L_{\mu}(f)<\frac{1}{2}$, then we get that $\frac{1}{2}[(\Phi(r_1) + (1-\Phi(r_2))]<\frac{1}{2}$, which means that  $\Phi(r_2)>\Phi(r_1)$. Since the CDF of the Gaussian distribution $\mathcal{N}(\bar{0}, I)$ is strictly increasing, we have that $r_2>r_1$, which means that $\frac{\omega^T\mu+b}{\norm{\omega}}>\frac{b}{\norm{\omega}}$, and so $\omega^T\mu > 0$.

Recall that we induce noise by moving the Gaussians towards each other by decreasing $k$. Now we will show that loss is monotonically decreasing with respect to increasing values of $k$, or equivalently that $ \frac{\partial}{\partial k}L_{\mu_k}(f) < 0$:
{\allowdisplaybreaks
\begin{align*}
    \frac{\partial}{\partial k}L_{\mu_k}(f) &= \frac{\partial}{\partial k}\left(\frac{1}{2}\left[(\Phi(r_1) + (1-\Phi(r_2))\right]\right)\\
    & = \frac{\partial}{\partial k}\left(\frac{1}{2}\left[(\Phi\left(\frac{b}{\norm{\omega}}\right) + 1-\Phi\left(\frac{k\omega^T\mu+b}{\norm{\omega}}\right)\right]\right)\\
    &= -\frac{1}{2}\left[\frac{\partial}{\partial k}\Phi\left(\frac{k\omega^T\mu+b}{\norm{\omega}}\right)\right] = -\frac{1}{2}\frac{\omega^T\mu}{\norm{\omega}}\phi\left(\frac{k\omega^T\mu+b}{\norm{\omega}}\right)< 0,
\end{align*}
}since as we showed above, $\omega^T\mu > 0$, and $\phi$ is the PDF of normal distribution $\mathcal{N}(\bar{0}, I)$ which is always positive. Therefore, $L_{\mu_k}(f)$ is monotonically decreasing with respect to $k$, and we have that $L_{\mu}(f)<L_{\mu_k}(f)$ for all $0<k<1$.

For the true distribution $\D$, since $l$ is 0-1 loss, then for a given model $f$, $l(f, z)$ is Bernoulli distributed with mean $p_{Ber} = \E_{z \sim \mathcal{D}}l(f,z) = L_{\D}(f)$ and variance  $\sigma_f^2 = p_{Ber} (1 - p_{Ber}) = L_{\D}(f) (1 - L_{\D}(f))$. Therefore, the expected variance for a given model $f\in R_{set}(\F,\gamma)$ on distributions $\D_{\mu}$ and $\D_{\mu_k}$ obeys:
{\allowdisplaybreaks
\begin{align*}
       \sigma^2 (f, \mu)&= L_{\mu}(f) (1 - L_{\mu}(f))\\
       & < L_{\mu_k}(f) (1 - L_{\mu_k}(f)) \\
       &=\sigma^2 (f, \mu_k), 
\end{align*}
}where the inequality arises from the fact that the parabola $x(1-x)$ is monotonically increasing along the interval $x \in [0,0.5]$, and $\mu_k = k\mu$ is closer to $\bar{0}$ than $\mu$. 
\end{proof}

Note, that we can generalize Theorem \ref{th:variance_margin_noise} to the case when $\Sigma$ is any positive-definite matrix that is not necessarily diagonal (covariance matrices are always positive semi-definite, and we now additionally assume that $\Sigma$ does not have zero eigenvalues). 
Since $\Sigma$ is real and symmetric, by the spectral theorem, there exists an orthogonal matrix $Q \in \R^{m \times m}$ such that $D=Q\Sigma Q^T$ where $D$ is diagonal and contains eigenvalues of $\Sigma$. The diagonal elements of $D$ must be real and positive since $\Sigma$ is positive-definite. Then, consider the data distribution $(Q \mathcal{X}) \times \mathcal{Y}$. From the properties of the Gaussian distribution, $Q \mathcal{X}$ is Gaussian with mean $Q\mu$ and covariance matrix $Q\mathcal{X}Q^T=D$. Thus, we can generalize the results of Theorem \ref{th:variance_margin_noise} to apply to positive-definite non-diagonal matrices $\Sigma$.

For a fixed model, we additionally verify the results of Theorem \ref{th:variance_margin_noise} empirically, by generating Gaussian distributions and introducing margin noise by moving the Gaussians closer together (see Figure \ref{fig:exp_noise_variance}(b).) The variance of losses increases with additive and uniform random attribute noise as well, as we show empirically in Figure \ref{fig:exp_noise_variance}(c)-(d).

\begin{figure}[ht]
\centering
\includegraphics[width = 0.85\textwidth]{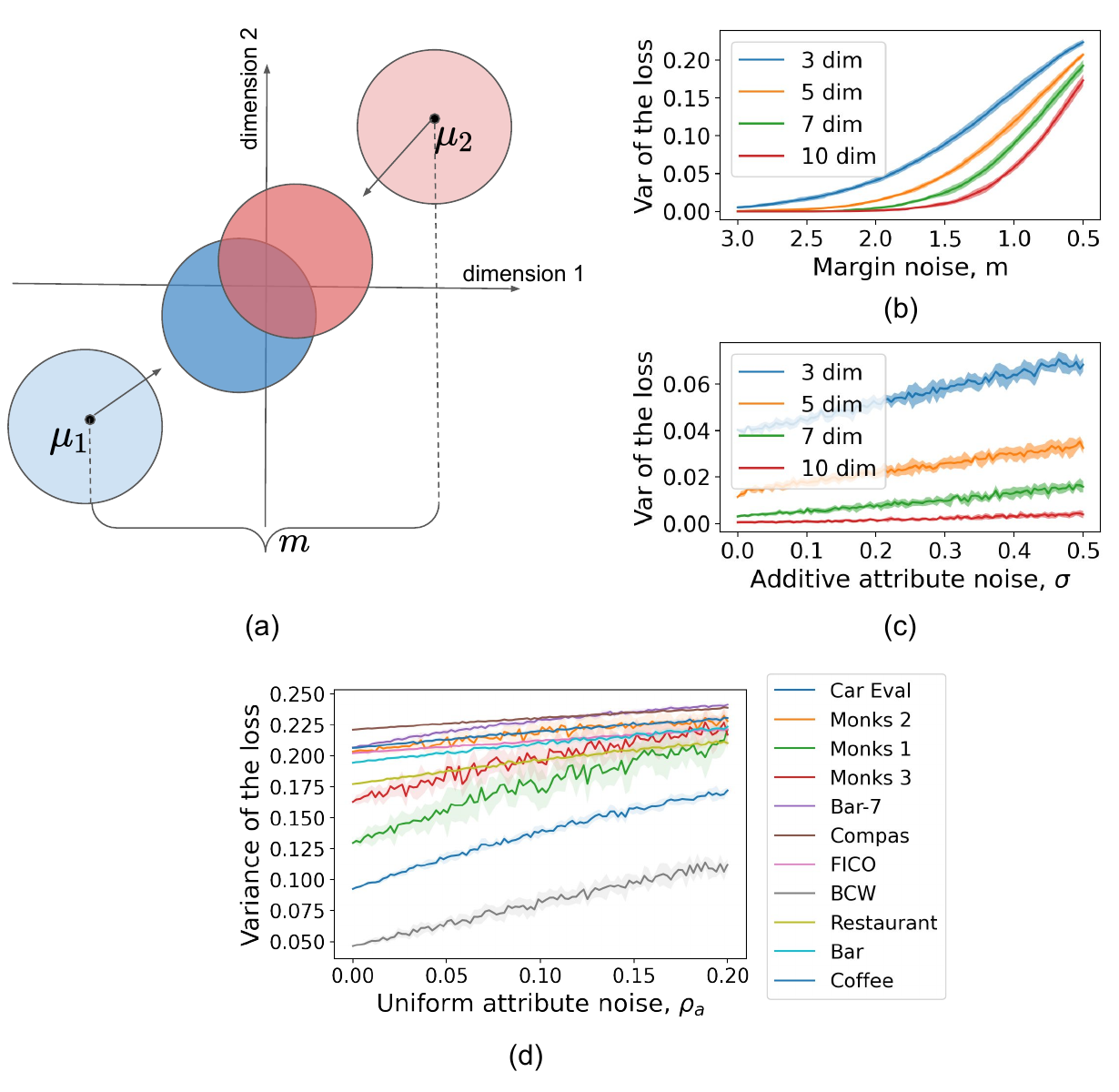}

\caption{
The variance of losses increases with margin (b) and additive attribute (c, d) noise. For (b) and (c) we generated data from Gaussians in 3, 5, 7, and 10 dimensions. For margin noise (b), as illustrated in (a), the negative class is generated from $\mathcal{N}(\bar{\mu_1}, I)$ and positive from $\mathcal{N}(\bar{\mu_2}, I)$, where $I$ is the identity matrix, $\bar{\mu_1} = -m/2 \times \bar{1}$, $\bar{\mu_2} = m/2 \times \bar{1}$, and $m$ controls the distance between Gaussians that determines the amount of margin noise. For additive noise, data is generated from $\mathcal{N}(\bar{0}, I)$ and $\mathcal{N}(\bar{2}, I)$. The noise model is $x' = x + \epsilon$, where $\epsilon \sim \mathcal{N}(\bar{0}, \sigma I)$ is the noise vector added to every sample and $\sigma$ determines how much noise is added to the data. For evaluation, as a fixed model we consider a random linear model from the Rashomon set.
For (d), we chose 3 features with the highest AUC value and introduced uniform noise by negating the attribute values with probability $\rho_a$. As a fixed model, we consider a tree generated by the CART algorithm that uses at least one of the features to which noise was applied (this is because if the model does not use these features, the variance of losses for that model will not change).
All plots are based on 0-1 loss and are averaged over 10 iterations.}
\label{fig:exp_noise_variance}
\end{figure}

While the results of Theorems \ref{th:variance_non_uniform_noise}, \ref{th:variance_margin_noise} are for a given and fixed model $f$, they hold for the $f$ that achieves the maximum variance in the Rashomon set as well, meaning that Corollary \ref{corollary:maximum_variance} extends to Theorems \ref{th:variance_non_uniform_noise}, \ref{th:variance_margin_noise}.


\section{Bernstein's and Hoeffding's inequalities}\label{appendix:bernstein_hoeffding}

In this section, we compare Bernstein's and Hoeffding's inequalities and show that, under certain assumptions on variance, Bernstein's inequality is tighter than Hoeffding's inequality. We provide 
Bernstein's inequality in Lemma \ref{lemma:bernstein_inequality} and Hoeffding's inequality in Lemma \ref{lemma:hoeffding_inequality}.

\begin{lemma}[Bernstein's inequality for loss class]\label{lemma:bernstein_inequality}
    Consider a hypothesis space $\F$. For a fixed $f\in \F$, let loss $l$ be bounded by $C > 0$ such that $|l(f,z)|\leq C$ for every $z \in \mathcal{Z}$. For any $\eps > 0$,
    \begin{equation}\label{eq:bernstein}
         P\left( L(f) - \hat{L}(f)> \eps\right)
         \leq e^{\frac{-n\eps^2}{2\sigma_f^2+2C\eps/3}},
    \end{equation}
    where $\sigma_f^2=\Var_{z \sim \mathcal{D}}l(f,z)$, and $n$ is number of samples in $S = \{z_i\}_{i=1}^n \sim \mathcal{D}$.
\end{lemma}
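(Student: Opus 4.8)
The plan is to recognize this as the classical one-sided Bernstein inequality applied to the i.i.d.\ summands arising from the empirical risk, and to carry out the standard Chernoff-bound argument. First I would center: define $V_i = L(f) - l(f,z_i)$ for $i = 1, \dots, n$. Since the $z_i$ are i.i.d.\ from $\D$, the $V_i$ are i.i.d.\ with $\E[V_i] = L(f) - \E_{z\sim\D} l(f,z) = 0$ and $\Var(V_i) = \sigma_f^2$. Because $l$ is a (nonnegative) loss bounded by $C$, we have $0 \le L(f) \le C$ as well, so each $V_i$ satisfies $\abs{V_i} \le C$. Observe that $\sum_{i=1}^n V_i = n\big(L(f) - \hat L(f)\big)$, so the event in the statement is exactly $\{\sum_i V_i > n\eps\}$.

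Next I would apply the exponential Markov (Chernoff) bound: for every $\lambda > 0$,
\[
P\Big(\sum_{i=1}^n V_i > n\eps\Big) \le e^{-\lambda n \eps}\,\E\Big[e^{\lambda \sum_i V_i}\Big] = e^{-\lambda n\eps}\prod_{i=1}^n \E\big[e^{\lambda V_i}\big],
\]
where the factorization uses independence. The crux — and the step I expect to be the main obstacle — is bounding the moment generating function $\E[e^{\lambda V_i}]$. Here I would expand $e^{\lambda V_i}$ as a power series, drop the $k=1$ term since $\E[V_i]=0$, and control the higher moments using boundedness: $\E[\abs{V_i}^k] \le \E[V_i^2]\,C^{k-2} = \sigma_f^2 C^{k-2}$ for $k \ge 2$. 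Summing the resulting series gives the Bennett form $\E[e^{\lambda V_i}] \le \exp\!\big(\tfrac{\sigma_f^2}{C^2}(e^{\lambda C} - 1 - \lambda C)\big)$ (using $1+x\le e^x$), and then the elementary inequality $e^u - 1 - u \le \tfrac{u^2/2}{1 - u/3}$ for $0 \le u < 3$ — which I would verify by comparing Taylor coefficients, i.e.\ $2\cdot 3^{k-2}\le k!$ for $k\ge 2$ — applied with $u = \lambda C$ yields $\E[e^{\lambda V_i}] \le \exp\!\big(\tfrac{\lambda^2 \sigma_f^2/2}{1 - \lambda C/3}\big)$ for $0 < \lambda < 3/C$.

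Finally I would optimize the exponent. Combining the two previous displays,
\[
P\Big(\sum_i V_i > n\eps\Big) \le \exp\!\Big(-\lambda n\eps + \frac{n\lambda^2\sigma_f^2/2}{1 - \lambda C/3}\Big),
\]
and the standard Bernstein choice $\lambda = \dfrac{\eps}{\sigma_f^2 + C\eps/3}$ (which lies in $(0, 3/C)$ since $\sigma_f^2 + C\eps/3 > C\eps/3$) makes $1 - \lambda C/3 = \sigma_f^2/(\sigma_f^2 + C\eps/3)$, so the exponent collapses to $-\tfrac{n\eps^2}{2(\sigma_f^2 + C\eps/3)} = -\tfrac{n\eps^2}{2\sigma_f^2 + 2C\eps/3}$, exactly the claimed bound. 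The only substantive work is the MGF estimate in the second paragraph; everything else is bookkeeping, and one could alternatively invoke a textbook Bernstein inequality and merely check the boundedness and variance hypotheses for the centered variables $V_i$.
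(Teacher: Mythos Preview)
Your proof is correct and is the standard Chernoff--Bennett derivation of Bernstein's inequality. However, there is nothing to compare against: the paper states this lemma without proof, treating it as a classical result and only using it as a black box (in the proofs of Theorems~\ref{th:variance_generalization}, \ref{th:bernstein_stronger_hoeffding}, and \ref{th:generalization_cover}). Your one added hypothesis---that the loss is nonnegative, so that $|V_i|\le C$ rather than $2C$---is consistent with the paper's setup (Section~3 defines the loss to take values in $\R^+$), so there is no discrepancy.
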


\begin{lemma}[Hoeffding's inequality for loss class]\label{lemma:hoeffding_inequality}
    Consider a hypothesis space $\F$. For a fixed $f\in \F$, let loss $l$ be bounded by $a, b \geq 0$ such that $a \leq l(f,z)\leq b$ for every $z \in \mathcal{Z}$. For any $\eps > 0$,
    \begin{equation}\label{eq:hoeffding}
         P\left( L(f) - \hat{L}(f)> \eps\right)
         \leq e^{\frac{-2n\eps^2}{(b-a)^2}},
    \end{equation}
    where $n$ is the number of samples in $S = \{z_i\}_{i=1}^n \sim \mathcal{D}$.
\end{lemma}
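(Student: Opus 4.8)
The plan is to prove this via the classical Cram\'er--Chernoff exponential-moment method, combined with Hoeffding's lemma on the moment generating function of a bounded zero-mean random variable. First I would recenter the losses: define $W_i = L(f) - l(f, z_i)$ for each $i$, so that the $W_i$ are i.i.d.\ with $\E[W_i] = 0$, and since $a \le l(f,z_i) \le b$, each $W_i$ takes values in an interval of width $b-a$. The point of recentering is that $L(f) - \hat{L}(f) = \frac{1}{n}\sum_{i=1}^n W_i$, so the event whose probability we want to bound, $\{L(f) - \hat{L}(f) > \eps\}$, is exactly $\{\sum_{i=1}^n W_i > n\eps\}$.

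Next I would apply Markov's inequality to the exponentiated sum. For any $s > 0$,
\[
P\Big(\sum_{i=1}^n W_i > n\eps\Big) \le e^{-sn\eps}\,\E\Big[e^{s\sum_{i=1}^n W_i}\Big] = e^{-sn\eps}\prod_{i=1}^n \E\big[e^{sW_i}\big],
\]
where the factorization of the expectation uses the independence of the samples $z_1,\dots,z_n$. The key technical ingredient is Hoeffding's lemma: for a zero-mean random variable $W$ supported on an interval of width $b-a$, the moment generating function satisfies $\E[e^{sW}] \le \exp\big(s^2(b-a)^2/8\big)$. Applying this to each factor gives the uniform bound $\exp\big(-sn\eps + n s^2 (b-a)^2/8\big)$, which I would then minimize over $s > 0$; the optimizer is $s = 4\eps/(b-a)^2$, and substituting it back collapses the exponent to $-2n\eps^2/(b-a)^2$, which is exactly the claimed inequality \eqref{eq:hoeffding}.

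The hard part will be establishing Hoeffding's lemma itself, since everything else is routine Chernoff bookkeeping. The plan there is a convexity argument: because $w \mapsto e^{sw}$ is convex, on the support interval it lies below the chord connecting its endpoints, so taking expectations and using $\E[W]=0$ bounds $\E[e^{sW}]$ by an explicit function $e^{\psi(s)}$ of $s$. I would then show that $\psi(0)=0$, $\psi'(0)=0$, and $\psi''(s) \le (b-a)^2/4$ for all $s$ (the second derivative is a variance of a certain tilted two-point distribution, maximized at $(b-a)^2/4$), and a second-order Taylor expansion yields $\psi(s) \le s^2(b-a)^2/8$. One subtlety worth noting is that the bound is fully distribution-free: it depends only on the range $[a,b]$ of the loss and not on its variance, which is precisely the point contrasted against the variance-dependent Bernstein bound of Lemma~\ref{lemma:bernstein_inequality}.
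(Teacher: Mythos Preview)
Your proposal is correct and follows the standard Cram\'er--Chernoff argument plus Hoeffding's lemma; this is exactly the classical proof of Hoeffding's inequality. The paper, however, does not prove Lemma~\ref{lemma:hoeffding_inequality} at all: it is stated in Appendix~\ref{appendix:bernstein_hoeffding} as a standard result (alongside Bernstein's inequality, Lemma~\ref{lemma:bernstein_inequality}) and used only as a point of comparison in Theorem~\ref{th:bernstein_stronger_hoeffding}. So there is nothing to compare your argument to beyond noting that you have supplied the textbook derivation the paper omits.
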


Note that for 0-1 loss in the lemmas above, $a = 0$, $b = 1$, and $C = 1$. Now we show that Bernstein's inequality is stronger than Hoeffding's if variance is lower than $\frac{(b - a)^2}{12}$.

\begin{theorem}[Bernstein's inequality is stronger than Hoeffding's for lower variance]\label{th:bernstein_stronger_hoeffding}
    For a fixed $f \in \mathcal{F}$, let loss $l \in [a,b]$ so that $a \leq l(f,z) \leq b$ for every $z \in \mathcal{Z}$. Then, Bernstein's inequality is stronger than Hoeffding's inequality for all $\eps \in (0, b-a)$ if 
    $\sigma_f^2 \leq \frac{(b-a)^2}{12}$ or if $\abs{L(f)-\frac{a+b}{2}}>\frac{b-a}{\sqrt{6}}$ where $\sigma_f^2=\Var_{z \sim \mathcal{D}}l(f,z)$.
\end{theorem}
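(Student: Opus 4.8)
The plan is to strip away the exponentials and reduce the comparison to a single inequality between the denominators of the two exponents. Both bounds have the form $P(L(f)-\hat L(f)>\eps)\le e^{-n\eps^2/D}$: for Bernstein \eqref{eq:bernstein} the denominator is $D_B=2\sigma_f^2+\tfrac23 C\eps$, and rewriting Hoeffding \eqref{eq:hoeffding} as $e^{-n\eps^2/D_H}$ gives $D_H=\tfrac{(b-a)^2}{2}$. Since $x\mapsto e^{-n\eps^2/x}$ is increasing in $x>0$, ``Bernstein is stronger than Hoeffding'' (a smaller right-hand side) is exactly equivalent to $D_B\le D_H$, i.e.
\[
2\sigma_f^2+\tfrac23 C\eps\le \tfrac{(b-a)^2}{2}.
\]
This turns the claim into an elementary inequality in $\eps$.

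Next I would exploit that the left-hand side is affine and increasing in $\eps$, so it is largest in the limit $\eps\uparrow b-a$. Hence the condition holds for \emph{every} $\eps\in(0,b-a)$ if and only if it holds at $\eps=b-a$, namely $2\sigma_f^2+\tfrac23 C(b-a)\le\tfrac{(b-a)^2}{2}$. Inserting the appropriate bound on the centered loss into Bernstein's constant and solving for $\sigma_f^2$ yields exactly $\sigma_f^2\le\tfrac{(b-a)^2}{12}$, which is the first hypothesis; this settles the first case. As a sanity check, in the opposite limit $\eps\downarrow 0$ the inequality reads $\sigma_f^2\le\tfrac{(b-a)^2}{4}$, the Popoviciu maximum, which is automatic, confirming that the endpoint $\eps=b-a$ is the only binding constraint.

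For the second case I would show that the mean-gap hypothesis implies the variance hypothesis, and then invoke the first case. The tool is the Bhatia--Davis variance bound: for any $[a,b]$-valued loss, $\E[(l-a)(b-l)]\ge 0$ expands to $\sigma_f^2\le (b-L(f))(L(f)-a)=\tfrac{(b-a)^2}{4}-\big(L(f)-\tfrac{a+b}{2}\big)^2$. If $\abs{L(f)-\tfrac{a+b}{2}}>\tfrac{b-a}{\sqrt 6}$, the subtracted square exceeds $\tfrac{(b-a)^2}{6}$, so $\sigma_f^2<\tfrac{(b-a)^2}{4}-\tfrac{(b-a)^2}{6}=\tfrac{(b-a)^2}{12}$, and we are back in the first case.

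The main obstacle is the constant bookkeeping in Bernstein's denominator at the binding endpoint $\eps=b-a$: the crude range bound $C=b-a$ makes the endpoint condition unsatisfiable, so the argument hinges on carrying the sharp half-range bound $C=\tfrac{b-a}{2}$ on $\abs{l(f,z)-\E l(f,z)}$. It is precisely this factor that converts the harmless threshold $\tfrac{(b-a)^2}{4}$ (which governs small $\eps$) into the tighter uniform-law threshold $\tfrac{(b-a)^2}{12}$ claimed in the theorem. Everything else is routine algebra, and the two cases are joined by the single Bhatia--Davis step above.
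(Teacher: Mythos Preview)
Your approach is essentially the same as the paper's: both reduce to comparing the exponents' denominators, both take $C=\tfrac{b-a}{2}$ after centering, both check the worst case $\eps\uparrow b-a$ to obtain the threshold $\sigma_f^2\le\tfrac{(b-a)^2}{12}$, and both reduce the mean-gap hypothesis to the variance hypothesis via the same bound $\sigma_f^2\le\tfrac{(b-a)^2}{4}-\bigl(L(f)-\tfrac{a+b}{2}\bigr)^2$ (which you call Bhatia--Davis and the paper derives directly from $\E[(l')^2]\le C^2$).

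One imprecision to fix: you justify $C=\tfrac{b-a}{2}$ as ``the sharp half-range bound on $\abs{l(f,z)-\E l(f,z)}$,'' but that deviation can be as large as $b-a$ (take $l$ concentrated near $a$). The correct centering, which the paper makes explicit, is at the \emph{midpoint}: set $l'=l-\tfrac{a+b}{2}\in[-\tfrac{b-a}{2},\tfrac{b-a}{2}]$, note that $L-\hat L$ and $\sigma_f^2$ are shift-invariant, and apply Lemma~\ref{lemma:bernstein_inequality} to $l'$ with $C=\tfrac{b-a}{2}$. Your algebra and conclusions are unaffected.
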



Note that since the true risk and empirical risk can only differ by at most $b-a$, $\epsilon$ is not meaningful if $\epsilon \geq b -a$.

\begin{proof}
    According to Hoeffding's inequality \eqref{eq:hoeffding}, we have that
    \[P\left(\abs{L(f)-\hat{L}(f)}>\eps\right) \leq 2e^{\frac{-2n\eps^2}{(b-a)^2}}.\]
    Recall that Bernstein's inequality \eqref{eq:bernstein} states
    \[P\left(\abs{L(f)-\hat{L}(f)}>\eps\right)
    \leq 2e^{\frac{-n\eps^2}{2\sigma_f^2+2C\eps/3}}\]
    where $C=\frac{b-a}{2}$. Without loss of generality, let $l'(f,z)=l(f,z)-\frac{a+b}{2}$ so that $l' \in [-C,C]$. Then, we get that $L'(f)=L(f)-\frac{a+b}{2}$, $\Var_{z \sim \mathcal{D}} l'(f,z)=\Var_{z \sim \mathcal{D}}l(f,z)$, and $\hat{L}'(f)=\hat{L}(f)-\frac{a+b}{2}$. Therefore, we can rewrite Bernstein's inequality as 
    \[P\left(\abs{L(f)-\hat{L}(f)}>\eps\right)
    =P\left(\abs{L'(f)-\hat{L}'(f)}>\eps\right)
    \leq 2e^{\frac{-2n\eps^2}{4\sigma_f^2+2(b-a)\eps/3}}.\]
    Consider $\sigma_f^2 \leq \frac{(b-a)^2}{12}$. Then, we can upper-bound the right side of Bernstein's inequality by
    \[2e^{-\frac{2n\eps^2}{4\sigma_f^2+2(b-a)\eps/3}}
    < 2e^{-\frac{2n\eps^2}{(b-a)^2/3+2(b-a)^2/3}}
    =2e^{\frac{-2n\eps^2}{(b-a)^2}},\]
    where $2e^{\frac{-2n\eps^2}{(b-a)^2}}$ is the bound given by Hoeffding's inequality. Therefore, we showed that, if $\sigma_f^2 \leq \frac{(b-a)^2}{12}$, then Bernstein's inequality is stronger than Hoeffding's inequality for all $\eps \in (0,b-a)$.
    
    We now consider $\abs{L(f)-\frac{a+b}{2}}>\frac{b-a}{\sqrt{6}}$. Recall that $L'(f)=L(f)-\frac{a+b}{2}$, so we can rewrite this as $\abs{L'(f)}>\frac{b-a}{\sqrt{6}}$. Since $-C \leq l'(f,z) \leq C$, we know that 
{\allowdisplaybreaks
\begin{align*}
            \Var_{z \sim \mathcal{D}}(l'(f, z)) &=E_{z \sim \mathcal{D}}((l'(f,z))^2)-(E_{z \sim \mathcal{D}}(l'(f,z)))^2\\
    &\leq C^2-(L'(f))^2\\
    &\leq \frac{(b-a)^2}{4}-\frac{(b-a)^2}{6}\\
    &=\frac{(b-a)^2}{12}.  
    \end{align*}}

    Then, we can follow the same argument as in the previous case to conclude that Bernstein's inequality is stronger than Hoeffding's inequality for all $\eps \in (0, b-a)$. 
\end{proof}


\section{Proof for Theorem \ref{th:variance_generalization}}\label{appendix:proof_variance_generalization}

For a discrete hypothesis space, Theorem \ref{th:variance_generalization} bounds generalization error with a term that depends on the size of the true Rashomon set and maximum variance of the loss.  Recall Theorem \ref{th:variance_generalization}.

\begingroup
\def\thetheorem{\ref{th:variance_generalization}}
\begin{theorem}[Variance-based ``generalization bound'']
Consider dataset $S$, 0-1 loss $l$  , and finite hypothesis space $\F$.
With probability at least $1-\delta$, we have that for every $f\in R_{set}(\F,\gamma)$:
    \begin{equation*}
        L(f) - \hat{L}(f) \leq  \frac{2}{3n}\log{\left(\frac{\abs{R_{set}(\F,\gamma)}}{\delta}\right)} +\sqrt{\frac{2\sigma^2}{n}\log\left(\frac{\abs{R_{set}(\F,\gamma)}}{\delta}\right)},
    \end{equation*}
 where $\sigma^2=\sup_{f \in R_{set}(\F,\gamma)}\Var_{z \sim \mathcal{D}}l(f,z)$, and $n$ is number of samples in $S = \{z_i\}_{i=1}^n \sim \D$.
\end{theorem}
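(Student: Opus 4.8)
The plan is to combine Bernstein's inequality (Lemma~\ref{lemma:bernstein_inequality}) with a union bound taken only over the true Rashomon set. The crucial structural observation is that $R_{set}(\F,\gamma)$ is defined through the \emph{true} risk $L$, so it is a fixed, deterministic subset of $\F$ that does not depend on the random draw of $S$. This is exactly what makes a clean union bound over $R_{set}(\F,\gamma)$, rather than over all of $\F$, legitimate, and it is the source of the $\abs{R_{set}(\F,\gamma)}$ factor appearing in place of $\abs{\F}$. If the set were instead defined by the empirical risk, it would be data-dependent and this step would require much more care.

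First I would fix a single $f\in R_{set}(\F,\gamma)$ and apply Lemma~\ref{lemma:bernstein_inequality} with $C=1$, which is valid since the 0-1 loss obeys $\abs{l(f,z)}\leq 1$, giving $P(L(f)-\hat{L}(f)>\eps)\leq \exp(\frac{-n\eps^2}{2\sigma_f^2+2\eps/3})$. Because $\sigma_f^2\leq\sigma^2=\sup_{f\in R_{set}(\F,\gamma)}\sigma_f^2$, enlarging the denominator pushes the (negative) exponent closer to zero and hence only increases the right-hand side, so each tail probability is bounded uniformly in $f$ by $\exp(\frac{-n\eps^2}{2\sigma^2+2\eps/3})$. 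The direction of this monotonicity is the one subtle point worth checking explicitly.

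Next I would union-bound over the finite set $R_{set}(\F,\gamma)$, yielding $P(\exists f\in R_{set}(\F,\gamma):L(f)-\hat{L}(f)>\eps)\leq \abs{R_{set}(\F,\gamma)}\exp(\frac{-n\eps^2}{2\sigma^2+2\eps/3})$. Setting the right-hand side equal to $\delta$ and writing $M=\log\frac{\abs{R_{set}(\F,\gamma)}}{\delta}$ leaves the requirement $\frac{n\eps^2}{2\sigma^2+2\eps/3}\geq M$, i.e.\ the quadratic inequality $n\eps^2-\frac{2M}{3}\eps-2\sigma^2 M\geq 0$. Rather than using the exact quadratic root, I would verify directly that the candidate $\eps=\frac{2M}{3n}+\sqrt{\frac{2\sigma^2 M}{n}}$ satisfies it: substituting $a=\sqrt{2\sigma^2 M/n}$ and $b=2M/(3n)$ and expanding $n(a+b)^2-nb(a+b)-na^2$ collapses to $nab\geq 0$, which delivers precisely the two-term form stated in the theorem.

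I expect no genuine obstacle here: the only delicate pieces are the direction of the monotonicity when replacing $\sigma_f^2$ by $\sigma^2$ and the bookkeeping in inverting the Bernstein tail, both routine once the deterministic nature of $R_{set}(\F,\gamma)$ is recognized. The standard decoupling inequality $\sqrt{u+v}\leq\sqrt{u}+\sqrt{v}$ offers an alternative route to the same bound if one prefers to start from the exact quadratic root and relax it, rather than verifying the candidate $\eps$ directly.
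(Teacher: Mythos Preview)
Your proposal is correct and follows essentially the same approach as the paper: Bernstein's inequality with $C=1$, the monotonicity replacement $\sigma_f^2\mapsto\sigma^2$, a union bound over the deterministic set $R_{set}(\F,\gamma)$, and then inversion of the resulting quadratic in $\eps$. The only cosmetic difference is that the paper solves the quadratic exactly and relaxes the root via $\sqrt{u+v}\leq\sqrt{u}+\sqrt{v}$, whereas you verify the two-term candidate directly---you even flag this alternative yourself.
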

\addtocounter{theorem}{-1}
\endgroup

\begin{proof}
    For each fixed model $f \in R_{set}(\F,\gamma)$ in the true Rashomon set, from Bernstein's inequality, using that the maximum value for the 0-1 loss is 1, we have that 
    \[P(L(f)-\hat{L}(f) > \eps) \leq e^{\frac{-n\eps^2}{2\sigma_f^2+2\eps/3}}.\]
    According to the union bound:
    {\allowdisplaybreaks
    \begin{align*}
        P\left(\exists f \in R_{set}(\F,\gamma): L(f)-\hat{L}(f) > \eps\right) 
        &\leq \sum_{f \in R_{set}(\F,\gamma)} P\left(L(f)-\hat{L}(f)>\eps\right) \\
        &\leq \sum_{f \in R_{set}(\F,\gamma)} e^{\frac{-n\eps^2}{2\sigma_f^2+2\eps/3}}\\
        &\leq \sum_{f \in R_{set}(\F,\gamma)} e^{\frac{-n\eps^2}{2\sigma^2+2\eps/3}} \\
        &= \abs{R_{set}(\F,\gamma)} \cdot e^{\frac{-n\eps^2}{2\sigma^2+2\eps/3}},
    \end{align*}}where we used the fact that $e^{-\frac{1}{\sigma_f^2}} \leq e^{-\frac{1}{\sup_{f\in R_{set}(\F,\gamma)}\sigma_f^2}} = e^{-\frac{1}{\sigma^2}}$, since the exponential function is monotonic.
    
    Let $\delta= \abs{R_{set}(\F,\gamma)}e^{\frac{-n\eps^2}{2\sigma^2+2\eps/3}}$, then we have the following quadratic equation to find $\eps$:
    \[\eps^2-\frac{2}{3n}\log\left(\frac{\abs{R_{set}(\F,\gamma)}}{\delta}\right)\eps-\frac{2\sigma^2}{n}\log\left(\frac{\abs{R_{set}(\F,\gamma)}}{\delta}\right)=0.\]
    Setting $a=\frac{2}{n}\log\left(\frac{\abs{R_{set}(\F,\gamma)}}{\delta}\right)$, we find that the roots of the quadratic equation with respect to $\eps$ are:
    \[\eps=\frac{a}{2\cdot 3} \pm \frac{1}{2}\sqrt{\left(\frac{a}{3}\right)^2+4a\sigma^2}.\]
    Since $4a\sigma^2 \geq 0$, we see that $\frac{a}{2\cdot 3} - \frac{1}{2}\sqrt{\left(\frac{a}{3}\right)^2+4a\sigma^2} < 0$ which is not a valid root as $\eps > 0$. Thus,
    {\allowdisplaybreaks
    \begin{align*}
        \eps
        &=\frac{1}{3n}\log{\left(\frac{\abs{R_{set}(\F,\gamma)}}{\delta}\right)} +\sqrt{\left(\frac{1}{3n}\log\left(\frac{\abs{R_{set}(\F,\gamma)}}{\delta}\right)\right)^2+\frac{2\sigma^2}{n}\log\left(\frac{\abs{R_{set}(\F,\gamma)}}{\delta}\right)} \\
        &\leq \frac{2}{3n}\log{\left(\frac{\abs{R_{set}(\F,\gamma)}}{\delta}\right)} +\sqrt{\frac{2\sigma^2}{n}\log\left(\frac{\abs{R_{set}(\F,\gamma)}}{\delta}\right)},
    \end{align*}}where the latter inequality arises from the inequality $\sqrt{a+b} \leq \sqrt{a}+\sqrt{b}$. Therefore, we get that with probability at least $1 - \delta$:
    \begin{equation*}
         \forall f \in R_{set}(\F,\gamma): L(f)-\hat{L}(f) \leq \eps = \frac{2}{3n}\log{\left(\frac{\abs{R_{set}(\F,\gamma)}}{\delta}\right)} +\sqrt{\frac{2\sigma^2}{n}\log\left(\frac{\abs{R_{set}(\F,\gamma)}}{\delta}\right)}.
    \end{equation*}
\end{proof}

\section{Proof for Proposition \ref{th:erm_close_true_rse}}\label{appendix:proof_erm_close_true_set}
We recall and provide proof for Proposition \ref{th:erm_close_true_rse} below.

\begingroup
\def\thetheorem{\ref{th:erm_close_true_rse}}
\begin{proposition}[ERM can be close to the true Rashomon set]
\PropositionERMTrueRset
\end{proposition}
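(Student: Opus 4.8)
The plan is to bound the true excess risk $L(\hat{f}) - L(f^*)$ directly, since $\hat{f} \in R_{set}(\F,\gamma)$ is by definition the statement $L(\hat{f}) \leq L(f^*) + \gamma$. I would start from the telescoping decomposition
\[
L(\hat{f}) - L(f^*) = \underbrace{\left(L(\hat{f}) - \hat{L}(\hat{f})\right)}_{\text{(I)}} + \underbrace{\left(\hat{L}(\hat{f}) - \hat{L}(f^*)\right)}_{\text{(II)}} + \underbrace{\left(\hat{L}(f^*) - L(f^*)\right)}_{\text{(III)}},
\]
and control each of the three terms separately.

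For term (II), the key observation is that $\hat{f}$ is the empirical risk minimizer, so $\hat{L}(\hat{f}) \leq \hat{L}(f)$ for every $f \in \F$, and in particular $\hat{L}(\hat{f}) \leq \hat{L}(f^*)$; hence (II) $\leq 0$ deterministically, with no appeal to randomness. For term (I), I would invoke the cross-validation assumption verbatim: with probability at least $1 - \epsilon_{\xi}$ over the draw of the data, $L(\hat{f}) - \hat{L}(\hat{f}) \leq \xi$.

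The crucial step is term (III). The main point to exploit is that $f^* = \arg\min_{f \in \F} L(f)$ is a \emph{fixed}, data-independent function, determined entirely by the true distribution $\D$. Consequently $\hat{L}(f^*)$ is an empirical average of $n$ i.i.d.\ bounded random variables $l(f^*, z_i) \in [0,1]$ with mean $L(f^*)$, so Hoeffding's inequality (Lemma \ref{lemma:hoeffding_inequality} with $a = 0$, $b = 1$) applies to the single function $f^*$ \emph{without} any union bound over $\F$. This yields $P\left(\hat{L}(f^*) - L(f^*) > \epsilon\right) \leq e^{-2n\epsilon^2}$, i.e.\ term (III) $\leq \epsilon$ with probability at least $1 - e^{-2n\epsilon^2}$.

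Finally I would combine the two probabilistic events by a union bound over their complements: with probability at least $1 - e^{-2n\epsilon^2} - \epsilon_{\xi}$, terms (I), (II), and (III) are simultaneously controlled, giving $L(\hat{f}) - L(f^*) \leq \xi + 0 + \epsilon = \xi + \epsilon$. Under the hypothesis $\xi + \epsilon \leq \gamma$ this gives $L(\hat{f}) \leq L(f^*) + \gamma$, i.e.\ $\hat{f} \in R_{set}(\F,\gamma)$, as desired. The main conceptual obstacle---and the reason no complexity or cardinality term appears here, in contrast to Theorem \ref{th:variance_generalization}---is recognizing that $f^*$ is deterministic, so concentration is needed only for this one fixed function rather than uniformly over the hypothesis space; everything else is bookkeeping to keep the directions of the inequalities consistent.
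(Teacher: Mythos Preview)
Your proof is correct and is essentially the same argument as the paper's: both use Hoeffding's inequality applied to the single fixed function $f^*$ (avoiding any union bound over $\F$), the ERM inequality $\hat{L}(\hat{f})\le\hat{L}(f^*)$, the cross-validation assumption for term (I), and a union bound over the two failure events. The only difference is presentational---you write the excess risk as an explicit three-term telescoping sum, whereas the paper strings the same three facts into the chain $L(\hat{f})\le\hat{L}(\hat{f})+\xi\le\hat{L}(f^*)+\xi\le L(f^*)+\epsilon+\xi\le L(f^*)+\gamma$.
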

\addtocounter{theorem}{-1}
\endgroup

\begin{proof}
For a fixed $f \in \F$ for 0-1 loss by Hoeffding's inequality \eqref{eq:hoeffding}:
\[P\left[   \hat{L}(f) - L(f) >\epsilon \right]  \leq e^{-2n\epsilon^2}.\]
Therefore, with probability at least $1 - e^{-2n\epsilon^2}$ with respect to the random draw of data, $\hat{L}(f)  - L(f) \leq \epsilon$. This is true for the optimal model as well, thus with high probability $\hat{L}(f^*)  - L(f^*) \leq \epsilon$.

Since $\hat{f}$ is the empirical risk minimizer, and $\epsilon +\xi\leq \gamma$ by assumption, we have that $\hat{L}(\hat{f}) \leq \hat{L}(f^*)$. We use that for two events $A$ and $B$, $P(\neg(A\cup B)) = 1-P(A\cup B)\geq 1-P(A)-P(B)$, where $A$ is the event that cross-validation gives us an incorrect generalization bound, and $B$ is the event that $f^*$ does not generalize. Thus, $P(A)\leq e^{2n\epsilon^2}$ and $P(B)\leq \epsilon_{\xi}$.  Thus, with probability at least $1 - e^{-2n\epsilon^2}-\epsilon_{\xi}$,
\[
L(\hat{f})\leq \hat{L}(\hat{f}) + \xi \leq \hat{L}(f^*) + \xi \leq L(f^*) + \epsilon +\xi \leq L(f^*) + \gamma.
\]
Therefore $\hat{f} \in R_{set}(\F, \gamma).$

\end{proof}


\section{Proof for Proposition \ref{proposition:ratio_trees}}\label{appendix:proof_ratio_trees}

For the hypothesis space of decision trees, the number of possible decision trees in the hypothesis space grows exponentially fast with the depth of the tree and the number of features. In Proposition \ref{proposition:ratio_trees} we show that the Rashomon set growth rate is smaller than the growth rate of the hypothesis space, and thus this leads to larger Rashomon ratios for simpler hypothesis spaces. We recall Proposition \ref{proposition:ratio_trees} below.

\begingroup
\def\thetheorem{\ref{proposition:ratio_trees}}
\begin{proposition}[Rashomon ratio is larger for decision trees of smaller depth]
\PropositionRratioTrees
\end{proposition}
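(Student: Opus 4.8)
The plan is to prove the equivalent statement
$\frac{|\hat{R}_{set}(\F_{d+1},\theta)|}{|\hat{R}_{set}(\F_{d},\theta)|} < \frac{|\F_{d+1}|}{|\F_d|}$,
since $\hat{R}_{ratio}(\F,\theta)=|\hat{R}_{set}(\F,\theta)|/|\F|$ and cross-multiplying turns the target into this comparison of growth rates. The denominator is a direct count: a fully grown tree of depth $d$ has $2^d-1$ internal nodes (each carrying one of $m$ binary features) and $2^d$ leaves (each carrying one of two labels), so $|\F_d| = m^{2^d-1}\,2^{2^d}$. Passing from depth $d$ to $d+1$ adds $2^d$ internal nodes and doubles the number of leaves, giving $|\F_{d+1}|/|\F_d| = m^{2^d}\cdot 2^{\,2^{d+1}-2^d} = (2m)^{2^d}$. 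Thus it suffices to show the Rashomon set grows by a factor strictly below $(2m)^{2^d}$.

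Next I would use the \emph{leaves are correct} assumption to reduce from counting trees to counting \emph{skeletons} (feature assignments to internal nodes, with labels suppressed). Fix a skeleton; it determines the point set reaching each leaf. Since every model in the Rashomon set has each leaf correct by margin $\lceil\theta n\rceil$, each leaf label is forced to be its majority label, and this is admissible only when the majority margin is at least $\lceil\theta n\rceil$ at every leaf (and the resulting total error meets the threshold). Hence there is \emph{at most one} Rashomon tree per skeleton, so $|\hat{R}_{set}(\F_d,\theta)|$ equals the number of ``pure'' depth-$d$ skeletons; write $R_d:=\hat{R}_{set}(\F_d,\theta)$. A pure $(d+1)$-skeleton whose top $d$ levels already form a pure $d$-skeleton is exactly a refinement of that $d$-skeleton obtained by choosing one feature at each of its $2^d$ leaves (labels again forced), so the count of such skeletons is at most $|R_d|\cdot m^{2^d}$. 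Crucially $m^{2^d} < (2m)^{2^d}=2^{2^d}m^{2^d}$ with enormous room to spare, so the refinements of pure $d$-skeletons are already well under budget; in effect refinement trades each leaf's label factor of $2$ for a feature factor of roughly $m$ rather than $2m$.

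The hard part will be controlling the remaining pure $(d+1)$-skeletons whose top $d$ levels are \emph{not} a pure $d$-skeleton: these refine an impure internal node by a sign-flipping split whose two children are each correct but whose pooled point set is nearly balanced, so they are genuinely new good models not descended from any good depth-$d$ model, and a naive refinement injection misses them. Writing their number as $\Delta$, I must show $\Delta < (2^{2^d}-1)\,m^{2^d}\,|R_d|$. I would bound $\Delta$ by mapping each such skeleton to its (impure but refinable) top $d$-skeleton and multiplying by the at-most-$m^{2^d}$ bottom feature choices, reducing the task to comparing impure-refinable $d$-skeletons against pure ones. Here the \emph{bad features} assumption does the work: since no Rashomon tree can be built from the $m_{\mathrm{bad}}\ge d$ bad features alone, every pure skeleton must spend at least one node on a good feature, which both excludes the all-bad skeletons (counted in the denominator but never the numerator, supplying the strict slack) and limits how many sign-flipping refinements can jointly satisfy the margin and error constraints; the hypothesis $m<2^{2^d}$ keeps the feature factor $m^{2^d}$ dominated by the label factor $2^{2^d}$ so this deficit is not overwhelmed. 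Assembling the per-leaf counts into a product and comparing with $(2m)^{2^d}$ then yields the strict inequality.
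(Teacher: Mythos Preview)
Your proposal has a genuine gap in the ``hard part'' and also diverges from the paper's much simpler route. First, a small discrepancy: the paper's hypothesis space forbids repeating a feature along any root-to-leaf path, so $|\F_d| = 2^{2^d}\prod_{k=1}^d (m-k+1)^{2^{k-1}}$ rather than $m^{2^d-1}2^{2^d}$; this is why the hypothesis $m_{\textrm{bad}}\ge d$ is phrased as it is (one needs enough bad features to fill a length-$d$ path). More importantly, your refinement decomposition into ``pure $(d{+}1)$-skeletons whose top $d$ levels are already pure'' plus a remainder $\Delta$ leaves $\Delta$ unbounded. You acknowledge this is the hard part and then offer only a sketch: mapping each such skeleton to its impure top, multiplying by $m^{2^d}$, and asserting that the bad-features assumption ``limits how many sign-flipping refinements can jointly satisfy the margin and error constraints.'' No concrete inequality is produced, and nothing in the stated assumptions controls the number of impure-but-refinable $d$-skeletons relative to $|R_d|$; that comparison could in principle be arbitrarily bad. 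The claim $\Delta < (2^{2^d}-1)m^{2^d}|R_d|$ is precisely the content of the proposition and is not established.

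The paper never touches this decomposition. After using ``leaves are correct'' to force labels (as you do), it takes the Rashomon-set count at depth $d$ to be (all skeletons) minus (all-bad-feature skeletons), yielding the closed form
\[
\hat{R}_{ratio}(\F_d,\theta)=\frac{1-\alpha(d)}{2^{2^d}},\qquad \alpha(d)=\prod_{k=1}^d\Bigl(1-\frac{\bar m}{m-k+1}\Bigr)^{2^{k-1}},
\]
where $\bar m = m - m_{\textrm{bad}}$. The ratio of ratios is then computed directly:
\[
\frac{\hat{R}_{ratio}(\F_d,\theta)}{\hat{R}_{ratio}(\F_{d+1},\theta)} = 2^{2^d}\,\frac{1-\alpha(d)}{1-\alpha(d+1)} > 2^{2^d}\bigl(1-\alpha(d)\bigr) > 2^{2^d}\,\frac{\bar m}{m} \ge \frac{2^{2^d}}{m} > 1,
\]
the last step using $m<2^{2^d}$. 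The whole argument is a closed-form computation driven by the single subtraction ``remove all-bad skeletons''; no inductive refinement structure, and hence no $\Delta$ term, is needed.
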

\addtocounter{theorem}{-1}
\endgroup

\begin{proof}
The hypothesis space of fully-grown trees of depth $d$ contains
\begin{equation*}
|\F_d| = 2^{2^{d}} \prod_{k=1}^d  (m-k+1)^{2^{k-1}}
\end{equation*}
trees, where $2$ is the number of label options each leaf can have, $2^d$ is the number of leaves we have, $\prod_{k=1}^d$ is the product over all depth levels in a tree, $2^{k-1}$ is the number of nodes we have at that level, and $(m-(k-1))$ is the number of options we have to choose from given that the previous features were used in the path from the root. We do not count symmetric trees, meaning that we always assume that split $=0$ is on the left and $=1$ is on the right.

Now let's compute the size of the Rashomon set. First, since each leaf of every tree in the Rashomon set has correctly classified $\lceil \theta n \rceil$ points more than misclassified, flipping the label of this leaf will add more than $\theta$ to the loss and thus will push the tree out of the Rashomon set. Therefore, for every tree, every leaf label is determined by the data.

Second, since the empirical risk minimizer of models using only the bad features $m_{bad}$ is not in the Rashomon set, every tree that has only features from the set $m_{bad}$ is not in the Rashomon set. Therefore, trees in the Rashomon set must have at least one ``good'' feature at some node, where good means that the feature is not in $m_{bad}$. The cardinality of the set of good features is $\bar{m} = m - |m_{bad}|$, then the cardinality of the Rashomon set is:
\[\hat{R}_{set}(\F_d,\theta) = \prod_{k=1}^d  (m-k+1)^{2^{k-1}} - \prod_{k=1}^d  (m - \bar{m}-k+1)^{2^{k-1}},\]
meaning that among all models, we do not consider those that consist of bad features only (since $m_{bad}\geq d$, there exists at least one such tree). Then the Rashomon ratio is:
{\allowdisplaybreaks
\begin{align*}
\hat{R}_{ratio}(\F_d,\theta) &= \frac{|\hat{R}_{set}(\F_d,\theta)|}{|\F_d|} = \frac{\prod_{k=1}^d  (m-k+1)^{2^{k-1}} - \prod_{k=1}^d  (m - \bar{m}-k+1)^{2^{k-1}}}{2^{2^{d}} \prod_{k=1}^d  (m-k+1)^{2^{k-1}}} \\
   &= \frac{1}{2^{2^{d}}} \left( 1 - \frac{\prod_{k=1}^d  (m - \bar{m}-k+1)^{2^{k-1}}}{\prod_{k=1}^d  (m-k+1)^{2^{k-1}}}\right) \\
   & = \frac{1}{2^{2^{d}}} \left( 1 - \prod_{k=1}^d \left(1 - \frac{\bar{m}}{m-k+1}\right)^{2^{k-1}}\right) \\
   & = \frac{1 - \alpha(d)}{2^{2^{d}}},
\end{align*}}where $\alpha(d) = \prod_{k=1}^d \left(1 - \frac{\bar{m}}{m-k+1}\right)^{2^{k-1}}$. Since $d>1, \bar{m}>1$, 
and $\frac{\bar{m}}{m-k+1} > \frac{\bar{m}}{m}$ for $k > 2$, we get that 
{\allowdisplaybreaks
\begin{align*}
\alpha(d) &= \prod_{k=1}^d \left(1 - \frac{\bar{m}}{m-k+1}\right)^{2^{k-1}} < \prod_{k=1}^d \left(1 - \frac{\bar{m}}{m}\right)^{2^{k-1}} < 1 - \frac{\bar{m}}{m}.
\end{align*}}

Note as well that $\alpha(d) < 1$ for any $d$. Recall that $m < 2^{2^d}$, then for the ratio of ratios:
{\allowdisplaybreaks
\begin{align*}
\frac{\hat{R}_{ratio}(\F_d,\theta)}{\hat{R}_{ratio}(\F_{d+1},\theta)} &= \frac{|\hat{R}_{set}(\F_d,\theta)|}{|\F_d|} \frac{|\F_{d+1}|}{|\hat{R}_{set}(\F_{d+1},\theta)|} \\
& = \frac{1 - \alpha(d)}{2^{2^{d}}} \frac{2^{2^{d+1}}}{1 - \alpha(d+1)} = 2^{2^{d}}\frac{1 - \alpha(d)}{1 - \alpha(d + 1)} \\
&> 2^{2^{d}} (1 - \alpha(d)) > 2^{2^{d}} \left(1 - 
\left(1 - \frac{\bar{m}}{m}\right)\right) \\
&= 2^{2^{d}} \frac{\bar{m}}{m} > 2^{2^{d}} \frac{1}{2^{2^{d}}} = 1.
\end{align*}}

Thus we showed that $\hat{R}_{ratio}(\F_d,\theta) > \hat{R}_{ratio}(\F_{d+1},\theta)$, meaning that the Rashomon ratio grows as we consider less deep trees.

\end{proof}


\section{Proof for Theorem \ref{th:noise_ridge_ratio}}\label{appendix:proof_rvolume_noise_ridge}

Recall Theorem \ref{th:noise_ridge_ratio}:


\begingroup
\def\thetheorem{\ref{th:noise_ridge_ratio}}
\begin{theorem}[Rashomon ratio increases with noise for ridge regression]
\TheoremNoiseRR
\end{theorem}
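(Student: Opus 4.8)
\section*{Proof proposal}

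The plan is to reduce the theorem to the monotonicity of a single scalar function of an effective regularization parameter. First I would make precise the sense in which additive attribute noise is equivalent to stronger ridge regularization \cite{bishop1995training}: taking the expectation of the noisy least-squares loss over $\epsilon_i \sim \mathcal{N}(\bar{0},\lambda I)$ and using $\E[\epsilon_i] = \bar{0}$ and $\E[\epsilon_i\epsilon_i^T] = \lambda I$ gives
\[
\E_{\epsilon}\left[\frac{1}{n}\sum_{i=1}^n \big((x_i+\epsilon_i)^T\omega - y_i\big)^2\right] = \hat{L}_{LS}(\omega) + \lambda\,\omega^T\omega,
\]
so the expected regularized objective at noise level $\lambda$ is itself a ridge objective with effective regularization constant $C_\lambda := C + \lambda$. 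Since $\lambda_1 < \lambda_2$ we have $C_{\lambda_1} < C_{\lambda_2}$, and the whole problem collapses to showing that the Rashomon ratio of ridge regression is strictly increasing in its regularization constant.

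Next I would write both the numerator and the denominator of the Rashomon ratio explicitly as functions of $C_\lambda$. Completing the square in the effective objective, the Rashomon set is the ellipsoid $\{\omega : (\omega - \hat{\omega}_{\lambda})^T H_\lambda (\omega - \hat{\omega}_{\lambda}) \leq \theta\}$ with $H_\lambda = \frac{1}{n}X^TX + C_\lambda I$, whose volume equals $V_m\,\theta^{m/2}/\sqrt{\det H_\lambda}$, where $V_m$ is the volume of the unit $\ell_2$-ball \citep{semenova2022existence}. By the same argument that defines the hypothesis space, applied to the effective objective at noise level $\lambda$, the $\ell_2$-ball is $\{\omega : \omega^T\omega \leq \hat{L}_{\max}/C_\lambda\}$ with volume $V_m(\hat{L}_{\max}/C_\lambda)^{m/2}$. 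Diagonalizing $\frac{1}{n}X^TX$ with eigenvalues $s_1,\dots,s_m \geq 0$ gives $\det H_\lambda = \prod_{i=1}^m (s_i + C_\lambda)$, and after the $V_m$ and $\theta^{m/2}$ factors cancel, the ratio reduces to
\[
\hat{R}_{{ratio}_{\lambda}}(\F,\theta) = \left(\frac{\theta}{\hat{L}_{\max}}\right)^{m/2}\prod_{i=1}^m \sqrt{\frac{C_\lambda}{s_i + C_\lambda}}.
\]

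Finally I would prove monotonicity factor by factor. Each factor satisfies $\frac{\partial}{\partial C_\lambda}\frac{C_\lambda}{s_i + C_\lambda} = \frac{s_i}{(s_i + C_\lambda)^2} \geq 0$, so it is non-decreasing in $C_\lambda$ and strictly increasing whenever $s_i > 0$. Because $X$ is a non-zero matrix, $\mathrm{tr}(X^TX) > 0$ forces at least one $s_i > 0$, so the product, and hence $\hat{R}_{{ratio}_{\lambda}}(\F,\theta)$, is strictly increasing in $C_\lambda$; evaluating at $C_{\lambda_1} < C_{\lambda_2}$ yields the claim. The main obstacle is conceptual rather than computational: one must notice that noise rescales \emph{both} the Rashomon ellipsoid and the hypothesis-space ball through the same $C_\lambda$, and that the denominator ball contracts faster than the numerator ellipsoid, since an isolated look at the shrinking Rashomon volume alone would suggest the wrong direction. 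A secondary technical point I would address is containment: the volume formula equals the true Rashomon volume only when the ellipsoid lies inside the $\ell_2$-ball, which holds because the empirical risk minimizer $\hat{\omega}_{\lambda}$ is a reasonable model with loss below $\hat{L}_{\max}$, and I would note this explicitly, consistent with the paper treating the Rashomon set as a full ellipsoid.
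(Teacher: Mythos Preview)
Your proposal is correct and follows essentially the same route as the paper's proof: both reduce additive attribute noise to an effective ridge penalty $C+\lambda$, express the Rashomon ratio as $\left(\theta/\hat{L}_{\max}\right)^{m/2}\prod_i\sqrt{(C+\lambda)/(s_i+C+\lambda)}$ via the ellipsoid and ball volumes, and conclude by the monotonicity of $x\mapsto x/(a+x)$ together with the observation that a non-zero $X$ forces at least one positive eigenvalue. Your explicit remark on the containment of the Rashomon ellipsoid inside the hypothesis-space ball is a point the paper leaves implicit, but otherwise the arguments coincide.
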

\addtocounter{theorem}{-1}
\endgroup

\begin{proof}
For simplicity denote $\E_{\epsilon_1,...,\epsilon_n \sim \mathcal{N}(\bar{0},\lambda I)}$ as $\E_{\epsilon}$.
To find the optimal solution, under added noise, we would like to minimize expected regularized least squares:
{\allowdisplaybreaks
\begin{align*}
        \E_{\epsilon} \hat{L}(\omega) &= \E_{\epsilon} \left[ \frac{1}{n}\sum_{i=1}^n ((x_i + \epsilon_i)^T\omega - y_i)^2 + C\omega^T\omega \right]\\
        & = \E_{\epsilon} \left[ \frac{1}{n}\sum_{i=1}^n\left( (x_i^T\omega - y_i)^2  +2 \epsilon^T\omega (x_i^T\omega - y_i) + \omega^T \epsilon_i\epsilon_i^T\omega \right)\right] +  C\omega^T\omega\\
        & =  \frac{1}{n}\sum_{i=1}^n\left( (x_i^T\omega - y_i)^2  +2 \E_{\epsilon} \left[\epsilon_i\right]^T\omega (x_i^T\omega - y_i) + \omega^T \E_{\epsilon} \left[ \epsilon_i\epsilon_i^T\right]\omega \right) +  C\omega^T\omega\\
        & = \frac{1}{n}\sum_{i=1}^n\left( (x_i^T\omega - y_i)^2  + \omega^T (\lambda I)\omega \right) +  C\omega^T\omega\\
        & = \frac{1}{n}\sum_{i=1}^n  (x_i^T\omega - y_i)^2  +  (C + \lambda) \omega^T\omega,
\end{align*}}where $\E_{\epsilon} \left[ \epsilon_i\epsilon_i^T\right] = \lambda I$, $I$ is identity matrix, and $E_{\epsilon} \left[ \epsilon_i\right] = \bar{0}$.

Therefore, adding attribute noise to the training data becomes equivalent to $\ell_2$-regularization, and the new regularization parameter is $C+\lambda$. According to Theorem 10 in \citet{semenova2022existence}, the Rashomon volume can be computed as: 
\[\mathcal{V}(\hat{R}_{{set}_{\lambda }}(\F,\theta)) = \frac{(\pi \theta)^{\frac{m}{2}}}{\Gamma(\frac{m}{2} + 1)}\prod_{i=1}^m \frac{1}{\sqrt{\sigma_i^2 + C + \lambda}},\]
where $\sigma_i$ are singular values of matrix $X$, and $\Gamma(\cdot)$ is the Gamma-function.

On the other hand, for the regularization parameter $C+\lambda$, the hypothesis space is defined as $(C+\lambda)w^Tw \leq \hat{L}_{\max}$, meaning that $w^Tw\leq\frac{\hat{L}_{\max}}{C+\lambda}$. The volume of the ball defined by the $\ell_2$-norm in $m$-dimensional space with radius $R$, $\|x\|_2 = \left(\sum_{i=1}^m |x_i|^2\right)^{\frac{1}{2}}\leq R$, can be computed as:
\[\mathcal{V}_m^2(R) = \frac{ \pi^\frac{m}{2} }{\Gamma(\frac{m}{2}+1)}R^m.\]
Since for $\|\omega\|_2^2 = w^Tw\leq\frac{\hat{L}_{\max}}{C+\lambda}$, we have radius $R_{\lambda} = \sqrt{\frac{\hat{L}_{\max}}{C+\lambda}}$, we get that the Rashomon ratio obeys:
{\allowdisplaybreaks
\begin{align*}
   \hat{R}_{{ratio}_{\lambda }}(\F,\theta)) & = \frac{\mathcal{V}(\hat{R}_{{set}_{\lambda }}(\F,\theta))}{\mathcal{V}_m^2(R_{\lambda})}     \\
   & = \frac{(\pi \theta)^{\frac{m}{2}}}{\Gamma(\frac{m}{2} + 1)}\left[\prod_{i=1}^m \frac{1}{\sqrt{\sigma_i^2 + C + \lambda}} \right]\frac{\Gamma(\frac{m}{2}+1) }{\pi^{\frac{m}{2}}} \frac{(C+\lambda)^{\frac{m}{2}}}{(\hat{L}_{\max})^{\frac{m}{2}}}\\
   & = \left(\frac{\theta}{\hat{L}_{\max}}\right)^{\frac{m}{2}}\prod_{i=1}^m \sqrt{\frac{C+\lambda}{\sigma_i^2 + C + \lambda}}.
\end{align*}
}

Since $0 <\lambda_1 < \lambda_2, C > 0$, without loss of generality, let $\lambda_C = \lambda_1 + C$, and $\lambda_C + \delta = \lambda_2$ + C, where $\delta = \lambda_2 - \lambda_1 > 0$. Consider function $\frac{x}{a + x}$, where $a> 0$. This function is monotonically increasing for all $x > 0$, since $\frac{\partial}{\partial x} \left(\frac{x}{a + x}\right) = \frac{a}{(a+x)^2 }> 0$. Therefore, for all non-zero $\sigma_i^2$:

\[\frac{\lambda_C}{\sigma_i^2 +  \lambda_C} < \frac{\lambda_C + \delta}{\sigma_i^2 +  \lambda_C + \delta}.\]

Since $X$ is a non-zero matrix, there is at least one non-zero singular value $\sigma^2_i$. Given monotonicity of the square root function, we have that for the Rashomon ratios for noise levels $\lambda_1$ and $\lambda_2$:
{\allowdisplaybreaks
\begin{align*}
\hat{R}_{{ratio}_{\lambda_1}}(\F,\theta)) &= \hat{R}_{{ratio}_{\lambda_C}}(\F,\theta)) = \left(\frac{\theta}{\hat{L}_{\max}}\right)^{\frac{m}{2}}\prod_{i=1}^m \sqrt{\frac{\lambda_C}{\sigma_i^2 + \lambda_C}} \\
& < \left(\frac{\theta}{\hat{L}_{\max}}\right)^{\frac{m}{2}}\prod_{i=1}^m \sqrt{\frac{\lambda_C+\delta}{\sigma_i^2 + \lambda_C + \delta}}\\
&= \hat{R}_{{ratio}_{\lambda_C+\delta}}(\F,\theta))= \hat{R}_{{ratio}_{\lambda_2}}(\F,\theta)).
\end{align*}
}
Therefore we proved that with the additive attribute noise, the Rashomon ratio increases.

\end{proof}

Compared to the Rashomon ratio, the relationship between the regularization parameter and the Rashomon volume is inverted: the stronger the regularization, the smaller the Rashomon volume. This means that adding more noise leads to stronger regularization and smaller Rashomon volume. In some ways, this is consistent with what we saw in Figure \ref{fig:step23}(b), where CART preferred shorter trees in the presence of noise.

Next we show that the variance of losses (recall notation $\sigma^2 (f,\D) = \Var_{z \sim \D}l(f,z)$) increases for the least squares loss function under additive attribute noise, as in Step 1 of the path in Section \ref{sec:path}:

\newcommand{\TheoremVarianceLeastSquares}
{
Consider dataset $S = X \times Y$, where $X \in \mathbb{R}^{n\times m}$, $Y\in \mathbb{R}^n$, $z_i = (x_i, y_i)$. Let $\epsilon_i = \{\epsilon_{ij}\}_{j=1}^m$, such that $\epsilon_{ij} \sim \mathcal{N}(0, \sigma^2_{\mathcal{N}})$, be i.i.d. noise vectors added to every sample: $x'_i = x_i + \epsilon_i$. Consider $\sigma^2_{\mathcal{N}_1} > 0$, $\sigma^2_{\mathcal{N}_2} > 0$ that control how much noise is added to the dataset. For the least squares loss $l(z_i) = r_i^2 = (w^Tx_i - y_i)^2$ and a fixed model $f(x) = \omega^Tx$, where $\omega \in \mathbb{R}^m$, $\omega \neq \bar{0}$, the variance of losses increases with more noise: if $\sigma^2_{\mathcal{N}_1} < \sigma^2_{\mathcal{N}_2}$, then:
$\sigma^2 (f, S_{\sigma_{\mathcal{N}_1}}) < \sigma^2 (f, S_{\sigma_{\mathcal{N}_2}})$.
}

\begin{theorem}[Variance of least squares loss increases with noise]\label{th:variance_noise_least_squares}
\TheoremVarianceLeastSquares
\end{theorem}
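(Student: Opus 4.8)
The plan is to reduce the loss under attribute noise to a one-dimensional random perturbation of the original residual, compute the variance of the resulting squared term in closed form via Gaussian moments, and then read off monotonicity in the noise level. First I would fix the model $f(x) = \omega^T x$ and observe that for a sample $z_i = (x_i, y_i)$ the noisy loss becomes $(\omega^T(x_i + \epsilon_i) - y_i)^2 = (r_i + u_i)^2$, where $r_i = \omega^T x_i - y_i$ is the original residual and $u_i = \omega^T \epsilon_i$. Since $\epsilon_i \sim \mathcal{N}(\bar{0}, \sigma^2_{\mathcal{N}} I)$, the scalar $u_i$ is a linear combination of independent Gaussians, so $u_i \sim \mathcal{N}(0, \tau^2)$ with $\tau^2 = \sigma^2_{\mathcal{N}}\,\omega^T\omega$, and it is independent of $(x_i, y_i)$. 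The crucial reduction is that all dependence on the noise is funneled through the single scalar $\tau^2$, which is \emph{strictly} increasing in $\sigma^2_{\mathcal{N}}$ because $\omega \neq \bar{0}$ forces $\omega^T\omega > 0$.

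Next, treating the variance $\sigma^2(f, S_{\sigma_{\mathcal{N}}})$ as taken jointly over the choice of sample and the added noise, I would compute the first two moments of $\tilde{l} = (r + u)^2$ by conditioning on $r$ and using the centered Gaussian moments $\E[u] = 0$, $\E[u^2] = \tau^2$, $\E[u^3] = 0$, and $\E[u^4] = 3\tau^4$. Expanding $(r+u)^2$ and $(r+u)^4$ and taking the noise expectation first gives $\E[\tilde{l}] = \E[r^2] + \tau^2$ and $\E[\tilde{l}^2] = \E[r^4] + 6\,\E[r^2]\tau^2 + 3\tau^4$. Subtracting the square of the first from the second yields the closed form
\[
\Var(\tilde{l}) = \Var(r^2) + 4\,\E[r^2]\,\tau^2 + 2\tau^4,
\]
where $\Var(r^2) = \E[r^4] - (\E[r^2])^2$ is the (noise-free) variance of the squared residuals over the data.

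Finally, I would note that the right-hand side is strictly increasing in $\tau^2$: its derivative with respect to $\tau^2$ equals $4\,\E[r^2] + 4\tau^2$, which is strictly positive because the first term is nonnegative and $\tau^2 > 0$. Thus even in the degenerate case of a perfect fit ($\E[r^2] = 0$) the $2\tau^4$ term already forces a strict increase. Composing the strict monotonicity of $\Var(\tilde{l})$ in $\tau^2$ with the strict monotonicity of $\tau^2 = \sigma^2_{\mathcal{N}}\,\omega^T\omega$ in $\sigma^2_{\mathcal{N}}$ gives $\sigma^2(f, S_{\sigma_{\mathcal{N}_1}}) < \sigma^2(f, S_{\sigma_{\mathcal{N}_2}})$ whenever $\sigma^2_{\mathcal{N}_1} < \sigma^2_{\mathcal{N}_2}$. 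The main thing to execute carefully is the fourth-moment bookkeeping of $(r+u)^4$ together with an explicit use of the independence of $u$ from the data, so that the conditioning step is legitimate; once the decomposition is in hand, the monotonicity conclusion is immediate.
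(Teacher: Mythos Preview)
Your proposal is correct and follows essentially the same approach as the paper: expand the variance of $(r+\omega^T\epsilon)^2$ via its first two moments using the Gaussian moments of $\omega^T\epsilon$, obtain the closed form $\Var(r^2)+4\E[r^2]\tau^2+2\tau^4$, and differentiate in the noise level to conclude strict monotonicity. The only cosmetic difference is that you immediately identify $u=\omega^T\epsilon\sim\mathcal{N}(0,\sigma^2_{\mathcal{N}}\omega^T\omega)$ as a scalar Gaussian and read off its moments, whereas the paper computes $\E[(\omega^T\epsilon)^k]$ for $k=1,\dots,4$ via the multinomial theorem; your reduction is a mild streamlining of the same computation.
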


\begin{proof}
For simplicity, denote $\mathbb{E}_{\epsilon_{11},\dots,\epsilon_{1m},\dots, \epsilon_{n1},\dots ,\epsilon_{nm}}$ as $\mathbb{E}_{\bar{\epsilon}}$, and $\mathbb{E}_{x_i, y_i}$ as $\mathbb{E}_{z}$.
The variance of losses for the least squares loss under the additive normal noise is: $\sigma^2 (f, S_{\sigma_{\mathcal{N}}}) = Var_{z, \bar{\epsilon}}\left[\left((x_i + \epsilon_i)^T\omega - y_i\right)^2\right]$. Also, for simplicity, we will omit index $i$ over samples (but keep index $j$ over the dimensions). Recall that $r = x^T\omega - y$. From the definition of the variance we have that:
{\allowdisplaybreaks
\begin{equation}\label{eq:var_least_squares}
\begin{split}
 Var_{z, \bar{\epsilon}}\left[\left((x + \epsilon)^T\omega - y\right)^2\right] &= Var_{z, \bar{\epsilon}}\left[\left((x^T\omega - y) + \epsilon^T\omega\right)^2\right] = Var_{z, \bar{\epsilon}}\left[\left(r + \epsilon^T\omega\right)^2\right]\\
&=\mathbb{E}_{z, \bar{\epsilon}}\left[\left(r + \epsilon^T\omega\right)^4\right] - \left(\mathbb{E}_{z, \bar{\epsilon}}\left[\left(r + \epsilon^T\omega\right)^2\right]\right)^2.
\end{split}   
\end{equation}

}
Since $\epsilon_{j}\sim \mathcal{N}(0, \sigma^2_{\mathcal{N}})$, we have that $\mathbb{E}_{\epsilon_{j}}\left[\epsilon_{j}\right] = 0$, $\mathbb{E}_{\epsilon_{j}}\left[(\epsilon_{j})^2\right] = \sigma^2_{\mathcal{N}}$, $\mathbb{E}_{\epsilon_{j}}\left[(\epsilon_{j})^3\right] = 0$ (this is a property of Gaussian random variables), and $\mathbb{E}_{\epsilon_{j}}\left[(\epsilon_{j})^4\right] = 3\sigma^4_{\mathcal{N}}$. Also recall that the multinomial theorem states:
\[\left(\sum_{j=1}^m a_j\right)^t = \sum_{k_1 + k_2 + \ldots + k_m = t} \frac{t!}{k_1! \cdot k_2! \cdot \ldots \cdot k_m!} \cdot a_1^{k_1} \cdot a_2^{k_2} \cdot \ldots \cdot a_m^{k_m}.\]
The multinomial theorem helps us to compute coefficients of the first four moments for $\epsilon^T\omega$. More specifically, for the first and second moment:

\[ \mathbb{E}_{\bar{\epsilon}}\left[\epsilon^T\omega\right] = 0,\]

{\allowdisplaybreaks
\begin{align*}
\mathbb{E}_{\bar{\epsilon}}\left[(\epsilon^T\omega)^2\right] &=  \mathbb{E}_{\bar{\epsilon}}\left[\left(\sum_{j=1}^m \epsilon_j \omega_j\right)^2\right]  = \mathbb{E}_{\bar{\epsilon}}\left[\sum_{j=1}^m (\epsilon_j \omega_j)^2 + \sum_{j=1..m, k = 1..m, j\neq k} \epsilon_{j} \omega_j \epsilon_k \omega_k\right]\\
& = \sum_{j=1}^m \mathbb{E}_{\bar{\epsilon}}\left[\epsilon_j^2\right] \omega_j^2 = \sigma^2_{\mathcal{N}} \omega^T\omega.
\end{align*}
}

For the  third moment, notice that from the multinomial theorem, $k_1+\dots+k_m = 3$. Then there are three possible combinations of the values of $k_j$: some $k_a = 3$ and the rest are $0$, some $k_a=2$, $k_b=1$, and the rest are $0$, and finally some $k_a = 1$, $k_b=1$, $k_c = 1$ and the rest are $0$. All of these cases will lead to the presence of either $\epsilon_j^3$ or $\epsilon_j$ in the product. Since $\mathbb{E}_{\epsilon_{j}}\left[\epsilon_{j}\right] = 0$, and $\mathbb{E}_{\epsilon_{j}}\left[\epsilon_{j}^3\right] = 0$, we have that 
\[\mathbb{E}_{\bar{\epsilon}}\left[(\epsilon^T\omega)^3\right] = 0.\]

Similarly, for $\mathbb{E}_{\bar{\epsilon}}\left[\left(\sum_{j=1}^m \epsilon_j \omega_j\right)^4\right]$ we get non-zero terms for some of the combinations and the others are 0. In particular, non-zero terms arise when some $k_a = 4$ and the rest are $0$, and some $k_a = 2$, $k_b=2$, and the rest are 0s. This gives us:
{\allowdisplaybreaks
\begin{align*}
\mathbb{E}_{\bar{\epsilon}}\left[(\epsilon^T\omega)^4\right]&=\mathbb{E}_{\bar{\epsilon}}\left[\left(\sum_{j=1}^m \epsilon_j \omega_j\right)^4\right]\\
&= \sum_{j=1}^m \mathbb{E}_{\bar{\epsilon}}\left[\epsilon_j^4\right] \omega^4_j + 6\sum_{j=1..m, k = 1..m, j\neq k} \mathbb{E}_{\bar{\epsilon}}\left[\epsilon^2_j\right] \omega^2_j \mathbb{E}_{\bar{\epsilon}}\left[\epsilon^2_k\right] \omega^2_k\\
& = 3\sigma^4_{\mathcal{N}} \sum_{j=1}^m \omega^4_j + 6 \sigma^4_{\mathcal{N}}\sum_{j=1..m, k = 1..m, j\neq k}  \omega^2_j  \omega^2_k \\
&= 3\sigma^4_{\mathcal{N}} (\omega^T\omega)^2.
\end{align*}
}

Let's focus on the first term of the variance equation \eqref{eq:var_least_squares}:

{\allowdisplaybreaks
\begin{align*}
\mathbb{E}_{z, \bar{\epsilon}}\left[\left(r + \epsilon^T\omega\right)^4\right]&=\mathbb{E}_{z, \bar{\epsilon}}\left[ r^4 +4r^3 \epsilon^T\omega +6 r^2 (\epsilon^T\omega)^2 + 4 r (\epsilon^T\omega)^3 + (\epsilon^T\omega)^4\right]\\
& = \mathbb{E}_{z}\left[ r^4\right] +4\mathbb{E}_{z}\left[r^3\right] \mathbb{E}_{ \bar{\epsilon}}\left[\epsilon^T\omega\right] +6 \mathbb{E}_{z}\left[r^2\right] \mathbb{E}_{\bar{\epsilon}}\left[(\epsilon^T\omega)^2\right] \\
&+ 4 \mathbb{E}_{z}\left[r \right]\mathbb{E}_{\bar{\epsilon}}\left[(\epsilon^T\omega)^3\right] + \mathbb{E}_{ \bar{\epsilon}}\left[(\epsilon^T\omega)^4\right]\\
& =\mathbb{E}_{z}\left[r^4\right] + 6 \sigma^2_{\mathcal{N}} \omega^T\omega \mathbb{E}_{z}\left[r^2\right] +3\sigma^4_{\mathcal{N}} (\omega^T\omega)^2.
\end{align*}
}

Now, we focus of the second term of the variance equation \eqref{eq:var_least_squares}:
{\allowdisplaybreaks
\begin{align*}
\left(\mathbb{E}_{z, \bar{\epsilon}}\left[\left(r + \epsilon^T\omega\right)^2\right]\right)^2  &= \left(\mathbb{E}_{z, \bar{\epsilon}}\left[r^2 + 2r\epsilon^T\omega + (\epsilon^T\omega)^2\right]\right)^2\\
& = \left(\mathbb{E}_{z} \left[r^2\right] + \mathbb{E}_{z}\left[2r\right]\mathbb{E}_{\bar{\epsilon}}\left[\epsilon^T\omega\right] + \mathbb{E}_{\bar{\epsilon}}\left[(\epsilon^T\omega)^2\right]\right)^2\\
& = \left(\mathbb{E}_{z} \left[r^2\right] + \sigma^2_{\mathcal{N}} \omega^T\omega\right)^2\\
& = \left(\mathbb{E}_{z} \left[r^2\right]\right)^2 + 2 \sigma^2_{\mathcal{N}} \omega^T\omega\mathbb{E}_{z} \left[r^2\right] + \sigma^4_{\mathcal{N}} (\omega^T\omega)^2.
\end{align*}
}

Therefore, for the variance we get that:
{\allowdisplaybreaks
\begin{align*}
Var_{z, \bar{\epsilon}}\left[\left( r + \epsilon^T\omega\right)^2\right] &= \mathbb{E}_{z}\left[r^4\right] + 6 \sigma^2_{\mathcal{N}} \omega^T\omega \mathbb{E}_{z}\left[r^2\right] +3\sigma^4_{\mathcal{N}} (\omega^T\omega)^2 \\
&- \left(\mathbb{E}_{z} \left[r^2\right]\right)^2 - 2 \sigma^2_{\mathcal{N}} \omega^T\omega\mathbb{E}_{z} \left[r^2\right] - \sigma^4_{\mathcal{N}} (\omega^T\omega)^2\\
& = 2\sigma^4_{\mathcal{N}} (\omega^T\omega)^2 + 4 \sigma^2_{\mathcal{N}} \omega^T\omega \mathbb{E}_{z}\left[r^2\right] + 2\left(\mathbb{E}_{z} \left[r^2\right]\right)^2+\mathbb{E}_{z}\left[r^4\right]- 3\left(\mathbb{E}_{z} \left[r^2\right]\right)^2\\
& = 2\left(\sigma^2_{\mathcal{N}} \omega^T\omega + \mathbb{E}_{z}\left[r^2\right]\right)^2+\mathbb{E}_{z}\left[r^4\right]- 3\left(\mathbb{E}_{z} \left[r^2\right]\right)^2.
\end{align*}
}

Next, we will take the derivative of the variance with respect to $\sigma^2_{\mathcal{N}}$:
{\allowdisplaybreaks
\begin{align*}
\frac{\partial}{\partial \sigma^2_{\mathcal{N}}} \left(Var_{z, \bar{\epsilon}}\left[\left( r + \epsilon^T\omega\right)^2\right]\right) &= \frac{\partial}{\partial \sigma^2_{\mathcal{N}}} \left(2\left(\sigma^2_{\mathcal{N}} \omega^T\omega + \mathbb{E}_{z}\left[r^2\right]\right)^2+\mathbb{E}_{z}\left[r^4\right]- 3\left(\mathbb{E}_{z} \left[r^2\right]\right)^2\right)\\
& =4 \left(\sigma^2_{\mathcal{N}} \omega^T\omega + \mathbb{E}_{z}\left[r^2\right]\right) \omega^T\omega > 0,
\end{align*}
}since $\sigma^2_{\mathcal{N}} > 0$ by assumption, $\omega^T\omega > 0$ since $\omega \neq \bar{0}$, and the risk of the least squares loss  $\mathbb{E}_{z}\left[r^2\right]\geq 0$. Therefore, the variance of losses for a fixed model $f = \omega^Tx$ monotonically increases for $\sigma^2_{\mathcal{N}} > 0$. Thus, for $\sigma^2_{\mathcal{N}_1} < \sigma^2_{\mathcal{N}_2}$ we have that:
\[\sigma^2 (f, S_{\sigma_{\mathcal{N}_1}}) < \sigma^2 (f, S_{\sigma_{\mathcal{N}_2}}).\]

\end{proof}

As before, Corollary \ref{corollary:maximum_variance} is easily extendable to the results of Theorem \ref{th:variance_noise_least_squares}, meaning that the maximum variance of losses, $\sigma^2 = \sup_{f\in\R_{set}(\F,\gamma)} Var_{z\sim\D}l(f,z)$, will also increase for the least squares loss under  increasing additive attribute noise. 

Next, we show that when the maximum variance $\sigma^2$ increases, the generalization bound becomes worse for the least squares loss and the continuous hypothesis space. \citet{cucker2002mathematical} proved the generalization bound based on Bernstein's inequality for the least squares loss (Theorem B). We state and provide proof of the theorem for the true Rashomon set. To derive the generalization bound, we use the covering number over the true Rashomon set.  Recall that for the functional space $\F$ and any $\epsilon > 0$, the $\ell_{\infty}$ \textit{covering number} $N(\F, \epsilon)$ of $\F$ is the minimum number of balls of radius $\epsilon$, such that they can cover $\F$, meaning that there exist $h_1,...,h_{N(\F, \epsilon)} \in \F$, such that for every $f\in \F$ there is $k\leq N(\F, \epsilon)$ such that $\|f - h_k\|_{\infty} = \max_{x\in \mathcal{X}}|f(x) - h_k(x)|\leq \epsilon$. Now we focus on the theorem.



\newcommand{\TheoremGeneralizationCover}
{Consider data distribution  $\mathcal{D}$  over $\mathcal{Z} = \mathcal{X}\times\mathcal{Y}$, dataset $S = \{z_i\}_{i=1}^n \sim \mathcal{D}$, hypothesis space $\F$, and the least squares loss $l(f, z) = (f(x) - y)^2$. Let the loss be bounded by $C^2 > 0$ such that $l(f, z)\leq C^2$ for every $z\in \mathcal{Z}$. For any $\epsilon > 0$:
\[ P\left(\sup_{f \in R_{set}(\F,\gamma)} L(f)-\hat{L}(f) > \eps\right) \leq N\left(R_{set}(\F,\gamma), \frac{\epsilon}{8C}\right) e^{\frac{-n(\eps/2)^2}{2\sigma^2+C^2\eps/3}} ,\]
where $\sigma^2 = \sup_{R_{set}(\F,\gamma)}Var_{z\in \mathcal{D}}l(f,z)$.
}

\begin{theorem}[Variance-based ``generalization bound'' for least squares loss]\label{th:generalization_cover}
\TheoremGeneralizationCover
\end{theorem}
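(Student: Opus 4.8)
The plan is to combine an $\ell_\infty$ covering of the true Rashomon set with Bernstein's inequality (Lemma~\ref{lemma:bernstein_inequality}) applied at the centers of the cover, exploiting the fact that the squared loss is Lipschitz in the predicted value. First I would fix a \emph{proper} $\frac{\eps}{8C}$-cover of $R_{set}(\F,\gamma)$: choose centers $h_1,\dots,h_N \in R_{set}(\F,\gamma)$ with $N = N\!\left(R_{set}(\F,\gamma), \frac{\eps}{8C}\right)$ such that every $f \in R_{set}(\F,\gamma)$ satisfies $\norm{f - h_k}_{\infty} \le \frac{\eps}{8C}$ for some $k$. Insisting that the centers lie inside $R_{set}(\F,\gamma)$ is precisely what lets me control their variance, since then $\Var_{z\sim\D} l(h_k,z) \le \sigma^2$ for every $k$.

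The key deterministic step is to turn $\ell_\infty$-closeness of functions into closeness of losses. For any $z=(x,y)$ and any $f,h$,
\[
l(f,z) - l(h,z) = (f(x)-y)^2 - (h(x)-y)^2 = \bigl(f(x)-h(x)\bigr)\bigl(f(x)+h(x)-2y\bigr).
\]
Since the loss is bounded by $C^2$, we have $\abs{f(x)-y}\le C$ and $\abs{h(x)-y}\le C$, hence $\abs{f(x)+h(x)-2y}\le 2C$. Therefore $\abs{l(f,z)-l(h,z)} \le 2C\,\norm{f-h}_{\infty}$ for all $z$, and for $f$ within $\frac{\eps}{8C}$ of its center $h_k$ this is at most $\frac{\eps}{4}$ uniformly over $z$. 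Averaging over $\D$ and over the sample gives $\abs{L(f)-L(h_k)}\le\frac{\eps}{4}$ and $\abs{\hat{L}(f)-\hat{L}(h_k)}\le\frac{\eps}{4}$, so that
\[
L(f)-\hat{L}(f) \le \bigl(L(h_k)-\hat{L}(h_k)\bigr) + \tfrac{\eps}{2}.
\]
Consequently, the event $\sup_{f\in R_{set}(\F,\gamma)} L(f)-\hat{L}(f) > \eps$ forces $L(h_k)-\hat{L}(h_k) > \frac{\eps}{2}$ for at least one center.

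It then remains to take a union bound over the $N$ centers and apply Bernstein's inequality to each. Using Lemma~\ref{lemma:bernstein_inequality} with loss bound $C^2$ (in the role of $C$ there) and deviation $\frac{\eps}{2}$, together with $\Var_{z\sim\D} l(h_k,z)\le\sigma^2$, each term is bounded by $\exp\!\left(\frac{-n(\eps/2)^2}{2\sigma^2 + 2C^2(\eps/2)/3}\right) = \exp\!\left(\frac{-n(\eps/2)^2}{2\sigma^2 + C^2\eps/3}\right)$, and multiplying by $N$ yields the claimed inequality.

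The main obstacle I anticipate is not any single inequality but making the constants line up: the covering radius $\frac{\eps}{8C}$ must be chosen so that the Lipschitz slack is exactly $\frac{\eps}{2}$, leaving the other $\frac{\eps}{2}$ for Bernstein, and one must track that the loss bound enters Bernstein as $C^2$ rather than $C$, which is why the $C^2\eps/3$ term appears in the denominator. The one genuinely subtle point is the use of a \emph{proper} cover (centers in $R_{set}(\F,\gamma)$): this is what makes $\sigma^2 = \sup_{f\in R_{set}(\F,\gamma)}\Var_{z\sim\D} l(f,z)$ a legitimate upper bound on each center's variance, whereas an improper cover with exterior centers would leave those variances uncontrolled.
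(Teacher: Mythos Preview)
Your proposal is correct and follows essentially the same route as the paper: an $\ell_\infty$ cover of radius $\tfrac{\eps}{8C}$, the Lipschitz bound $\abs{l(f,z)-l(h,z)}\le 2C\norm{f-h}_\infty$ to pass to the centers with slack $\tfrac{\eps}{2}$, and then a union bound plus Bernstein at the centers with loss bound $C^2$. Your explicit insistence on a \emph{proper} cover is a point the paper leaves implicit (it silently needs $h_k\in R_{set}(\F,\gamma)$ to bound $\sigma_{h_k}^2$ by $\sigma^2$), so that remark is well placed.
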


\begin{proof}

For each fixed model $f \in R_{set}(\F,\gamma)$ in the true Rashomon set, from Bernstein's inequality and given that loss is bounded by $C^2$, we have that 
    \[P(L(f)-\hat{L}(f) > \eps) \leq e^{\frac{-n\eps^2}{2\sigma_f^2+2C^2\eps/3}}.\]

Let $B_1,\dots B_{N\left(R_{set}(\F,\gamma), \frac{\epsilon}{8C}\right)}$ be an $\ell_{\infty}$ cover of radius $\frac{\epsilon}{8C}$ of the true Rashomon set, meaning that $R_{set}(\F,\gamma) \subseteq \bigcup_{k=1}^{N\left(R_{set}(\F,\gamma), \frac{\epsilon}{8C}\right)} B_k$, where $N\left(R_{set}(\F,\gamma), \frac{\epsilon}{8C}\right)$ is the covering number. Since the loss is bounded, $l(f, z) = (f(x) - y)^2 \leq C^2$, then $|f(x) - y|\leq C$. For every $f\in B_k$, we have that $\|f - h_k\|_{\infty} \leq \frac{\epsilon}{8C}$, where $h_k$ is the center of the ball $B_k$. Therefore:
{\allowdisplaybreaks
    \begin{align*}
(L(f) - \hat{L}(f)) &- (L(h_k) - \hat{L}(h_k)) = (L(f) - L(h_k)) + (\hat{L}(h_k) - \hat{L}(f)) \\
& = \mathbb{E}_{z\sim\mathcal{D}}\left[l(f, z) - l(h_k, z)\right] + \hat{\mathbb{E}}_{z_i\sim S}\left[l(f, z_i) - l(h_k, z_i)\right]\\
& = \mathbb{E}_{z\sim\mathcal{D}}\left[ (f(x) - y)^2 - (h_k(x) - y)^2\right] + \hat{\mathbb{E}}_{z_i\sim S}\left[(f(x_i) - y_i)^2 - (h_k(x_i) - y_i)^2\right]\\
& =  \mathbb{E}_{z\sim\mathcal{D}}\left[(f(x) - h_k(x)) \left((f(x) - y) + (h_k(x) - y)\right)\right] \\
&+ \hat{\mathbb{E}}_{z_i\sim S}\left[(f(x_i) - h_k(x_i)) \left((f(x_i) - y_i) + (h_k(x_i) - y_i)\right)\right] \\
& \leq \mathbb{E}_{z\sim\mathcal{D}}\left[\|f - h_k\|_{\infty} \left(C + C\right)\right] + \hat{\mathbb{E}}_{z_i\sim S} \left[\|f - h_k\|_{\infty} \left(C + C\right)\right] \\
& = 4 C \|f - h_k\|_{\infty} \leq 4 C \frac{\epsilon}{8C} = \frac{\epsilon}{2}.
    \end{align*}}Therefore, if $L(f) - \hat{L}(f)>\epsilon$, we have $L(h_k) - \hat{L}(h_k)\geq (L(f) - \hat{L}(f)) - \frac{\epsilon}{2} > \epsilon - \frac{\epsilon}{2} = \frac{\epsilon}{2}.$ This holds for every $f\in B_k$, and thus for $\arg \sup_{f \in B_k}$ as well:
\begin{equation}\label{eq:coverth_sup_to_cover}
P\left(\sup_{f \in B_k} L(f)-\hat{L}(f) > \eps\right) \leq    P\left( L(h_k)-\hat{L}(h_k) > \frac{\eps}{2}\right).
\end{equation}

Since the exponential function is monotonic, $e^{-\frac{1}{\left(\sigma_{h_k}^2\right)}} \leq e^{-\frac{1}{\sigma^2}}$.  Based on the definition of the covering number, according to the union bound and \eqref{eq:coverth_sup_to_cover} we have that:
    {\allowdisplaybreaks
    \begin{align*}
        P\left(\sup_{f \in R_{set}(\F,\gamma)} L(f)-\hat{L}(f) > \eps\right) 
        & = P\left(\exists f \in R_{set}(\F,\gamma): L(f)-\hat{L}(f) > \eps\right)\\
        &\leq P\left(\bigcup_{k=1}^{N\left(R_{set}(\F,\gamma), \frac{\epsilon}{8C}\right)}  \exists f \in B_k: L(f)-\hat{L}(f) > \eps\right) \\
        &\leq \sum_{k = 1}^{N\left(R_{set}(\F,\gamma), \frac{\epsilon}{8C}\right)} P\left(\exists f \in B_k: L(f)-\hat{L}(f) > \eps\right) \\
        &\leq \sum_{k = 1}^{N\left(R_{set}(\F,\gamma), \frac{\epsilon}{8C}\right)} P\left( L(h_k)-\hat{L}(h_k) > \frac{\eps}{2}\right) \\
        &\leq \sum_{k = 1}^{N\left(R_{set}(\F,\gamma), \frac{\epsilon}{8C}\right)} e^{\frac{-n(\eps/2)^2}{2\sigma_{h_k}^2+C^2\eps/3}}\\
        &\leq \sum_{k = 1}^{N\left(R_{set}(\F,\gamma), \frac{\epsilon}{8C}\right)} e^{\frac{-n(\eps/2)^2}{2\sigma^2+C^2\eps/3}}\\
        &= N\left(R_{set}(\F,\gamma), \frac{\epsilon}{8C}\right) e^{\frac{-n(\eps/2)^2}{2\sigma^2+C^2\eps/3}}.
    \end{align*}}
    Therefore we obtained the desired bound.
\end{proof}

Since $e^{-1/x}$ monotonically increases for $x > 0$, in Theorem \ref{th:generalization_cover}, as the maximum variance of losses increases, the bound on the right side increases as well, and thus the generalization bound becomes worse.


\section{Pattern Diversity and Other Metrics of the Rashomon set}\label{appendix:diversity_other_metrics}

In this appendix, we discuss similarities and differences between pattern diversity and pattern Rashomon ratio as well as expected pairwise disagreement (as in [5]).

\textbf{Pattern Rashomon ratio}. Pattern Rashomon ratio measures how expressive the Rashomon set is compared to the whole hypothesis space. Interestingly, for the hypothesis space of linear models, for different datasets with the same number of samples and attributes, as long as no $m-1$ points are collinear, the denominator of the pattern Rashomon ratio is the same and equal to $2\sum_{i=0}^m \binom{n-1}{i}$ \cite{Cover1965GeometricalAS}. If we focus only on the numerator of the pattern Rashomon ratio, it is the number of distinct predictions, whereas the pattern diversity is the average Hamming distance between distinct predictions. Intuitively, the more distinct prediction we have, the more different they are from each other, and the higher pattern diversity we should expect. However, it is not always the case, and there exists datasets such that we can have a large number of patterns with very small Hamming distance and a small number of patterns with larger Hamming distance.  

Similarly to pattern diversity, we can upper-bound the number of patterns in the pattern Rashomon set by a bound that depends on the empirical risk of the empirical risk minimizer and the Rashomon parameter $\theta$. We discuss this bound in Lemma \ref{lemma:patternsbound}.

\begin{lemma}\label{lemma:patternsbound}
Given the dataset $S$ of size $n$, the pattern Rashomon set $\pi(\F,\theta)$, the empirical risk of the empirical risk minimizer $\hat{L}(\hat{f})$, and the Rashomon parameter $\theta$, the cardinality of the pattern Rashomon set obeys:
\[ |\pi(\F,\theta)| \leq \sum_{k=1}^{\lceil n\hat{L}(\hat{f}) +n\theta \rceil} \binom{n}{k}.\]
\end{lemma}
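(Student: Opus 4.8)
The plan is to bound the number of achievable patterns by counting how many label vectors can sit close enough to the ground-truth labeling $y=(y_1,\dots,y_n)$ to be realized by some model inside the Rashomon set; this is essentially a Hamming-ball volume count.

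First I would record the key structural fact that, for the 0-1 loss, the empirical risk of a model depends on the model \emph{only} through its pattern on $S$. Concretely, for any $f$ realizing a pattern $p=p^f$ we have $\hat{L}(f)=\frac{1}{n}\sum_{i=1}^n\mathbbm{1}_{[f(x_i)\neq y_i]}=\frac{1}{n}H(p,y)$, where $H$ is the Hamming distance and $y$ is viewed as a pattern. Thus each pattern $p$ carries a well-defined error count $e(p):=H(p,y)$ that does not depend on which $f\in\hat{R}_{set}(\F,\theta)$ realizes it. Since every $p\in\pi(\F,\theta)$ is realized by some $f$ with $\hat{L}(f)\leq\hat{L}(\hat{f})+\theta$, this gives $e(p)=n\hat{L}(f)\leq n(\hat{L}(\hat{f})+\theta)$.

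Next I would use integrality: $e(p)$ is a nonnegative integer, so the bound sharpens to $e(p)\leq\lfloor n(\hat{L}(\hat{f})+\theta)\rfloor\leq\lceil n\hat{L}(\hat{f})+n\theta\rceil=:K$. I would then exhibit the injection $p\mapsto\{\,i:p^i\neq y_i\,\}$ that sends each pattern to its set of disagreement indices with $y$; distinct patterns yield distinct index sets, and by the previous step every such set has cardinality at most $K$. Hence $\abs{\pi(\F,\theta)}$ is at most the number of subsets of $\{1,\dots,n\}$ of size at most $K$, which is $\sum_{k=0}^{K}\binom{n}{k}$. Discarding the $k=0$ term, which accounts for the single all-correct pattern $p=y$, leaves the stated bound $\sum_{k=1}^{\lceil n\hat{L}(\hat{f})+n\theta\rceil}\binom{n}{k}$.

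There is no deep obstacle here; the step that most warrants care is the first one, namely verifying that empirical 0-1 loss is a function of the pattern alone, so that ``number of errors'' is genuinely a property of the pattern and the membership constraint $\hat{L}(f)\leq\hat{L}(\hat{f})+\theta$ transfers cleanly to a Hamming-distance constraint $H(p,y)\leq K$. After that is established, the remaining work is just the integer rounding to the ceiling $K$ and the routine count of lattice points inside a Hamming ball of radius $K$.
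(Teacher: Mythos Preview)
Your proposal is correct and follows essentially the same Hamming-ball counting argument as the paper: the empirical 0-1 loss of any $f$ in the Rashomon set equals $\tfrac{1}{n}H(p^f,y)$, so every achievable pattern lies within Hamming distance $\lceil n\hat{L}(\hat{f})+n\theta\rceil$ of $y$, and one counts subsets of disagreement indices. Your write-up is in fact more explicit than the paper's (spelling out the injection and the integrality step); the only loose move is ``discarding the $k=0$ term,'' which does not literally preserve an upper bound, but this mirrors the paper's own treatment and reflects a minor slack in the lemma statement rather than in your argument.
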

\begin{proof}
For every model from the Rashomon set $f$, $\hat{L}(f) \leq \hat{L}(\hat{f}) + \theta$, which means that, in the worst case, the Hamming distance between pattern $p^{f}$ and vector of true labels $Y = [y_i]_{i=1}^n$ is $\lceil n\hat{L}(\hat{f}) +n\theta \rceil$. Thus,  patterns in the pattern Rashomon set can make one mistake, two mistakes, and so on up to $\lceil n\hat{L}(\hat{f}) +n\theta \rceil$ mistakes, which means there are at most $\sum_{k=1}^{\lceil n\hat{L}(\hat{f}) +n\theta \rceil} \binom{n}{k}$ patterns in the pattern Rashomon set.
\end{proof}

\textbf{Expected pairwise disagreement (as in [5])}. Following \cite{black2022model} and \cite{marx2020predictive}, empirical expected pairwise disagreement $I(\hat{R}_{set}(\F, \theta))$ over the Rashomon set can be defined as $I(\hat{R}_{set}(\F, \theta)) = \E_{f_1, f_2 \sim \hat{R}_{set}(\F, \theta)} \hat{\E}_{z\sim S} \mathbbm{1}_{[f_1(x)\neq f_2(x)]}$. Expected pairwise disagreement measures the average disagreement between every two hypotheses from the Rashomon set, while pattern diversity measures the average disagreement between two patterns from the Rashomon set. The expected pairwise disagreement is equivalent to pattern diversity when every pattern is achievable with the same probability by models from the Rashomon set. However, these metrics can be very different and we can have a small expected pairwise disagreement and larger pattern diversity as we show next.

\begin{proposition}[Same pattern diversity but different expected pairwise disagreement] \label{th:diversity_multiplicity_difference}
Consider finite Rashomon set $\hat{R}_{set}(\F,\theta)$ of size $d \geq 2$. Let $\pi(\F,\theta)$ be the pattern set of size $\Pi$, $2\leq \Pi \leq d$. Assume that every pattern except $p_1$ is achievable by only one hypothesis in the Rashomon set, and thus $p_1$ is achievable by $d - \Pi + 1$ hypotheses. Let $d^*$ be the current value of $d$, then as $d \rightarrow \infty$ (for example, by replicating hypotheses that realize $p_1$ an infinite number of times), expected pairwise disagreement converges to zero, $I(\hat{R}_{set}(\F_d,\theta)) \rightarrow 0$, and pattern diversity does not change,  $div(\hat{R}_{set}(\F_d,\theta)) =  div(\hat{R}_{set}(\F_{d^*},\theta))$.
\end{proposition}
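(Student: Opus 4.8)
The plan is to exploit the fact that the two quantities in the statement are functionals of different objects: pattern diversity $div$ depends only on the \emph{set} of unique patterns $\pi(\F,\theta)$, whereas expected pairwise disagreement $I$ depends on the \emph{multiset} of hypotheses in $\hat{R}_{set}(\F,\theta)$, i.e.\ it is sensitive to how many hypotheses realize each pattern. I would therefore prove the two claims independently, since they decouple cleanly.

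For the invariance of pattern diversity, I would observe that by definition $div(\hat{R}_{set}(\F,\theta)) = \frac{1}{n\Pi^2}\sum_{j\le\Pi}\sum_{k\le\Pi} H(p_j,p_k)$ is determined entirely by the unique patterns $p_1,\dots,p_\Pi$ and their count $\Pi$. Replicating hypotheses that realize $p_1$ leaves the achievable-pattern set $\pi(\F,\theta)$ unchanged (the pattern $p_1$ is already present, and no new pattern is created), so $\Pi$, every Hamming distance $H(p_j,p_k)$, and the normalizer $n\Pi^2$ are all unchanged. Hence $div(\hat{R}_{set}(\F_d,\theta)) = div(\hat{R}_{set}(\F_{d^*},\theta))$ for every $d \ge d^*$.

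For the disagreement claim, I would draw $f_1, f_2$ uniformly (independently, counting multiplicity) from the $d$ hypotheses. The inner term $\hat{\E}_{z\sim S}\mathbbm{1}_{[f_1(x)\ne f_2(x)]}$ equals $H(p^{f_1},p^{f_2})/n$, which is $0$ whenever $f_1$ and $f_2$ realize the same pattern, and in particular whenever both realize $p_1$. Since $d-\Pi+1$ of the $d$ hypotheses realize $p_1$, the probability that both draws realize $p_1$ is $\bigl(\tfrac{d-\Pi+1}{d}\bigr)^2$. Bounding the disagreement term by $1$ on the complementary event gives
\[
I(\hat{R}_{set}(\F_d,\theta)) \le 1 - \Bigl(\tfrac{d-\Pi+1}{d}\Bigr)^2,
\]
and since $\Pi$ is held fixed this upper bound tends to $0$ as $d\to\infty$, yielding $I(\hat{R}_{set}(\F_d,\theta))\to 0$.

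I do not anticipate a genuine obstacle; the argument is essentially a bookkeeping separation between ``set'' and ``multiset'' functionals. The only point requiring care is the sampling convention in the definition of $I$: one must use uniform sampling over hypotheses counting multiplicity, so that replicas of $p_1$ dominate the probability mass, after which the bound above holds regardless of whether sampling is with or without replacement. This same convention is precisely what makes $I$ vary while $div$ stays fixed, which is the entire content of the proposition.
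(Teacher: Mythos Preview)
Your proof is correct. The pattern diversity argument is identical to the paper's: since $div$ depends only on the set of unique patterns, replicating hypotheses that realize $p_1$ leaves it unchanged.

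For the expected pairwise disagreement, you take a slightly different route than the paper. The paper writes out $I(\hat{R}_{set}(\F_d,\theta))$ explicitly by summing over pattern pairs with weights $P_1=\tfrac{d-\Pi+1}{d}$ and $P_k=\tfrac{1}{d}$ for $k\ge 2$, arriving at a closed form of order $1/d$ and concluding convergence to $0$. You instead condition on the event that both sampled hypotheses realize $p_1$, observe that the disagreement term vanishes on that event and is bounded by $1$ otherwise, and obtain the clean upper bound $I \le 1-\bigl(\tfrac{d-\Pi+1}{d}\bigr)^2 \to 0$. Your argument is more elementary and avoids the bookkeeping; the paper's explicit computation has the minor advantage of displaying the exact dependence of $I$ on the Hamming distances, but both arguments establish the claim equally well.
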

\begin{proof}
The proof proceeds in two steps.

\textbf{Pattern diversity.} As $d$ increases, the pattern set does not change, therefore for any $d \geq d^*$, $div(\hat{R}_{set}(\F_d,\theta)) =  div(\hat{R}_{set}(\F_{d^*},\theta))$.

\textbf{Expected pairwise disagreement.} Given a pattern $p  \in \pi(\F,\theta)$, let $P_{f\sim \hat{R}_{set}(\F,\theta)}\left[p = p^f\right]$ be a probability with which this pattern is achieved by models from the Rashomon set. Since support for all patterns except $p_1$ is 1, then $P_k = P_{f\sim \hat{R}_{set}(\F,\theta)}\left[p_k = p^f\right] = \frac{1}{d}$ for $k = 2..\Pi$. And for $p_1$ we have $P_1 = P_{f\sim \hat{R}_{set}(\F,\theta)}\left[p_1 = p^f\right] = \frac{d - \Pi +1 }{d}$. Then expected pairwise disagreement:

{\allowdisplaybreaks
\begin{align*}
I(\hat{R}_{set}(\F_d,\theta)) &= \E_{f_1, f_2 \sim \hat{R}_{set}(\F, \theta)} \hat{\E}_{x\sim S} \mathbbm{1}_{[f_1(x)\neq f_2(x)]}\\
& =  \sum_{k=1}^{\Pi} \left[P_{f\sim \hat{R}_{set}(\F,\theta)}\left[p_k = p^f\right]  \sum_{j=1}^{\Pi} P_{f\sim \hat{R}_{set}(\F,\theta)}\left[p_j = p^f\right] \frac{1}{n} H(p_k, p_j)\right]\\
& =\left( P_{f\sim \hat{R}_{set}(\F,\theta)}\left[p_1 = p^f\right]\right)^2  \frac{1}{n} H(p_1, p_1) \\
&+ 2 P_{f\sim \hat{R}_{set}(\F,\theta)}\left[p_1 = p^f\right] \sum_{j=2}^{\Pi}  P_{f\sim \hat{R}_{set}(\F,\theta)}\left[p_j = p^f\right]\frac{1}{n} H(p_1, p_j) \\
&+ \sum_{k=2}^{\Pi} \left[P_{f\sim \hat{R}_{set}(\F,\theta)}\left[p_k = p^f\right]  \sum_{j=2}^{\Pi} P_{f\sim \hat{R}_{set}(\F,\theta)}\left[p_j = p^f\right] \frac{1}{n} H(p_k, p_j)\right]\\
& = 0 + 2 \frac{d - \Pi +1 }{d^2}\sum_{j=2}^{\Pi}  \frac{1}{n} H(p_1, p_j) + \frac{1}{d^2} \sum_{k=2}^{\Pi} \sum_{j=2}^{\Pi} \frac{1}{n} H(p_k, p_j)\\
& = \frac{1}{d}\left( 2\left(1 - \frac{\Pi - 1}{d}\right) \sum_{j=2}^{\Pi}  \frac{1}{n} H(p_1, p_j) + \frac{1}{d} \sum_{k=2}^{\Pi} \sum_{j=2}^{\Pi} \frac{1}{n} H(p_k, p_j)\right).
\end{align*}}

Therefore, as $d \rightarrow \infty$, $I(\hat{R}_{set}(\F_d,\theta)) \rightarrow 0$.
\end{proof}

As we see from Proposition \ref{th:diversity_multiplicity_difference}, we can change expected pairwise disagreement, for example, by adding multiple copies of the same functions $f$ to the hypothesis space. Expected pairwise disagreement measures predictive multiplicity \cite{black2022model}, but it has the issue we showed above that it can depend on the weighting of hypotheses in the hypothesis space. 
In the case we described in Proposition \ref{th:diversity_multiplicity_difference}, the multiplicity is small because one subset of hypotheses (which produce the same pattern) is weighted very heavily. Thus, expected pairwise disagreement can be influenced by overparameterization or poor choice of parameter space.  We further illustrate the effect of re-parameterization on pairwise disagreement on a simple one-dimensional example in Figure \ref{figureS:example_pairwise_disagreement}. Pattern diversity does not depend on the parameter space and is computed in the pattern space. It is not impacted by any probability distribution or weighting on the hypotheses. Moreover, we can compute the pattern diversity by enumerating all possible patterns of the given finite dataset as described in Appendix \ref{appendix:patterns_algorithm}. We cannot do the same for the pairwise disagreement metric without additional assumptions on the patterns’ support.

\begin{figure}[ht]
\centering
{\includegraphics[width=\textwidth]{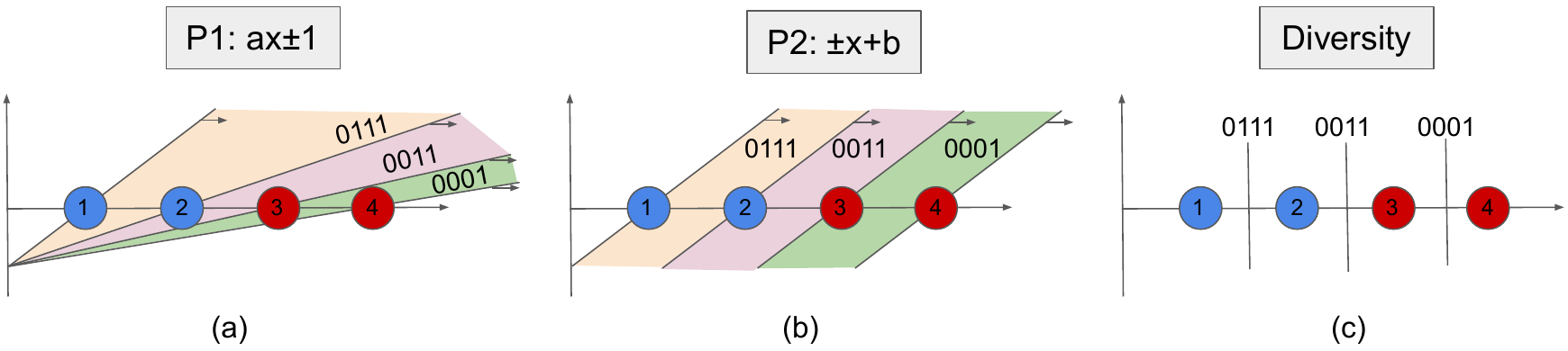}}

\caption{
Illustration of how reparameterization changes pairwise disagreement metric.\\
Consider a separable dataset of four data points with a real-valued feature in one dimension: 
$S=\{(1,0),(2,0),(3,1),(4,1)\}$ 
and a hypothesis space of linear models. Let the Rashomon parameter be $\theta=0.25$. There are three patterns in the Rashomon set: $0111$, $0011$, and $0001$. The pattern diversity (c) is $0.444$. Consider two different parameterizations for the hypothesis space of linear models: $ax\pm1$ and $\pm x+b$. These two parameterizations produce the same decision boundaries for the dataset $S$. For the parameterization $\pm x+b$ (b), each pattern is achieved with the same number of models. For the parameterization $ax\pm1$ (a), more models will support patterns that are closer to the origin. The support of each pattern is shown in a different color. The pairwise disagreement metric is $0.321$ for $ax\pm1$
and $0.444$ for $\pm x+b$.  
(For the parameterization $ax\pm1$, we see that the pattern $0001$ occurs when $a \in (1,\frac{1}{2})$, the pattern $0011$ occurs when $a \in (\frac{1}{2}, \frac{1}{3})$, and the pattern $0111$ occurs when $a \in (\frac{1}{3}, \frac{1}{4})$. Therefore, the pattern $0001$ has probability $w_1=\frac{1-\frac{1}{2}}{\frac{3}{4}}=0.666$, the pattern $0011$ has probability $w_2=\frac{\frac{1}{2}-\frac{1}{3}}{\frac{3}{4}}=0.222$, and the pattern $0111$ has probability $w_3=\frac{\frac{1}{3}-\frac{1}{4}}{\frac{3}{4}}=0.111$. Recall that $H(cdot, \cdot)$ is the Hamming distance, then the pairwise disagreement metric is $w_1w_2H(0001,0011)+w_1w_3H(0001,0111)+w_2w_3H(0011,0111)=w_1w_2+2w_1w_3+w_2w_3=0.321$. For the parameterization $\pm x + b$, each pattern has equal probability $\frac{1}{3}$. We can then similarly calculate that the pairwise disagreement metric is $0.444$).
Note that if the data points are shifted together to the left, the difference in pairwise disagreement metrics for parameterizations in (a) and (b) will only grow.
}
\label{figureS:example_pairwise_disagreement}

\end{figure}


\section{Proof for Theorem \ref{th:diversity_sample_disagreement}}\label{appendix:proof_diversity_sample_disagreement}

Recall that $a_i$ is sample agreement over the pattern Rashomon set. Then we can compute the pattern diversity based on sample agreement as in Theorem \ref{th:diversity_sample_disagreement}.

\begingroup
\def\thetheorem{\ref{th:diversity_sample_disagreement}}
\begin{theorem}[Pattern diversity via sample agreement]
\TheoremDiversitySampleDisagreement
\end{theorem}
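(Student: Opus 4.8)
The plan is to unfold the definition of $div(\hat{R}_{set}(\F,\theta))$, substitute the expression for the Hamming distance $H(p_j,p_k)=\sum_{i=1}^n \mathbbm{1}_{[p_j^i\neq p_k^i]}$, and then interchange the order of summation so that the sum over samples $i$ becomes the outermost one. This rewrites the diversity as $\frac{1}{n\Pi^2}\sum_{i=1}^n D_i$, where $D_i=\sum_{j=1}^{\Pi}\sum_{k=1}^{\Pi}\mathbbm{1}_{[p_j^i\neq p_k^i]}$ counts the ordered pairs of patterns in $\pi(\F,\theta)$ that disagree on the $i$-th coordinate. The whole statement then reduces to evaluating $D_i$ for a fixed sample.

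The crux is to express $D_i$ in terms of the sample agreement $a_i$. Here I would use that the labels are binary: for each fixed sample $i$ every coordinate $p_k^i$ is either equal to $y_i$ or not, and two patterns disagree on coordinate $i$ \emph{precisely} when exactly one of them is correct (if they were both correct or both incorrect, they would agree). Since $\Pi a_i$ patterns classify $z_i$ correctly and $\Pi(1-a_i)$ classify it incorrectly, the number of ordered disagreeing pairs is $D_i = 2(\Pi a_i)\bigl(\Pi(1-a_i)\bigr) = 2\Pi^2 a_i(1-a_i)$, where the factor $2$ accounts for both orderings (correct--incorrect and incorrect--correct).

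Substituting this back gives $div(\hat{R}_{set}(\F,\theta)) = \frac{1}{n\Pi^2}\sum_{i=1}^n 2\Pi^2 a_i(1-a_i) = \frac{2}{n}\sum_{i=1}^n a_i(1-a_i)$, which is the claimed identity, with the $\Pi^2$ factors cancelling. The only step requiring care — and the one I would regard as the main (if modest) obstacle — is the binary-label observation that disagreement on a coordinate is equivalent to ``exactly one pattern correct,'' since this is exactly what lets the disagreement count collapse into the product $a_i(1-a_i)$. For non-binary labels this equivalence fails, and the clean closed form would no longer hold.
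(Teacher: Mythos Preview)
Your argument is correct, but it differs in style from the paper's proof. The paper proceeds purely algebraically: it first introduces the auxiliary quantity $b_i=\frac{1}{\Pi}\sum_{k}p_k^i$ (the fraction of patterns predicting label~$1$ on sample~$i$), expands the indicator $\mathbbm{1}_{[p_j^i\neq p_k^i]}=p_j^i(1-p_k^i)+p_k^i(1-p_j^i)$, and simplifies to obtain $div=\frac{2}{n}\sum_i b_i(1-b_i)$; it then separately expands $a_i$ via the XNOR identity and checks that $a_i(1-a_i)=b_i(1-b_i)$ by brute-force algebra using $y_i^2=y_i$. Your approach bypasses $b_i$ entirely by making the combinatorial observation that, for binary labels, disagreement on coordinate~$i$ is equivalent to ``exactly one of the two patterns is correct,'' which directly yields the count $2(\Pi a_i)(\Pi(1-a_i))$. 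This is shorter and conceptually cleaner; the paper's route has the minor advantage of also establishing the $b_i$ formulation (diversity in terms of predicted-class-$1$ frequencies), which is an intermediate result of some independent interest but is not needed for the theorem as stated.
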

\addtocounter{theorem}{-1}
\endgroup

\begin{proof}
Let $y\in\{0, 1\}$. We can transform $y\in \{-1,1\}$ to $\{0, 1\}$, simply by adding one and dividing by two.

Recall that Hamming distance  $H(p_j, p_k) = \sum_{i=1}^n \mathbbm{1}_{[p_j^i \neq p_k^i]}$. Alternatively, we can rewrite logical XOR as $\mathbbm{1}_{[p_j^i \neq p_k^i]} = p_j^i(1 - p_k^i) + p_k^i (1-p_j^i)$. Denote $b_i = \frac{1}{\Pi} \sum_{j=1}^{\Pi} p_j^i$, then from the pattern diversity definition:
{\allowdisplaybreaks
\begin{align*}
       div(\hat{R}_{set}(\F,\theta)) &= \frac{1}{n\Pi\Pi} \sum_{j=1}^{\Pi} \sum_{k=1}^{\Pi} \sum_{i=1}^n  \mathbbm{1}_{[p_j^i \neq p_k^i]} =\\
       & = \frac{1}{n\Pi\Pi} \sum_{j=1}^{\Pi} \sum_{k=1}^{\Pi} \sum_{i=1}^n  \left[p_j^i(1 - p_k^i) + p_k^i (1-p_j^i) \right]\\
       & = \frac{1}{n\Pi\Pi} \sum_{j=1}^{\Pi} \sum_{k=1}^{\Pi} \sum_{i=1}^n \left[ p_j^i +  p_k^i - 2 p_k^i p_j^i \right]\\
       & = \frac{1}{n\Pi} \sum_{i=1}^n \sum_{j=1}^{\Pi} \left[ \frac{1}{\Pi} \sum_{k=1}^{\Pi}   p_j^i +  \frac{1}{\Pi} \sum_{k=1}^{\Pi}p_k^i - 2 \frac{1}{\Pi} \sum_{k=1}^{\Pi}p_k^i p_j^i\right] \\
       & = \frac{1}{n\Pi} \sum_{i=1}^n \sum_{j=1}^{\Pi} \left[    p_j^i +  b_i - 2 b_i p_j^i\right]\\
       &= \frac{1}{n} \sum_{i=1}^n\left[ \frac{1}{\Pi} \sum_{j=1}^{\Pi}     p_j^i +  \frac{1}{\Pi} \sum_{j=1}^{\Pi}b_i - 2 \frac{1}{\Pi} \sum_{j=1}^{\Pi}b_i p_j^i\right]\\
       & = \frac{1}{n} \sum_{i=1}^n\left[ b_i +  b_i - 2 b_i^2\right]\\
       & = \frac{2}{n} \sum_{i=1}^n b_i (1- b_i).
\end{align*}}

On the other hand, according to logical XNOR, we have that $ \mathbbm{1}_{[p_k^i = y_i]} = p_k^i y_i + (1-p_k^i)(1-y_i)$, therefore we can rewrite $a_i$ as:
{\allowdisplaybreaks
\begin{align*}
a_i &= \frac{1}{\Pi} \sum_{k=1}^{\Pi} \mathbbm{1}_{[p_k^i = y_i]}\\
&= \frac{1}{\Pi} \sum_{k=1}^{\Pi} \left[ p_k^i y_i + (1-p_k^i)(1-y_i) \right]\\
& = \frac{1}{\Pi} \sum_{k=1}^{\Pi} \left[ 2p_k^i y_i + 1 - y_i - p_k^i \right]\\
& = 2y_i \frac{1}{\Pi} \sum_{k=1}^{\Pi}  p_k^i + 1 - y_i -\frac{1}{\Pi} \sum_{k=1}^{\Pi}  p_k^i \\
& = 2 y_i b_i + 1 - y_i - b_i.
\end{align*}}

Since $y_i \in \{0,1\}$, then $y_i^2 = y_i$ and we have that:
{\allowdisplaybreaks
\begin{align*}
\frac{2}{n}\sum_{i=1}^n a_i (1-a_i) &= \frac{2}{n}\sum_{i=1}^n (2 y_i b_i + 1 - y_i - b_i)(-2 y_i b_i + y_i + b_i)\\
& = \frac{2}{n}\sum_{i=1}^n (-4 y_i b_i^2 + 2 y_i b_i + 2 y_i b_i^2 -2 y_i b_i + y_i + b_i \\
&+2 y_i b_i - y_i - y_i b_i  +2 y_i b_i^2 - y_i b_i - b_i^2)\\
& = \frac{2}{n}\sum_{i=1}^n (b_i - b_i^2) \\
& = \frac{2}{n}\sum_{i=1}^n b_i(1 - b_i).
\end{align*}}

Therefore we get:

\[div(\hat{R}_{set}(\F,\theta)) = \frac{2}{n}\sum_{i=1}^n b_i(1 - b_i) = \frac{2}{n}\sum_{i=1}^n a_i (1-a_i).\]

\end{proof}


\section{Proof for Theorem \ref{th:diversity_bound}}\label{appendix:proof_diversity_bound}

Before providing the proof for Theorem \ref{th:diversity_bound}, we show that average sample agreement over hypotheses that realize patterns in the pattern Rashomon set is negatively proportional to the average loss of these hypotheses. We use this intuition to derive an upper bound for average sample agreement and then discuss the upper bound for pattern diversity.

Let \textit{hypothesis pattern set} $\mathcal{H}_{\pi(\F,\theta)}\subset \hat{R}_{set}(\F,\theta)$ be a set of unique hypotheses corresponding to each pattern\footnote{Since there could be many hypotheses that achieve the same pattern, $\Hpi_{\pi(\F,\theta)}$ is not unique. We can work with any of them, as $\Hpi_{\pi(\F,\theta)}$ is simply a representation of the pattern set in the hypothesis space.} in $\pi (\F, \theta)$, meaning that there is no $f^{\pi}_1, f^{\pi}_2 \in \Hpi_{\pi(\F,\theta)}$, such that $f^{\pi}_1 \neq f^{\pi}_2$, yet $p^{f_1^{\pi}} = p^{f_2^{\pi}}$. 

\begin{theorem}\label{th:point_disagreement_loss}
Average sample agreement over the pattern Rashomon set is negatively proportional to the average loss of models in the hypothesis pattern Rashomon set $\mathcal{H}_{\pi}(\F,\theta)$,
\[\frac{1}{n} \sum_{i=1}^n a_i = 1 - \hat{L}_{avg}(\mathcal{H}_{\pi}(\F,\theta)),\]
where $\hat{L}_{avg}(\mathcal{H}_{\pi}(\F,\theta)) = \frac{1}{\Pi} \sum_{k=1}^{\Pi} \hat{L}( f_k^{\pi})$. Moreover, when the Rashomon parameter $\theta = 0$, then 
\[\frac{1}{n} \sum_{i=1}^n a_i = 1 - \hat{L}(\hat{f}).\]
\end{theorem}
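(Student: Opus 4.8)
The plan is to prove both identities by a direct double-counting argument: I would exchange the order of the two finite sums defining the quantity $\frac{1}{n}\sum_{i=1}^n a_i$ and then reinterpret the inner sum as an empirical accuracy.

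First I would substitute the definition $a_i = \frac{1}{\Pi}\sum_{k=1}^{\Pi}\mathbbm{1}_{[p_k^i = y_i]}$ into $\frac{1}{n}\sum_{i=1}^n a_i$, producing the double sum $\frac{1}{n\Pi}\sum_{i=1}^n\sum_{k=1}^{\Pi}\mathbbm{1}_{[p_k^i = y_i]}$. Since both sums are finite, I can freely swap their order so that the sum over patterns indexed by $k$ sits outside. Next, for a fixed pattern index $k$, I would use that $p_k = p^{f_k^{\pi}}$ for its representative hypothesis $f_k^{\pi}\in\Hpi_{\pi(\F,\theta)}$, so that $p_k^i = f_k^{\pi}(x_i)$, together with the complement identity $\mathbbm{1}_{[f_k^{\pi}(x_i) = y_i]} = 1 - \mathbbm{1}_{[f_k^{\pi}(x_i)\neq y_i]}$. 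This identifies the inner average $\frac{1}{n}\sum_{i=1}^n\mathbbm{1}_{[p_k^i = y_i]}$ as $1 - \hat{L}(f_k^{\pi})$, the empirical accuracy of the $k$-th representative. Averaging over $k$ then yields $1 - \frac{1}{\Pi}\sum_{k=1}^{\Pi}\hat{L}(f_k^{\pi}) = 1 - \hat{L}_{avg}(\Hpi_{\pi}(\F,\theta))$, which is the first claim. A point worth stating explicitly is that this quantity does not depend on the (non-unique) choice of representative hypotheses in $\Hpi_{\pi(\F,\theta)}$, since the empirical loss of any hypothesis is a function of its pattern on $S$ alone.

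Finally, for the case $\theta = 0$, I would observe that $\hat{R}_{set}(\F, 0) = \{f\in\F : \hat{L}(f)\leq\hat{L}(\hat{f})\}$ is exactly the set of empirical risk minimizers, so every representative satisfies $\hat{L}(f_k^{\pi}) = \hat{L}(\hat{f})$; hence $\hat{L}_{avg}(\Hpi_{\pi}(\F, 0)) = \hat{L}(\hat{f})$, and the first identity collapses to $\frac{1}{n}\sum_{i=1}^n a_i = 1 - \hat{L}(\hat{f})$.

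There is no genuine obstacle here, as the argument is an elementary interchange of finite summations; the only care needed is the well-definedness remark above and the observation that setting $\theta=0$ forces every model in the Rashomon set onto the minimum empirical loss. I would present the interchange-and-reinterpret step carefully so that the reader sees precisely where $p_k^i = f_k^{\pi}(x_i)$ is invoked.
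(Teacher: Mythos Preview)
Your proposal is correct and matches the paper's own proof essentially step for step: substitute the definition of $a_i$, interchange the finite sums, use $p_k^i = f_k^{\pi}(x_i)$ together with $\mathbbm{1}_{[\cdot = \cdot]} = 1 - \mathbbm{1}_{[\cdot\neq\cdot]}$ to recognize $1-\hat L(f_k^{\pi})$, and then handle $\theta=0$ by noting that every model in $\hat R_{set}(\F,0)$ achieves $\hat L(\hat f)$. Your additional remark that the result is independent of the choice of representatives in $\Hpi_{\pi(\F,\theta)}$ is a nice clarification that the paper does not state explicitly.
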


\begin{proof}
For a given $(x_i, y_i)$, when hypothesis $f_{k}^{\pi}$ realizes pattern $p^{f_{k}^{\pi}} = p_k$, we have that $p_k^i = f_k^{\pi}(x_i)$. Consider average sample agreement:
{\allowdisplaybreaks
\begin{align*}
    \frac{1}{n} \sum_{i=1}^n a_i &= \frac{1}{n} \sum_{i=1}^n \frac{1}{\Pi} \sum_{k=1}^{\Pi} \mathbbm{1}_{[p_k^i = y_i]}\\
    & = \frac{1}{n} \sum_{i=1}^n \left(1 - \frac{1}{\Pi} \sum_{k=1}^{\Pi} \mathbbm{1}_{[p_k^i \neq y_i]}\right)\\
    & = 1 - \frac{1}{n} \sum_{i=1}^n \frac{1}{\Pi} \sum_{k=1}^{\Pi} \mathbbm{1}_{[p_k^i \neq y_i]}\\
    & = 1 -  \frac{1}{\Pi} \sum_{k=1}^{\Pi} \frac{1}{n} \sum_{i=1}^n \mathbbm{1}_{[f_k^{\pi}(x_i) \neq y_i]}\\
    & = 1 -  \frac{1}{\Pi} \sum_{k=1}^{\Pi} \hat{L}(f_k^{\pi}) \\
    & = 1 -  \hat{L}_{avg}(\mathcal{H}_{\pi}(\F,\theta)).
\end{align*}}

When $\theta = 0$, for any $k$, $\hat{L}(\hat{f}) = \hat{L}(f_k^{\pi})$, therefore $\frac{1}{n} \sum_{i=1}^n a_i = 1 - \hat{L}(\hat{f})$.
\end{proof}

Given the definition of models in the Rashomon set, we can derive an upper bound on average sample agreement in Corollary \ref{th:point_disagreement_bound}.

\begin{corollary}\label{th:point_disagreement_bound}
For any parameter $\theta >0$, average sample agreement is upper and lower bounded by the empirical loss of the empirical risk minimizer,
\[1 - \hat{L}(\hat{f}) - \theta \leq \frac{1}{n} \sum_{i=1}^n a_i \leq 1 - \hat{L}(\hat{f}).\]
\end{corollary}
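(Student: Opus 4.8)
The plan is to reduce everything to Theorem \ref{th:point_disagreement_loss}, which already equates the average sample agreement with $1 - \hat{L}_{avg}(\mathcal{H}_{\pi}(\F,\theta))$, where $\hat{L}_{avg}(\mathcal{H}_{\pi}(\F,\theta)) = \frac{1}{\Pi}\sum_{k=1}^{\Pi}\hat{L}(f_k^{\pi})$ is the average empirical loss of the representative hypotheses realizing the patterns in $\pi(\F,\theta)$. Once this substitution is made, the corollary is equivalent to the two-sided bound $\hat{L}(\hat{f}) \leq \hat{L}_{avg}(\mathcal{H}_{\pi}(\F,\theta)) \leq \hat{L}(\hat{f}) + \theta$, after which we negate and add $1$ throughout. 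So the entire content of the proof is bounding this average of losses.

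First I would establish the upper bound on $\hat{L}_{avg}$. Each representative $f_k^{\pi}$ belongs to $\mathcal{H}_{\pi(\F,\theta)} \subset \hat{R}_{set}(\F,\theta)$, so by the definition of the Rashomon set (Definition \ref{def:rset}) every such hypothesis satisfies $\hat{L}(f_k^{\pi}) \leq \hat{L}(\hat{f}) + \theta$. Averaging this inequality over $k = 1,\dots,\Pi$ preserves it, giving $\hat{L}_{avg}(\mathcal{H}_{\pi}(\F,\theta)) \leq \hat{L}(\hat{f}) + \theta$.

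Next I would establish the lower bound. Since $\hat{f}$ is an empirical risk minimizer over $\F$ and each $f_k^{\pi} \in \F$, we have $\hat{L}(f_k^{\pi}) \geq \hat{L}(\hat{f})$ for every $k$; averaging again yields $\hat{L}_{avg}(\mathcal{H}_{\pi}(\F,\theta)) \geq \hat{L}(\hat{f})$. Combining the two bounds gives
\[
\hat{L}(\hat{f}) \;\leq\; \hat{L}_{avg}(\mathcal{H}_{\pi}(\F,\theta)) \;\leq\; \hat{L}(\hat{f}) + \theta,
\]
and substituting $\frac{1}{n}\sum_{i=1}^n a_i = 1 - \hat{L}_{avg}(\mathcal{H}_{\pi}(\F,\theta))$ from Theorem \ref{th:point_disagreement_loss} produces the claimed sandwich.

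I do not expect any serious obstacle here, since both bounds follow immediately from membership in the Rashomon set and the defining optimality of $\hat{f}$. The only point warranting a moment of care is that the averaged quantity $\hat{L}_{avg}$ is taken over the representative hypotheses $f_k^{\pi}$ rather than over the patterns themselves; one should note that this is exactly the object appearing in Theorem \ref{th:point_disagreement_loss}, and that each representative lies in $\hat{R}_{set}(\F,\theta)$ by the definition of the hypothesis pattern set, so both inequalities apply term by term before averaging.
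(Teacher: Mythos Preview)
Your proposal is correct and mirrors the paper's proof exactly: the paper simply cites Theorem \ref{th:point_disagreement_loss} together with the fact that every model $f$ in the Rashomon set satisfies $\hat{L}(\hat{f}) \leq \hat{L}(f) \leq \hat{L}(\hat{f}) + \theta$, which is precisely the two-sided bound you derive and then average.
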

\begin{proof}
    Proof follows directly from Theorem \ref{th:point_disagreement_loss} and the fact that for every model $f$ from the Rashomon set, $\hat{L}(\hat{f})\leq \hat{L}(f) \leq \hat{L}(\hat{f}) + \theta$.
\end{proof}

Finally, we provide proof for Theorem \ref{th:diversity_bound}.

\begingroup
\def\thetheorem{\ref{th:diversity_bound}}
\begin{theorem}[Upper bound on pattern diversity]
Consider hypothesis space $\F$, 0-1 loss, and empirical risk minimizer $\hat{f}$. For any $\theta \geq 0$, pattern diversity can be upper bounded by
\begin{equation*}
div(\hat{R}_{set}(\F,\theta)) \leq  2 (\hat{L}(\hat{f}) + \theta) (1 - (\hat{L}(\hat{f})+\theta)) + 2\theta.
\end{equation*}
\end{theorem}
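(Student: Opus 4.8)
The plan is to combine the exact formula for diversity from Theorem~\ref{th:diversity_sample_disagreement} with the two-sided control on average sample agreement from Corollary~\ref{th:point_disagreement_bound}, bridging them with a concavity (Jensen) argument. First I would start from Theorem~\ref{th:diversity_sample_disagreement}, which gives $div(\hat{R}_{set}(\F,\theta)) = \frac{2}{n}\sum_{i=1}^n a_i(1-a_i)$. Since the map $t \mapsto t(1-t)$ is concave on $[0,1]$ and each $a_i \in [0,1]$, Jensen's inequality yields $\frac{1}{n}\sum_{i=1}^n a_i(1-a_i) \leq \bar{a}(1-\bar{a})$, where $\bar{a} = \frac{1}{n}\sum_{i=1}^n a_i$. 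This is the crucial step: it collapses the per-sample information into the single averaged quantity $\bar{a}$, which is precisely what Corollary~\ref{th:point_disagreement_bound} bounds.

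Next I would substitute the two-sided bound $1 - \hat{L}(\hat{f}) - \theta \leq \bar{a} \leq 1 - \hat{L}(\hat{f})$ from Corollary~\ref{th:point_disagreement_bound}. Writing $u = 1 - \bar{a}$ and $s = \hat{L}(\hat{f}) + \theta$, these become $\hat{L}(\hat{f}) \leq u \leq s$, and since $\bar{a}(1-\bar{a}) = u(1-u)$, the goal reduces to the purely algebraic inequality $u(1-u) \leq s(1-s) + \theta$ for every $u$ in that interval; multiplying by $2$ then recovers the stated bound \eqref{eq:diversity_bound}. I would establish this by factoring the gap as $u(1-u) - s(1-s) = (s-u)(s+u-1)$ and then bounding the two factors separately. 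The first factor satisfies $0 \leq s - u \leq s - \hat{L}(\hat{f}) = \theta$, which uses the lower bound on $\bar{a}$ to get $s - u \geq 0$ and the upper bound on $\bar{a}$ to get $s - u \leq \theta$. The second factor satisfies $s + u - 1 \leq 1$ because $s, u \in [0,1]$. Consequently, if $s + u - 1 \leq 0$ the product is nonpositive and hence at most $\theta$, while if $s + u - 1 > 0$ the product is at most $\theta \cdot 1 = \theta$; in both cases $(s-u)(s+u-1) \leq \theta$, as required.

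The step I expect to be the main obstacle is the second factor, i.e. showing why the additive $2\theta$ slack in the bound is unavoidable. The inequality $u(1-u) \leq s(1-s)$ is genuinely false in the regime $\hat{L}(\hat{f}) < \tfrac{1}{2} < \hat{L}(\hat{f}) + \theta$, where $\bar{a}$ is free to sit near $\tfrac{1}{2}$ and $u(1-u)$ attains its maximum value $\tfrac{1}{4}$; the extra $\theta$ is exactly what absorbs this. The factorization above is attractive because it sidesteps a three-way case analysis on where $\tfrac{1}{2}$ falls relative to $[\hat{L}(\hat{f}), s]$, but it quietly relies on $s = \hat{L}(\hat{f}) + \theta \leq 1$ (to ensure $s + u - 1 \leq 1$). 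This holds in the meaningful regime, and when $s > 1$ the Rashomon threshold exceeds the maximal attainable $0$-$1$ loss, so the Rashomon set is the whole hypothesis space and the statement degenerates. I would also note that $u \in [0,1]$ is automatic, since each $a_i$ is a probability and hence $\bar{a} \in [0,1]$, which legitimizes the use of $u, s \in [0,1]$ in the second-factor bound.
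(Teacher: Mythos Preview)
Your proposal is correct and follows essentially the same route as the paper: reduce $div$ to $2\bar a(1-\bar a)$ via a concavity/Cauchy--Schwarz step (your Jensen argument for $t\mapsto t(1-t)$ is exactly the paper's Cauchy--Schwarz inequality $\bar a^2\le \frac{1}{n}\sum a_i^2$ in disguise), then invoke the two-sided bound on $\bar a$ from Corollary~\ref{th:point_disagreement_bound}. The only difference is cosmetic: the paper bounds the linear and quadratic pieces of $2\bar a - 2\bar a^2$ separately and expands, whereas you factor $u(1-u)-s(1-s)=(s-u)(s+u-1)$; your version is arguably tidier and makes the hidden assumption $\hat L(\hat f)+\theta\le 1$ explicit, which the paper's step $\bar a^2\ge(1-\hat L(\hat f)-\theta)^2$ also silently uses.
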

\addtocounter{theorem}{-1}
\endgroup

\begin{proof}
From the Cauchy–Schwarz inequality, we have that 
\[\left(\sum_{i=1}^n a_i\right)^2 \leq  \sum_{i=1}^n 1^2 \sum_{i=1}^n a_i^2 = n \sum_{i=1}^n a_i^2.\]

Given this and from the definition of pattern diversity and Corollary \ref{th:point_disagreement_bound} we get that:
{\allowdisplaybreaks
\begin{align*}
        div(\hat{R}_{set}(\F,\theta)) &= \frac{2}{n}\sum_{i=1}^n a_i(1-a_i) =\frac{2}{n}\sum_{i=1}^n a_i - \frac{2}{n}\sum_{i=1}^n a_i^2\\ 
        &\leq \frac{2}{n}\sum_{i=1}^n a_i - \frac{2}{n^2}\left(\sum_{i=1}^n a_i\right)^2 = 2 \left(\frac{1}{n}\sum_{i=1}^n a_i\right) - 2 \left(\frac{1}{n}\sum_{i=1}^n a_i\right)^2\\
        &\leq 2 (1 - \hat{L}(\hat{f})) - 2(1 - \hat{L}(\hat{f}) - \theta )^2\\
        &= 2 - 2\hat{L}(\hat{f}) - 2 + 4(\hat{L}(\hat{f}) + \theta) -2 (\hat{L}(\hat{f}) + \theta)^2 \\
        &= 2 (\hat{L}(\hat{f}) +\theta - (\hat{L}(\hat{f}) + \theta)^2 +\theta)\\
        & = 2 (\hat{L}(\hat{f}) + \theta) (1 - (\hat{L}(\hat{f})+\theta)) + 2\theta.
\end{align*}}

When $\theta = 0$, then $div(\hat{R}_{set}(\F,0)) \leq 2\hat{L}(\hat{f}) (1 - \hat{L}(\hat{f})).$
\end{proof}


\section{Proof for Theorem \ref{th:diversity_noise}}\label{appendix:proof_diversity_noise}


We state and prove Theorem \ref{th:diversity_noise} below.

\begingroup
\def\thetheorem{\ref{th:diversity_noise}}
\begin{theorem}[]
\TheoremDiversityIncreasesNoise
\end{theorem}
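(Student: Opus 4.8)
The plan is to exploit the fact that $U_{div}$ depends on the Rashomon set only through the single scalar $\hat{L}(\hat{f}) + \theta$, via the parabola $\phi(x) = x(1-x)$, which is strictly increasing on $[0,\tfrac12]$. So it suffices to (i) identify the empirical loss of the ERM on the ``expected noisy dataset'' and (ii) show it is strictly larger than the clean loss, while both quantities (plus $\theta$) stay below $\tfrac12$, so that the increasing branch of $\phi$ transfers the inequality.

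First I would invoke equation \eqref{eq:noise_equality_empirical}, which gives, for every fixed $f\in\F$, that $\E_{S_\rho}\hat{L}_{S_\rho}(f) = (1-2\rho)\hat{L}_S(f) + \rho$. Since the $0$-$1$ loss is linear in the (binary) label at each point, this expected loss is exactly the loss $\hat{L}_{\E_{S_\rho}S_\rho}(f)$ evaluated on the expected dataset. Because $1-2\rho>0$, it is a strictly increasing affine function of $\hat{L}_S(f)$, so minimizing it over $\F$ returns the same minimizer as minimizing $\hat{L}_S$; hence the ERM on the expected dataset coincides with $\hat{f}_S$, and its loss equals $\inf_{f\in\F}\E_{S_\rho}\hat{L}_{S_\rho}(f) = (1-2\rho)\hat{L}_S(\hat{f}_S)+\rho$.

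Writing $L_0 = \hat{L}_S(\hat{f}_S)$ and $L_1 = (1-2\rho)L_0+\rho$, the next step is two monotonicity checks. The strict increase follows from $L_1 - L_0 = \rho(1-2L_0) > 0$, using $\rho>0$ together with the hypothesis $\hat{L}_S(\hat{f}_S)<\tfrac12$. The upper containment comes from the other hypothesis: $\inf_{f\in\F}\E_{S_\rho}\hat{L}_{S_\rho}(f) = L_1 < \tfrac12 - \theta$, so $L_1 + \theta < \tfrac12$, and therefore $0 \le L_0+\theta < L_1+\theta < \tfrac12$. Since $\phi$ is strictly increasing on $[0,\tfrac12]$, this yields $\phi(L_0+\theta) < \phi(L_1+\theta)$; multiplying by $2$ and adding the common additive term $2\theta$ produces $U_{div}(\hat{R}_{{set}_{S}}(\F,\theta)) < U_{div}(\hat{R}_{{set}_{\E_{S_\rho}S_\rho}}(\F,\theta))$, which is the claim.

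The only genuinely delicate point is conceptual rather than computational: one must interpret $\E_{S_\rho\sim\Omega(S_\rho)} S_\rho$ as the dataset on which every model's loss is its expected noisy loss, and then observe that the affine relation with positive slope $1-2\rho$ leaves the ERM \emph{itself} unchanged, so the entire effect of the noise is concentrated in shifting the scalar $\hat{L}(\hat{f})$ upward from $L_0$ to $L_1$. Once this is set up, the two stated hypotheses are precisely what is needed to keep both arguments of $\phi$ inside the increasing branch $[0,\tfrac12)$, and the conclusion is immediate.
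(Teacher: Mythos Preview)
Your proposal is correct and follows essentially the same approach as the paper: both invoke equation \eqref{eq:noise_equality_empirical} to relate the noisy and clean empirical risks affinely, establish that $\inf_{f\in\F}\E_{S_\rho}\hat{L}_{S_\rho}(f) > \hat{L}_S(\hat{f}_S)$, and then use the monotonicity of $x\mapsto 2(x+\theta)(1-x-\theta)+2\theta$ on $[0,\tfrac12-\theta)$ together with the hypotheses to carry the inequality through to $U_{div}$. The only cosmetic difference is that you identify the minimizer explicitly (the affine map with positive slope $1-2\rho$ preserves the ERM, so $L_1=(1-2\rho)L_0+\rho$), whereas the paper bounds the infimum from below via $\hat{L}_S(\bar f)\ge\hat{L}_S(\hat f_S)$ and only notes the minimizer equality in a remark after the proof.
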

\addtocounter{theorem}{-1}
\endgroup

\begin{proof}
Given the noise model, hypothesis $f$ is in the expected Rashomon set if $\E_{S_\rho}\hat{L}_{S_{\rho}}(f)\leq \inf_{f\in \F}\E_{S_\rho}\hat{L}_{S_{\rho}}(f) + \theta$. Let $\bar{f} \in \F$ be such that $\bar{f} \in \arg\inf_{f\in F} \E_{S_\rho}\hat{L}_{S_{\rho}}(f)$. Since $\rho \in (0,\frac{1}{2})$, and $\hat{L}_S(\hat{f}_S) < \frac{1}{2}$ by assumption, from \eqref{eq:noise_equality_empirical} and the definition of the empirical risk minimizer, we have that:
{\allowdisplaybreaks
\begin{align*}
\inf_{f\in \F}\E_{S_\rho}\hat{L}_{S_{\rho}}(f) &= \E_{S_\rho}\hat{L}_{S_{\rho}}(\bar{f}) \\
& = (1 - 2\rho)\hat{L}_{S}(\bar{f}) + \rho\\
&\geq (1 - 2\rho)\hat{L}_{S}(\hat{f}_S) + \rho \\
&> \hat{L}_{S}(\hat{f}_S).
\end{align*}}

Consider $g(x) = 2(x +\theta)(1-x-\theta)+2\theta$. For $x\in[0, \frac{1}{2}-\theta)$, $g(x)$ is monotonically increasing, as $g'(x) = 2(1-x-\theta) - 2(x+\theta) = 4\left(\frac{1}{2}-x-\theta\right) > 0$. Given monotonicity, assumption of the theorem $\inf_{f\in \F}\E_{S_\rho}\hat{L}_{S_{\rho}}(f) < \frac{1}{2} -\theta$, and since $\hat{L}_S(\hat{f}_S) < \inf_{f\in \F}\E_{S_\rho}\hat{L}_{S_{\rho}}(f)$, we have that
{\allowdisplaybreaks
\begin{align*}
U_{div}(\hat{R}_{{set}_S}(\F, \theta)) &= 2\left(\hat{L}_S(\hat{f}_S)+\theta\right)\left(1 - \hat{L}_S(\hat{f}_S)-\theta\right) + 2\theta\\
&< 2\left(\inf_{f\in \F}\E_{S_\rho}\hat{L}_{S_{\rho}}(f)+\theta\right)\left(1 - \inf_{f\in \F}\E_{S_\rho}\hat{L}_{S_{\rho}}(f)-\theta\right) + 2\theta\\
&= U_{div}(\hat{R}_{{set}_{\E_{S_{\rho}}S_{\rho}}}(\F, \theta)).
\end{align*}}
\end{proof}

Interestingly, in the proof of Theorem \ref{th:diversity_noise}, the empirical risk minimizer of dataset $S$, $\hat{f}_S$, also minimizes the expected risks over noisy datasets, meaning that $\hat{f}_S \in \arg\inf_{f\in F} \E_{S_\rho}\hat{L}_{S_{\rho}}(f)$. To see this, assume that $\hat{f}_S \not\in \arg\inf_{f\in F} \E_{S_\rho}\hat{L}_{S_{\rho}}(f)$, then there is $\bar{f} \in \arg\inf_{f\in F} \E_{S_\rho}\hat{L}_{S_{\rho}}(f)$, such that: 
\[\E_{S_\rho}\hat{L}_{S_{\rho}}(\bar{f}) < \E_{S_\rho}\hat{L}_{S_{\rho}}(\hat{f}_S).\]

Applying \eqref{eq:noise_equality_empirical} to both sides of the inequality above, we get that:
\[(1-2\rho)\hat{L}_S(\bar{f}) + \rho < (1-2\rho)\hat{L}_S(\hat{f}_S) + \rho,\]
which after simplification becomes:
\[\hat{L}_S(\bar{f}) < \hat{L}_S(\hat{f}_S).\]
This is a clear contradiction, since $\hat{f}_S$ is the empirical risk minimizer on $S$, and thus $\hat{L}_S(\hat{f}_S) \leq \hat{L}_S(f)$ for any $f\in\F$, including $\bar{f}$. Therefore our assumption was incorrect, and $\hat{f}_S \in \arg\inf_{f\in F} \E_{S_\rho}\hat{L}_{S_{\rho}}(f)$.

\section{Setup for experiments}

\subsection{Datasets Description}\label{appendix:datasets}

\begin{table}[ht]
\centering
\caption{ Preprocessed datasets}
\begin{tabular}{p{0.22\textwidth}p{0.11\textwidth}p{0.11\textwidth}p{0.44\textwidth}}
Dataset & Number of Samples & Number of Features & Notes  \\
Car Evaluation  & 1728 & 16& We use one-hot encoding for features  \\
Breast Cancer Wisconsin  & 699 & 11 & We use one-hot encoding for features\\
Monks 1  & 124 & 12 & We use one-hot encoding for features   \\
Monks 2 & 169 & 12 & We use one-hot encoding for features   \\
Monks 3 & 122 & 12 & We use one-hot encoding for features   \\
SPECT  & 267 & 23 & We use one-hot encoding for features   \\
Compas  & 6907 & 13 & Processed in \cite{xin2022exploring}   \\
FICO  & 10459 & 18 & Processed in \cite{xin2022exploring}    \\
Bar 7 (Coupon) & 1913 & 15 & Processed in \cite{xin2022exploring}    \\
Expensive Restaurant  & 1417 & 16 & Processed in \cite{xin2022exploring}    \\
Carryout Takeaway  & 2280 & 16 & Processed in \cite{xin2022exploring}    \\
Cheap Restaurant   & 2653 & 16 & Processed in \cite{xin2022exploring}    \\
Coffee House   & 3816 & 16 & Processed in \cite{xin2022exploring}    \\
Bar  & 1913 & 16 & Processed in \cite{xin2022exploring}    \\
Telco Bin & 7043 & 6 & We use only the binary features    \\
Iris  & 100 & 4 & We consider classes  Versicolour and Setosa\\
Wine  & 130 & 13 &    \\
Wine 4  & 130 & 4 & We use PCA to create 4 features \\
Seeds 4 & 140 & 4 & We consider classes 1 and 2 and use PCA to create 4 features\\
Immunotherapy 4 & 90 & 4 & \cite{khozeimeh2017expert, khozeimeh2017intralesional}. We use one-hot encoding for feature ``type''. We use PCA to create 4 features\\
Penguin 4 & 265 & 4 & We use one-hot encoding for feature ``island.'' We consider classes ``adelie'' and ``gentoo'' only and use PCA to create 4 features\\
Digits 0-4 4& 359 & 4 & We consider digit 0 and digit 4. We use PCA to create 4 features  \\
\end{tabular}
\end{table}\label{table:datasets}

Please see Table \ref{table:datasets} for the description of datasets used in the paper and all the processing steps. We normalize all real-valued features.




\subsection{Illustration of Cross-Validation Process in Step 3}\label{appendix:step3_experimental_setup}

\begin{figure}[hb]
\centering
\subfigure[]{\includegraphics[height=3.4cm]{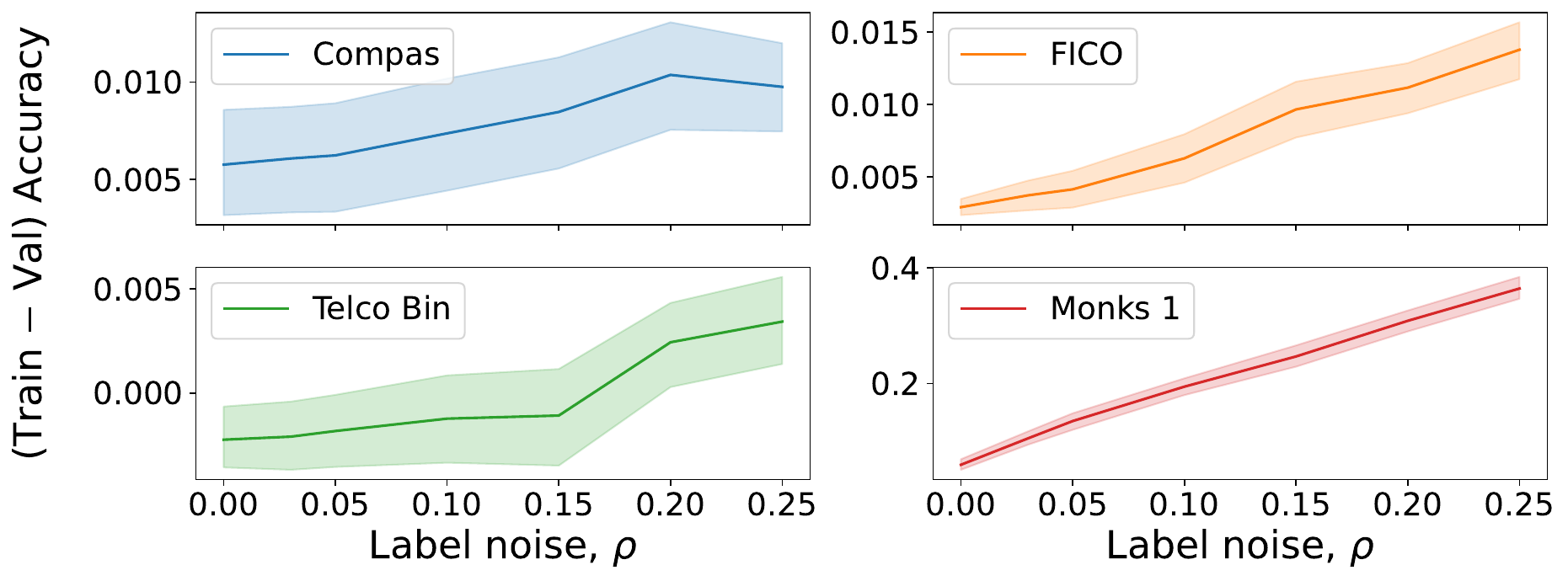}}
\subfigure[]{\includegraphics[height=3.4cm]{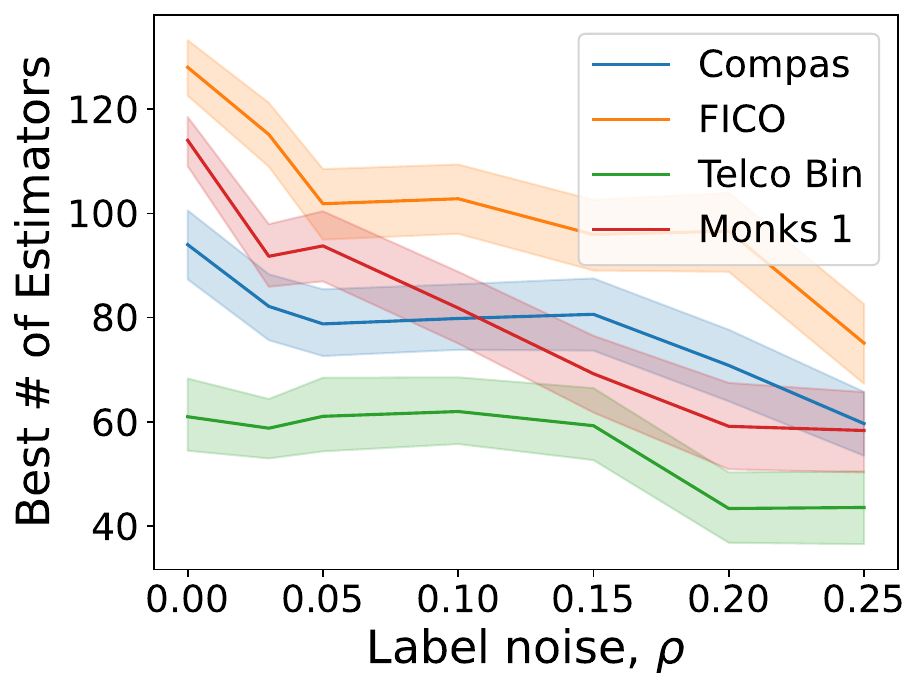}}
\caption{
Practitioner's validation process in the presence of noise for gradient boosted trees. For a fixed number of estimators, as we add noise, the gap between training and validation accuracy increases (Subfigure a). As we use cross-validation to select the number of estimators, the best number of estimators decreases with noise (Subfigure b).
}\label{fig:step23_appendix}
\end{figure}

We consider uniform label noise where each label is flipped independently with probability $\rho$. For each dataset, we perform five random splits into a train set and a validation set, where the validation set size is 20\% of the number of samples. For the tree depth of CART, we consider the values $d \in \{1,\ldots ,m\}$, where $m$ is the number of features for a given dataset. 

For Figure \ref{fig:step23}(a), we tune the parameters and then add noise to see what happens, which is that performance degrades. For every train/validation split we perform 5-fold cross-validation on the training set and compute the best depth. We fix this depth (and thus hypothesis space). Now we start adding noise to the dataset. We consider six different noise levels, $\rho \in \{0, 0.03, 0.05, 0.10, 0.15, 0.20. 0.25\}$. For every level, we perform $25$ draws of $S_\rho$. For every noise level, noise draw, and train/validation split, we evaluate train and validation performance and report average. 

For Figure \ref{fig:step23}(b), we tune the parameters for each noise level. We will see that noisier datasets lead us to use more regularization. We start adding noise to the dataset and then choose the best parameter based on cross-validation. More specifically, we consider six different noise levels, $\rho \in \{0, 0.03, 0.05, 0.10, 0.15, 0.20. 0.25\}$. For every level, we perform $25$ draws of $S_\rho$. Now we perform 5-fold cross-validation on the training data to choose the best depth for CART. For every noise level, noise draw, and train/validation split, we report mean depth based on cross-validation results.

We perform a similar experiment to Figure \ref{fig:step23} for the gradient boosting algorithm, where we vary the number of tree estimators. We observe similar behaviors, where, with more noise, the best number of estimators (according to cross-validation) decreases. We use the same level of noise and cross-validation procedure as discussed above. For the number of estimators, we consider values $d\in \{5,10,20,\dots,150\}$.





\subsection{Branch and Bound Method to Compute Patterns in the Rashomon set}\label{appendix:patterns_algorithm}

Here we describe a two-step method that allows us to compute all patterns in the pattern Rashomon set. In the first step, we reduce the complexity of the problem, by discarding points that have low sample agreement. In the second step, we use a branch-and-bound approach in order to enumerate patterns and discard prefixes of those patterns that will not be in the Rashomon set based on the Rashomon parameter and the empirical risk of the empirical risk minimizer.

Consider a dataset $S = \{z_i\}_{i=1}^n$. For every point $z_i$ assume that we have sample agreement $a_i$. If $a_i = 0$, it means that all patterns in the pattern Rashomon set assign an incorrect label to sample $z_i$. On the other hand, if $a_i = 1$, all patterns assign the correct label. If we exclude all $z_i$ such that $a_i = 0$ or $a_i = 1$ from the dataset, then the number of patterns will not change in the Rashomon set. Therefore, for a given point $z_k$ ($k=1..n$) we will try to answer a question: is there a model in the Rashomon set such that it classifies $\bar{z}_k = (x_k, -y_k)$ correctly and still stays in the Rashomon set. If there is no such model, then sample $z_k$ has no influence on the pattern Rashomon set. Since it is harder to optimize for 0-1 loss, we instead consider exponential loss. If the problem is separable by 0-1 loss, then exponential loss will converge to a separable solution exponentially fast (which is known from the convergence analysis of AdaBoost \citep{bartlett1998boosting}). Then given hypothesis space of linear models $\F=\{w^Tx\}$, for every $z_k$, we aim to solve following optimization problem:
\begin{equation}\label{eq:discard_points1}
\min \frac{1}{n} \sum_{i=0}^n e^{-y_i w^Tx_i}\end{equation}
\begin{equation}\label{eq:discard_points2}
y_kw^Tx_k \leq 0,
\end{equation}
and then check if $w^Tx$ is in the Rashomon set defined by 0-1 loss. 

Since we optimize exponential loss, it is fast to solve the optimization problem with gradient descent. More importantly, we can run the optimization in parallel for samples $z_k$. After, we consider dataset $S_{\textrm{inside}}$ that contains only those samples for which models were in the Rashomon set that could accommodate misclassified $z_k$. We formally define the dicard point procedure in procedure \textproc{Discart Points} below:

\begin{algorithmic}\label{alg:discard_points}
\Procedure{Discard Points}{dataset S, ERM $\hat{f}$, the Rashomon parameter $\theta$}
    \State Initialize $S_{dp}$.
  \For {every $z= (x, y) \in S$}
  \State Solve optimization problem \eqref{eq:discard_points1}-\eqref{eq:discard_points2}. Let $\bar{f}$ be a solution.
  \If {$\hat{L}(\bar{f})> \hat{L}(\hat{f}) +\theta$}
  \State add $z$ to $S_{dp}$ \;\;\textrm{(this point has a single predicted label for the entire Rashomon set).}
  \EndIf
  \EndFor
  \State \Return $S_{dp}$.
\EndProcedure
\end{algorithmic}

In the second step, we build a search tree over the set of patterns that are formed by samples in $S_{\textrm{inside}}$. We use breadth-first search over subsets of data. We ``bound'' (i.e., exclude part of the search space) when the prefix of the pattern (which is the part of the dataset we are working with) misclassifies more samples than the threshold to stay in the Rashomon set, which is $\hat{L}(\hat{f}) + \theta$. Since not all patterns can be realized by the model class. We ``bound'' if the prefix or pattern can not be achieved (the pattern is achievable when all points with labels matching the pattern are classified correctly by some model from the hypothesis space). 
In order to perform branch and bound more effectively, given an empirical risk minimizer (ERM), we sort points in the dataset  based on their distance to the decision boundary of the ERM. More specifically, 
we split the points into four categories depending on whether the point is a true positive, false positive, true negative, or false negative. Then for every category, we compute the distances from each point to the decision boundary of the ERM and then sort points from least distance to greatest distance. Finally, we cyclically choose one point from each category until all samples have been considered. 
Conceptually, true positive and true negative samples that are closest to the decision boundary determine most of the patterns in the pattern Rashomon set. We add false positives and false negatives early to the order of points as they are more likely to be misclassified, allowing us to bound the prefixes sooner.  
We describe the branch and bound procedure in Algorithm \ref{alg:bb}. We use bit vectors to represent prefixes and patterns to speed up computations. Since we apply this approach to linear models, we use logistic regression without regularization to check the achievability of the patterns and their prefixes. However, the algorithm in general can be applied to other hypothesis spaces and losses (for example, hinge loss).

\begin{algorithm}[]
\caption{Branch and bound approach to find the pattern Rashomon set}
 \hspace*{\algorithmicindent} \textbf{Input:} The Rashomon parameter $\theta$, dataset $S = X\times Y$, ERM $\hat{f}$, algorithm $A$.\\
 \hspace*{\algorithmicindent} \textbf{Output:} Pattern Rashomon set $\pi(\F,\theta).$
    \begin{algorithmic}[1]
        \State Run \textproc{Discard Points}$(S, \hat{f},\theta)$ to exclude points that have the same predicted label for all models in the Rashomon set. Let $S_{dp}$ be the set of discarded points, and $S_{\textrm{inside}}$ be the rest of the points.
        \State Divide points in $S_{\textrm{inside}}$ into four categories: true positive, false positive, true negative, and false negative.
        \State Compute the distance from the decision boundary of $\hat{f}$ to every point for every category.
        \State Sort points in ascending order for every category.
        \State Create a new order of the points in $S_{\textrm{inside}}$ by iteratively choosing points from each of the four categories until all points in $S_{\textrm{inside}}$ are re-ordered.
        \State Concatenate $S_{dp}$ and $S_{\textrm{inside}}$ to form $S = X\times Y$ based on the new order, where  discarded points are followed by the sorted points.
        \State Initialize the prefix $p_{init}$ of length $|S_{dp}|$ based on the labels of the samples in $S_{dp}$.
        \State Initialize $Q$ as the queue for the breadth-first search over the prefixes.
        \While {$i \leq |S_{\textrm{inside}}|$ (loop over all points in $S_{\textrm{inside}}$)} 
        \For {every $elem$ in $Q$ (loop over all prefixes in $Q$)}
        \For {$e \in [0, 1]$ (loop over possible labels; this is a ``branch'' step)} 
        \State $Y_{a} = Y_{S_{dp}} \cup elem \cup e$ (consider potential  prefix).
        \State Form the training data ($X_a, Y_a$) to check if the prefix is achievable by algorithm $A$.  $X_a$ consists of the first $|S_{dp}| + i$ samples of sorted $X$.
        \State Fit algorithm $A$ on ($X_a, Y_a$) and compute $accuracy$ and $loss$.
        \If {$accuracy = 1$ and $loss \leq \hat{L}(\hat{f})+\theta$}
        \State $Q.append(elem \cup e)$ (the prefix is achievable and the pattern has the potential to be achieved in the pattern Rashomon set, thus we add this element to the queue. This is a ``bound'' step).
        \EndIf
        \EndFor
        \EndFor
        \EndWhile
        \State As we have now looped over all samples, $Q$ contains all the achievable patterns that are in the Rashomon set, set $\pi(\F,\theta) = Q$.
    \end{algorithmic}
    \label{alg:bb}
\end{algorithm}

\subsection{Computation of Rashomon Ratio and Pattern Rashomon Ratio for Step 4}\label{appendix:step4_experimental_setup}

We consider the hypothesis space of sparse decision trees of various depths and the hypothesis space of linear models with a given number of non-zero coefficients.

For decision trees (Figure \ref{fig:step4} (a)), we vary the depth of the maximum allowed decision tree from 1 to 7. To compute the numerator of the Rashomon ratio, we use TreeFARMS \cite{xin2022exploring}. We set the Rashomon parameter to 5\%. To compute the denominator of the Rashomon ratio, we consider all possible sparse decision trees up to a given depth $d$ and use the following recursive formula to compute the size of hypothesis space with $m$ features:
\[C(d, m) = 2 + m  C(d-1, m-1)^ 2,\]
where $C(0, \cdot) = 2$. In the base case, the only possible trees classify every point as $0$, or every point as $1$. Then for decision trees up to depth $d$ with $m \geq d$ features, there are two cases. The first case is when the tree has depth $0$ which produces $2$ possible trees. The other case is when the tree has depth at least $1$. In this case, there are $m$ possible features to initially split on, and then the left and right subtrees are of depth at most $d-1$ with $m-1$ features to choose from. The left and right subtrees can be chosen independently of each other, so we have $mC(d-1,m-1)^2$ trees in this case, which proves the overall recursive formula. 

Note that for decision trees of depth exactly equal to $d$ for every tree path, following recursive formula holds
\[C_{complete}(d,m)=mC_{complete}(d-1,m-1)^2,\]
which is equivalent to closed-form solution described in appendix \ref{appendix:proof_ratio_trees}.

For the hierarchy of regularized linear models (Figure \ref{fig:step4} (b)), we consider regularization for 1 non-zero coefficient, 2 non-zero coefficients, 3 non-zero coefficients, and 4 non-zero coefficients. To compute the numerator of the pattern Rashomon ratio, we use the approach described in Section \ref{appendix:patterns_algorithm}. We set the Rashomon parameter to 3\%. To compute the denominator of the Rashomon ratio, we use the formula that gives the number of all possible patterns for the hypothesis space of linear models: if no $m-1$ points are coplanar, $C(n, m) = 2\sum_{i=0}^m \binom{n-1}{i}$ \cite{Cover1965GeometricalAS}.

\subsection{Experiments for Pattern Diversity and Label Noise for Linear Models}\label{appendix:diversity_experiments}

 \begin{figure}[hb]
\centering
\subfigure[]{\includegraphics[height=3.5cm]{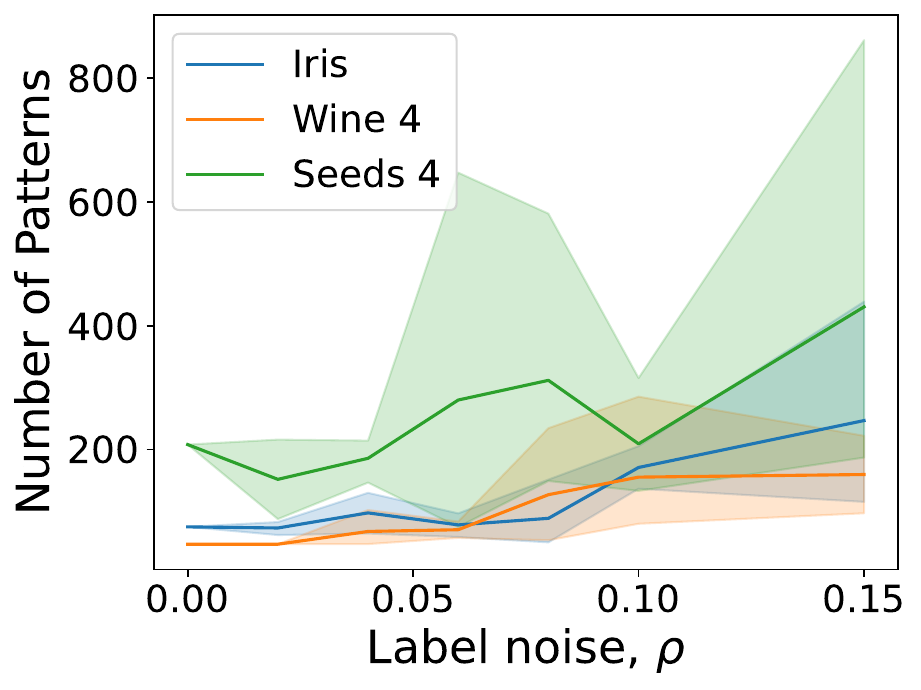}}
\subfigure[]{\includegraphics[height=3.5cm]{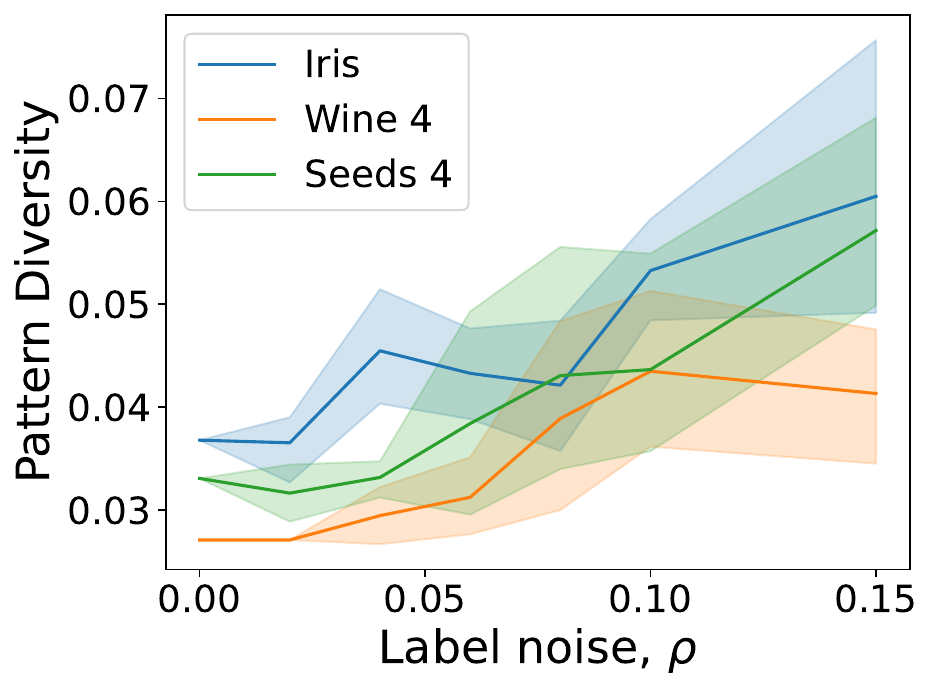}}
\caption{Rashomon set characteristics such as the number of patterns in the Rashomon set (Subfigure a) and  pattern diversity (Subfigure b) tend to increase with uniform label noise for hypothesis spaces of linear models.}\label{fig:appendix_linear}
\end{figure}

 \begin{figure}[ht]
\centering
\subfigure[]{\includegraphics[height=2.7cm]{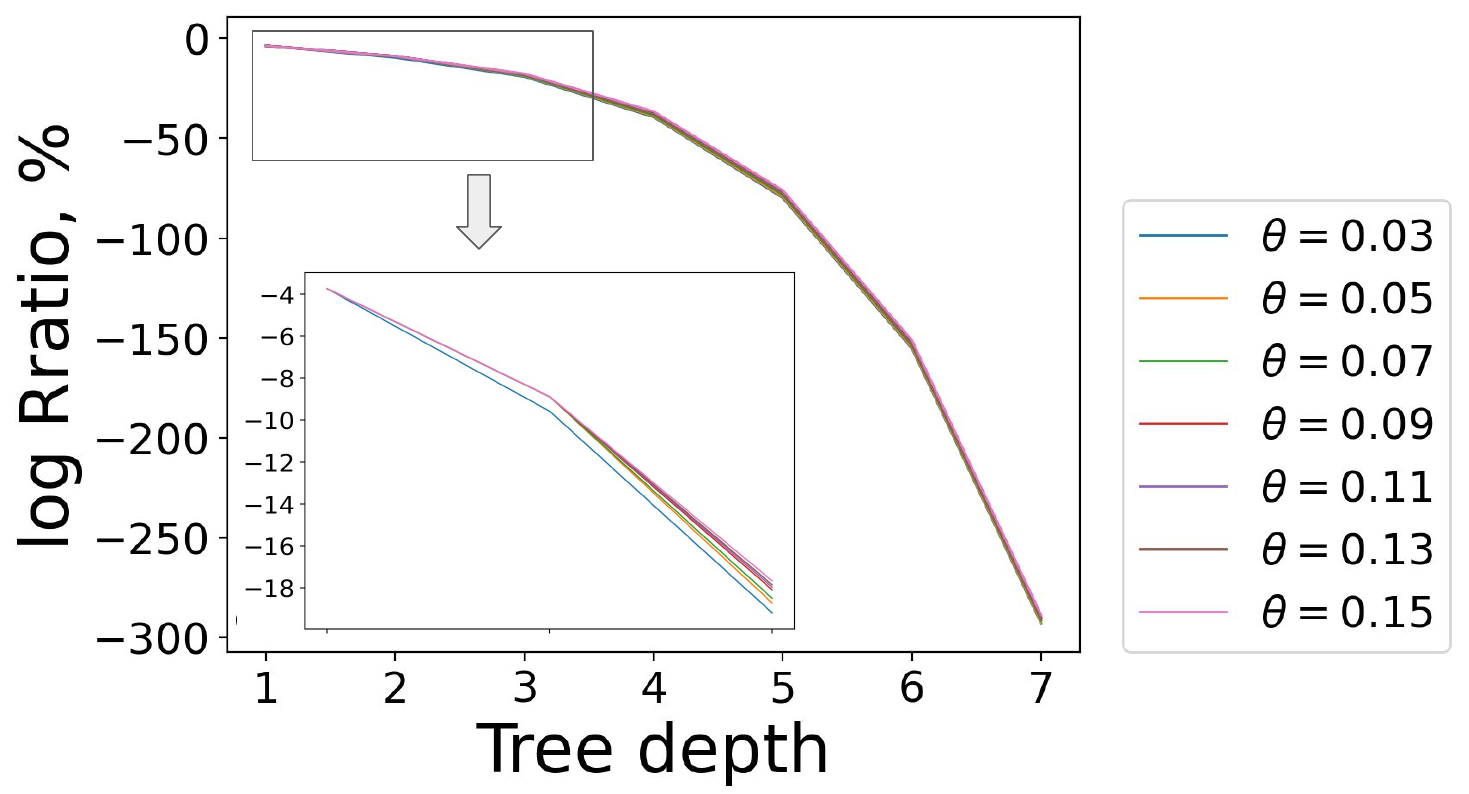}}
\subfigure[]{\includegraphics[height=2.7cm]{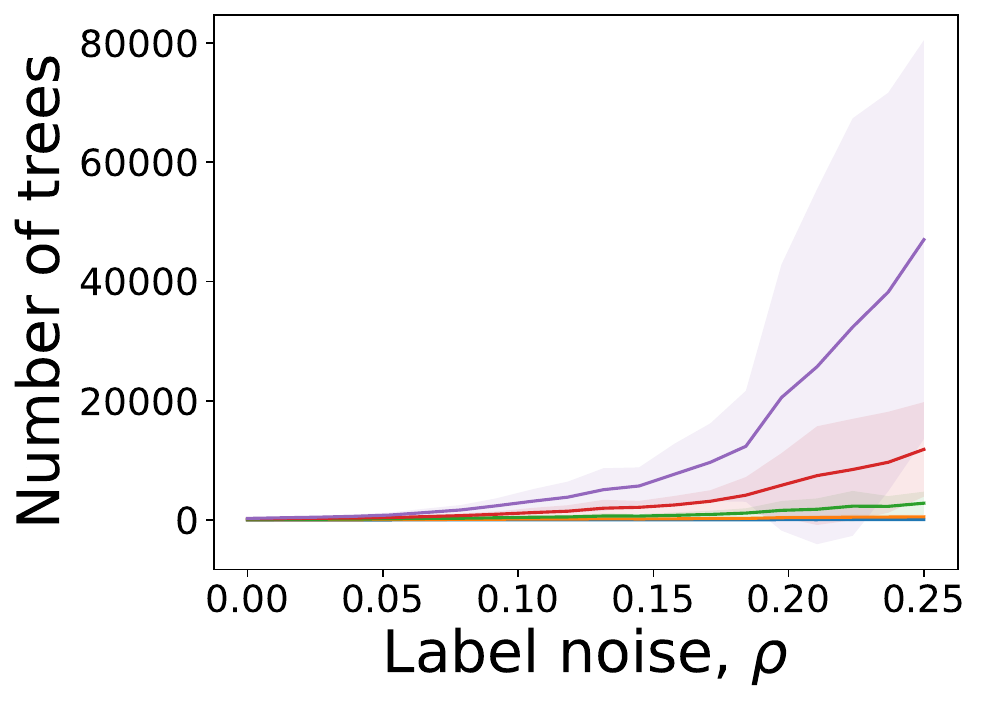}}
\subfigure[]{\includegraphics[height=2.7cm]{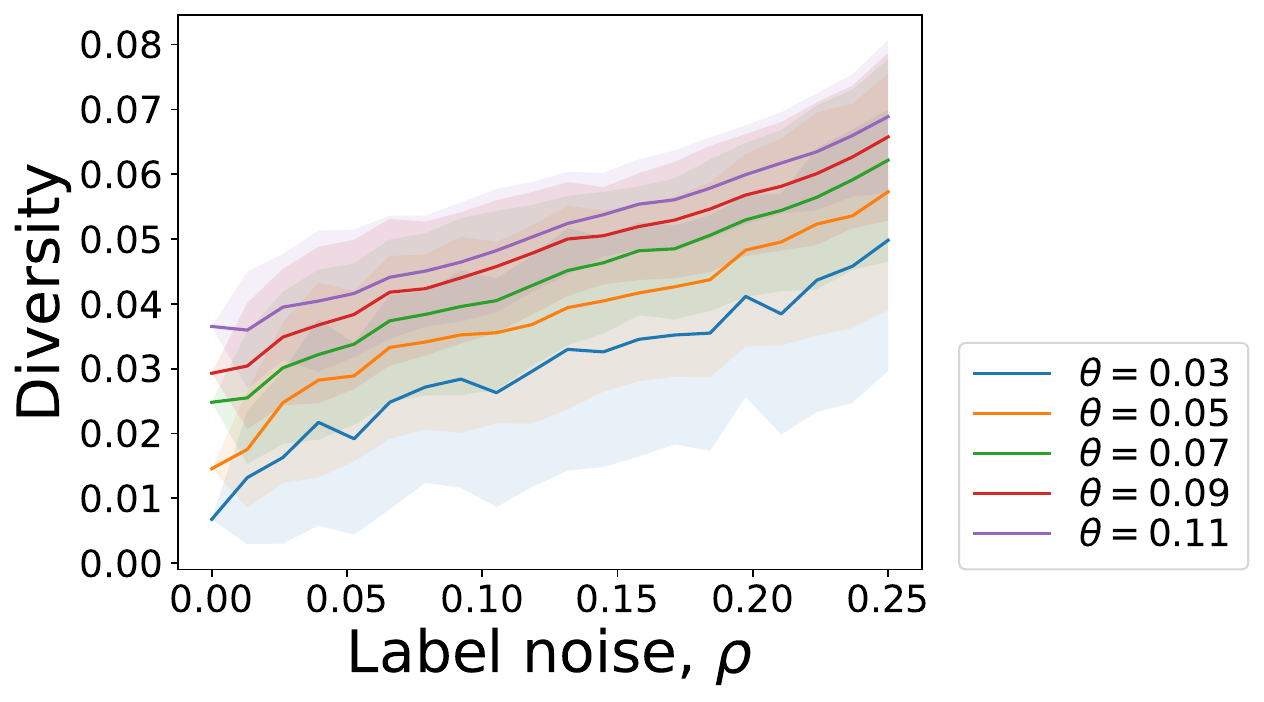}}
\caption{(a) Curve of the hypothesis space complexity vs. Rashomon ratio (as in Figure \ref{fig:step4}) stays the same shape for different Rashomon parameters for BCW dataset for the hypothesis space of sparse decision trees. (b, c) Rashomon characteristics tend to increase with uniform label noise for hypothesis space of decision trees (as in Figure \ref{fig:diversity}) for different Rashomon set parameters for BCW dataset. 
We use the multiplicative Rashomon parameter, meaning that $f\in \hat{R}_{set}$ is $\hat{L}(f)\leq (1+\theta)\hat{L}(\hat{f})$. 
For (b) and (c) we average over 25 iterations.}
\label{figureS:Rset_param}
\end{figure}

For the hypothesis space of linear classifiers, we show the pattern diversity and the number of patterns in the Rashomon set for different datasets in the presence of noise
 in Figure \ref{fig:appendix_linear}. We consider uniform label noise, where each label is flipped independently with probability $\rho$. We set noise level $\rho$ to values in $\{0, 0.02, 0.04, 0.06, 0.08, 0.10, 0.15\}$ and perform five draws of $S_\rho$ for every noise level. We then compute the pattern Rashomon set for each draw using the method described in \ref{appendix:patterns_algorithm} and finally compute the pattern diversity. Both the number of patterns and pattern diversity tend to increase with label noise.

 \subsection{The Choice of the Rashomon Parameter does not Influence Results}

 For Figures \ref{fig:step4}(a) and \ref{fig:diversity}, we set the Rashomon parameter to be 5\% and 3\% correspondingly. In Figure \ref{figureS:Rset_param} on the example of Breast Cancer Wisconsin (BCW) dataset, we show that the results in Figures \ref{fig:step4}(a) and \ref{fig:diversity} hold for different values of the Rashomon parameter.
 
\subsection{Computation Resources}

We performed experiments on Duke University’s Computer Science Department cluster. We parallelized computations for the majority of the figures. It took us up to 3 hours to compute the Rashomon sets for the hypothesis space of sparse decision trees for different noise levels and draws (Figures \ref{fig:diversity} and \ref{figureS:Rset_param}), and up to 48 hours for the hypothesis space of linear models (Figures \ref{fig:step4} and \ref{fig:appendix_linear}).

\end{document}